\def\isarxivversion{1} 
\ifdefined\isarxivversion

\documentclass{article}
\usepackage{geometry}
\geometry{verbose,tmargin=1in,bmargin=1in,lmargin=1in,rmargin=1in}
\usepackage{amsmath, amsfonts, bm}
\usepackage{amsthm,graphicx}
\usepackage[dvipsnames]{xcolor}
\usepackage{mathtools}
\usepackage[normalem]{ulem}

\allowdisplaybreaks

\usepackage[unicode=true,
 bookmarks=false,
 breaklinks=false,pdfborder={0 0 1},colorlinks=false]
 {hyperref}
\hypersetup{
 colorlinks,citecolor=blue,filecolor=blue,linkcolor=blue,urlcolor=blue}

\else
\documentclass[anon,12pt]{colt2021}
\fi

\usepackage[linesnumbered,ruled,vlined]{algorithm2e}
\usepackage{array}
\usepackage{amssymb}
\usepackage{multirow}
\usepackage{color}
\usepackage[english]{babel}
\usepackage{graphicx}
\usepackage{natbib}
\usepackage{wrapfig}
\usepackage{epstopdf}
\usepackage{url}
\usepackage{graphicx}
\usepackage{color}
\usepackage{epstopdf}
\usepackage[T1]{fontenc}
\usepackage{bbm}
\usepackage{comment}
\usepackage{tikz}
\usepackage{booktabs,caption}
\usepackage[flushleft]{threeparttable}

 
\usepackage{tikz}
\usepackage{hyperref}
\hypersetup{colorlinks=true,citecolor=blue,linkcolor=red}

\usetikzlibrary{arrows}


\ifdefined\isarxivversion
\else 

\fi

\definecolor{mygreen}{RGB}{80,180,0}

\definecolor{dacong}{RGB}{10,103,68}

\ifdefined\isarxivversion
\usepackage{amsmath}
\usepackage{amsthm}
\usepackage{subfig}

\usepackage{enumitem}

\newtheorem{theorem}{Theorem}
\newtheorem{lemma}{Lemma}

\newtheorem{remark}{Remark}

\else
\fi

\newtheorem{assumption}{Assumption}

\newcommand{\cS}{\mathcal{S}}
\newcommand{\cA}{\mathcal{A}}




\newcommand{\wh}{\widehat}
\newcommand{\wt}{\widetilde}

\newcommand{\eps}{\varepsilon}

\newcommand{\A}{\mathcal{A}}

\newcommand{\E}{\mathbb{E}}

\renewcommand{\tilde}{\wt}
\renewcommand{\hat}{\wh}

\newcommand{\KL}{\mathsf{KL}}

\newcommand{\mymid}{\,|\,}

\newcommand{\BO}{\mathcal{O}}

\newcommand{\epsopt}{\varepsilon_{\mathsf{opt}}}
\newcommand{\epseval}{\varepsilon_{\mathsf{eval}}}

\definecolor{yxc}{RGB}{255,0,0}
\definecolor{yjc}{RGB}{190,0,255}
\definecolor{whz}{RGB}{200,100,200}

\makeatletter
\newcommand*{\RN}[1]{\expandafter\@slowromancap\romannumeral #1@}
\makeatother

\title{}

\ifdefined\usebigfont

\usepackage{times}
\usepackage[fontsize=13pt]{scrextend}
\AtBeginDocument{\newgeometry{letterpaper,left=1.56in,right=1.56in,top=1.71in,bottom=1.77in}}
\else
\usepackage{scrextend}
\fi

\begin{document}

\ifdefined\isarxivversion
\title{Policy Mirror Descent for Regularized Reinforcement Learning: \\ A Generalized Framework with Linear Convergence}

\author{Wenhao Zhan\footnote{The first two authors contributed equally.} \thanks{Department of Electrical and Computer Engineering, Princeton University.}\\
Princeton University    \\
	\and
	Shicong Cen\footnotemark[1] \thanks{Department of Electrical and Computer Engineering, Carnegie Mellon University.}\\
	Carnegie Mellon University \\
	\and
	Baihe Huang\thanks{Department of Electrical Engineering and Computer Sciences, University of California, Berkeley.}\\
	 University of Berkeley \\
	\and 
	 Yuxin Chen\thanks{Department of Statistics and Data Science, Wharton School, University of Pennsylvania.}  \\
 University of Pennsylvania  \\
	\and
	  Jason D. Lee\footnotemark[2] \\
   Princeton University \\
	\and
	Yuejie Chi\footnotemark[3]\\
	Carnegie Mellon University \\
	}
\date{May 2021;~~ Final: January 2023}


\else
\fi

%

\maketitle

\begin{abstract}

Policy optimization, which finds the desired policy  by maximizing value functions via optimization techniques, 
lies at the heart of reinforcement learning (RL).
In addition to value maximization, other practical considerations arise as well, including the need of encouraging exploration, and that of ensuring certain structural properties of the learned policy due to safety, resource and operational constraints. 
These can often be accounted for via regularized RL, which augments the target value function with a structure-promoting regularizer.

Focusing on discounted infinite-horizon Markov decision processes, we propose a generalized policy mirror descent (GPMD) algorithm for solving regularized RL. 
As a generalization of policy mirror descent \citep{lan21}, our algorithm
accommodates a general class of convex regularizers and promotes the use of Bregman divergence in cognizant of the regularizer in use. 
We demonstrate that our algorithm converges linearly to the global solution over an entire range of learning rates, in a dimension-free fashion, even when the regularizer lacks strong convexity and smoothness.  
In addition, this linear convergence feature is provably stable in the face of inexact policy evaluation and imperfect policy updates. 
Numerical experiments are provided to corroborate the appealing performance of GPMD. 
\end{abstract}


\noindent \textbf{Keywords:} policy mirror descent, Bregman divergence, regularization, nonsmooth, policy optimization

\setcounter{tocdepth}{2}
\tableofcontents

\section{Introduction}

Policy optimization lies at the heart of recent successes of reinforcement learning (RL) \citep{mnih2015human}. 
In its basic form, the optimal policy of interest, or a suitably parameterized version,
is learned by attempting to maximize the value function in a Markov decision processes (MDP). 
For the most part, the maximization step is carried out by means of first-order optimization algorithms amenable to large-scale applications,   
 whose foundations were set forth in the early works of \citet{williams1992simple,sutton2000policy}. 
A partial list of widely adopted variants in modern practice includes
policy gradient (PG) methods \citep{sutton2000policy}, natural policy gradient (NPG) methods \citep{kakade2001natural},  TRPO \citep{schulman2015trust}, 
PPO \citep{schulman2017proximal}, soft actor-critic methods \citep{haarnoja2018soft}, to name just a few. 
In comparison with model-based and value-based approaches, 
this family of policy-based algorithms offers a remarkably flexible framework that accommodates both continuous and discrete action spaces, 
and lends itself well to the incorporation of powerful function approximation schemes like neural networks. 
In stark contrast to its practical success, however, 
theoretical understanding of policy optimization remains severely limited even for the tabular case,  
largely owing to the ubiquitous nonconvexity issue underlying the objective function.





\subsection{The role of regularization}


In practice, there are often competing objectives and additional constraints that the agent has to deal with in conjunction with maximizing values, 
which motivate the studies of regularization techniques in RL. In what follows, we isolate a few representative examples.  
\begin{itemize}
	\item {\em Promoting exploration.} In the face of large problem dimensions and complex dynamics, 
		it is often desirable to maintain a suitable degree of randomness in the policy iterates,
		in order to encourage exploration and discourage premature convergence to sub-optimal policies. 
		A popular strategy of  this kind is to enforce entropy regularization \citep{williams1991function}, which penalizes policies that are not sufficiently stochastic. Along similar lines, the Tsallis entropy regularization \citep{chow2018path,lee2018sparse} further promotes sparsity of the learned policy while encouraging exploration, ensuring that the resulting policy does not assign non-negligible probabilities to too many sub-optimal actions.


	\item {\em Safe RL.} In a variety of application scenarios such as industrial robot arms and self-driving vehicles, 
		the agents are required to operate safely both to themselves and the surroundings \citep{amodei2016concrete,moldovan2012safe}; 
		for example, certain actions might be strictly forbidden in some states. 
		One way to incorporate such prescribed operational constraints is through adding a regularizer (e.g., a properly chosen log barrier or indicator function tailored to the constraints) to explicitly account for the constraints.

	\item {\em Cost-sensitive RL.} In reality, different actions of an agent might incur drastically different costs even for the same state. 
		This motivates the design of new objective functions that properly trade off the cumulative rewards against the accumulated cost, 
		which often take the form of certain regularized value functions. 
\end{itemize}
Viewed in this light, 
it is of imminent value to develop a unified framework towards understanding the capability and limitations of regularized policy optimization. 
While a recent line of works \citep{agarwal2019optimality,mei2020global,cccwc20} have looked into specific types of regularization techniques such as entropy regularization, 
existing convergence theory remains highly inadequate when it comes to a more general family of regularizers.


\subsection{Main contributions}


The current paper focuses on policy optimization for regularized RL in a $\gamma$-discounted infinite horizon Markov decision process (MDP) with state space $\cS$, action space $\cA$, and reward function $r(\cdot,\cdot)$.
The goal is to find an optimal policy that maximizes a regularized value function. Informally speaking, the regularized value function associated with a given policy $\pi$ takes the following form:  
\[
	V_{\tau}^{\pi} = V^{\pi} - \tau \mathbb{E}\big[ h_s\big( \pi(\cdot \mymid s) \big) \big],
\]
where $V^{\pi}$ denotes the original (unregularized) value function, $\tau>0$ is the regularization parameter, 
$h_s(\cdot)$ denotes a convex regularizer employed to regularize the policy in state $s$, 
and the expectation is taken over certain marginal state distribution w.r.t.~the MDP (to be made precise in Section~\ref{sec:setting}). 
It is noteworthy that this paper does not require the regularizer $h_s$ to be either strongly convex or smooth. 

In order to maximize the regularized value function \eqref{eq:reg-Q-V-relation}, 
\citet{lan21} exhibited a seminal algorithm called {\em Policy Mirror Descent (PMD)}, 
which can be viewed as an adaptation of the mirror descent algorithm \citep{nemirovsky1983problem,beck2003mirror} to the realm of policy optimization. 
In particular, PMD subsumes the natural policy gradient (NPG) method \citep{kakade2001natural} as a special case. To further generalize PMD \citep{lan21},
we propose an algorithm called {\em Generalized Policy Mirror Descent (GPMD)}. In each iteration, the policy is updated for each state in parallel via a mirror-descent style update rule. 
In sharp contrast to \citet{lan21} that considered a generic Bregman divergence, our algorithm selects the Bregman divergence {\em adaptively} in cognizant of the regularizer, which leads to complementary perspectives and insights.
Several important features and theoretical appeal of GPMD are summarized as follows.

\begin{itemize}

	\item GPMD substantially broadens the range of (provably effective) algorithmic choices for regularized RL, and subsumes several well-known algorithms as special cases. For example, it reduces to regularized policy iteration \citep{geist2019theory} when the learning rate tends to infinity, and subsumes entropy-regularized NPG methods as special cases if we take the Bregman divergence to be the Kullback-Leibler (KL) divergence \citep{cccwc20}. 


\item Assuming exact policy evaluation and perfect policy update in each iteration, GPMD converges linearly---in a dimension-free fashion---
over the {\em entire} range of the learning rate $\eta >0$. 
More precisely, it converges to an $\varepsilon$-optimal regularized Q-function in no more than an order of
$$ \frac{1+\eta\tau}{\eta\tau(1-\gamma)}\log\frac{1}{\varepsilon}$$
iterations (up to some logarithmic factor). 
Encouragingly, this appealing feature is valid for a broad family of convex and possibly nonsmooth regularizers. 

\item The intriguing convergence guarantees are robust in the face of inexact policy evaluation and imperfect policy updates, namely,  
	the algorithm is guaranteed to converge linearly at the same rate until an error floor is hit.  See Section~\ref{sec:main-results-approx} for details. 

\item  Numerical experiments are provided in Section~\ref{sec:experiments} to demonstrate the practical applicability and appealing performance of the proposed GPMD algorithm. 

\end{itemize}

Finally, we find it helpful to briefly compare the above findings with prior works. 
As soon as the learning rate exceeds $\eta \geq 1/\tau$, the iteration complexity of our algorithm is at most on the order of $\tfrac{1}{1-\gamma}\log\frac{1}{\varepsilon}$, thus matching that of regularized policy iteration \citep{geist2019theory}. 
In comparison to \citet{lan21}, our work sets forth a different framework to analyze mirror-descent type algorithms for regularized policy optimization, generalizing and refining the approach in \citet{cccwc20} far beyond entropy regularization. When constant learning rates are employed, the linear convergence of PMD \citep{lan21} critically requires the regularizer to be strongly convex, with only sublinear convergence theory established for convex regularizers. 
In contrast, we establish the linear convergence of GPMD under constant learning rates even in the absence of strong convexity. Furthermore, for the special case of entropy regularization, the stability analysis of GPMD also significantly improves over the prior art in \citet{cccwc20}, preventing the error floor from blowing up when the learning rate approaches zero, as well as incorporating the impact of optimization error that was previously uncaptured. More detailed comparisons with \citet{lan21} and \citet{cccwc20} can be found in Section~\ref{sec:main-results}.

%

\subsection{Related works}

Before embarking on our algorithmic and theoretic developments, we briefly review a small sample of other related works.

\paragraph{Global convergence of policy gradient methods.} Recent years have witnessed a surge of activities towards understanding the global convergence properties of  policy gradient methods and their variants for both continuous and discrete RL problems, examples including \citet{fazel2018global,bhandari2019global,agarwal2019optimality,zhang2019policy,wang2019neural,mei2020escaping,br20,khodadadian2021linear,liu2020improved,mei2020escaping,agazzi2020global,xu2019sample,wang2019neural,cccwc20,mei2021leveraging,liu2019neural,wang2020global,zhang2020sample,zhang2021convergence,zhang2020variational,shani2019adaptive}, among other things. \citet{neu2017unified} provided the first interpretation of NPG methods as mirror descent \citep{nemirovsky1983problem}, thereby enabling the adaptation of techniques for analyzing mirror descent to the studies of NPG-type algorithms such as TRPO \citep{shani2019adaptive,tomar2020mirror}.  It has been shown that the NPG method converges sub-linearly for unregularized MDPs with a fixed learning rate \citep{agarwal2019optimality}, and converges linearly if the learning rate is set adaptively \citep{khodadadian2021linear}, via exact line search \citep{br20}, or following a geometrically increasing schedule \citep{xiao2022convergence}. The global linear convergence of NPG holds more generally for an arbitrary fixed learning rate when entropy regularization is enforced \citep{cccwc20}. 
Noteworthily, \cite{li2021softmax} established a lower bound indicating that softmax PG methods can take an exponential time---in the size of the state space---to converge, 
while the convergence rates of NPG-type methods are almost independent of the problem dimension. 
In addition, another line of recent works \citep{AYBBLSzW19, Hao2021-ig, LazicYAS21} established regret bounds for approximate NPG methods---termed as KL-regularized approximate policy iteration therein---for infinite-horizen undiscounted MDPs, 
which are beyond the scope of the current paper.

\paragraph{Regularization in RL.} Regularization has been suggested to the RL literature either through the lens of optimization \citep{dai2018sbeed,agarwal2019optimality}, or through the lens of dynamic programming \citep{geist2019theory,vieillard2020leverage}. Our work is clearly an instance of the former type. 
Several recent results in the literature merit particular attention: \citet{agarwal2019optimality} demonstrated sublinear convergence guarantees for PG methods in the presence of relative entropy regularization, \citet{mei2020global} established linear convergence of entropy-regularized PG methods, 
whereas \citet{cccwc20} derived an almost dimension-free linear convergence theory for NPG methods with entropy regularization. 
Most of the existing literature focused on the entropy regularization or KL-type regularization, and the studies of general regularizers had been quite limited until the recent work \citet{lan21}. 
The regularized MDP problems are also closely related to the studies of constrained MDPs,
as both types of problems can be employed to model/promote constraint satisfaction in RL, as recently investigated in, e.g., \citet{chow2018lyapunov,efroni2020exploration,ding2021provably,yu2019convergent,xu2020primal}. 
Note, however, that it is difficult to directly compare our algorithm with these methods, due to drastically different formulations and settings.

\subsection{Notation}

Let us introduce several notation that will be adopted throughout. 
For any set $\cA$, we denote by $\vert\mathcal{A}\vert$ the cardinality of a set $\mathcal{A}$, and  
let $\Delta(\mathcal{A})$ indicate the probability simplex over the set $\mathcal{A}$.
For any convex and differentiable function $h(\cdot)$,  the Bregman divergence generated by $h(\cdot)$ is defined as
\begin{align}
	D_h(z,x) \coloneqq h(z)-h(x)- \big\langle\nabla h(x),z-x \big\rangle.
	\label{eq:defn-Bregman-div-standard}
\end{align}
For any convex (but not necessarily differentiable) function $h(\cdot)$, we denote by $\partial h$ the subdifferential of $h$. 
Given two probability distributions $\pi_1$ and $\pi_2$ over $\cA$,  the KL divergence from $\pi_2$ to $\pi_1$ is defined as $\KL(\pi_1 \,\|\, \pi_2)  \coloneqq \sum_{a\in\cA}\pi_1(a) \log\frac{\pi_1(a)}{\pi_2(a)}$. 
For any vectors $a=[a_i]_{1\leq i\leq n}$ and $b=[b_i]_{1\leq i\leq n}$, the notation $a\leq b$ (resp.~$a\geq b$) means that $a_i\leq b_i$ ($a_i\geq b_i$) for every $1\leq i\leq n$. 
We shall also use $1$ (resp.~$0$) to denote the all-one (resp.~all-zero) vector whenever it is clear from the context. 

\section{Model and algorithms}
\label{sec:model}

\subsection{Problem settings}
\label{sec:setting}

\paragraph{Markov decision process (MDP).} 

The focus of this paper is a discounted infinite-horizon Markov decision process, as represented by $\mathcal{M}=(\mathcal{S},\mathcal{A},P,r,\gamma)$ \citep{bertsekas2017dynamic}. 
Here, $\mathcal{S}\coloneqq \{1,\cdots, |\cS|\}$ is the state space, $\mathcal{A}\coloneqq \{1,\cdots, |\cA|\}$ is the action space, $\gamma \in [0,1)$ is the discount factor, 
$P: \mathcal{S}\times\mathcal{A}\to\Delta(\mathcal{S})$ is the probability transition matrix (so that $P(\cdot \mymid s,a)$ is the transition probability from state $s$ upon execution of action $a$), 
whereas $r:\mathcal{S}\times\mathcal{A}\to[0,1]$ is the reward function (so that $r(s,a)$ indicates the immediate reward received in state $s$ after action $a$ is executed). 
Here, we focus on finite-state and finite-action scenarios, meaning that both $\vert\mathcal{S}\vert$ and $\vert\mathcal{A}\vert$ are assumed to be finite. 
A policy $\pi:\mathcal{S}\to\Delta(\mathcal{A})$ specifies a possibly randomized action selection rule, namely, $\pi(\cdot\mymid s)$ represents the action selection probability in state $s$.

For any policy $\pi$, we define the associated {\em value function} $V^{\pi}:\mathcal{S}\to\mathbb{R}$ as follows
\begin{equation}
	\forall s\in\mathcal{S}: 
	\qquad
	V^{\pi}(s):= \mathop{\mathbb{E}}\limits_{\substack{a_t\sim \pi(\cdot|s_t),\\ s_{t+1}\sim P(\cdot | s_t, a_t), ~\forall t\geq 0}}\left[\sum_{t=0}^{\infty}\gamma^tr(s_t,a_t) ~\Big\vert~ s_0=s\right], 
	\label{eq:defn-Vpi}
\end{equation}
which can be viewed as the utility function we wish to maximize. 
Here, the expectation is taken over the randomness of the MDP trajectory $\{(s_t,a_t)\}_{t\geq 0}$ induced by policy $\pi$. 
Similarly, when the initial action $a$ is fixed, we can define the {\em action-value function (or Q-function)} as follows
\begin{equation}
	\label{eq:defn-Qpi}
	\forall (s,a)\in\mathcal{S} \times \cA: 
	\qquad
	Q^{\pi}(s,a):= \mathop{\mathbb{E}}\limits_{\substack{s_{t+1}\sim P(\cdot | s_t, a_t), \\  a_{t+1}\sim \pi(\cdot|s_{t+1}),~\forall t\geq 0}}
	\left[\sum_{t=0}^{\infty}\gamma^tr(s_t,a_t) ~\Big\vert~ s_0=s, a_0 = a\right]. 
\end{equation}
As a well-known fact, the policy gradient of $V^{\pi}$ (w.r.t.~the policy $\pi$) admits the following closed-form expression (\citet{sutton2000policy}) 
\begin{align}
	\label{eq:policy-gradient}
	\forall(s,a)\in\cS\times \cA: \qquad 
	\frac{\partial V^{\pi}(s_0)}{ \partial \pi (a\mymid s)} = \frac{1}{1-\gamma} d_{s_0}^{\pi}(s)  Q^{\pi}(s,a).
\end{align}
Here, $d_{s_0}^{\pi}\in \Delta(\cS)$ is the so-called {\em discounted state visitation distribution} defined as follows
\begin{align}
	d_{s_0}^{\pi}(s) \coloneqq (1-\gamma) \sum_{t=0}^{\infty} \gamma^t \mathbb{P}^{\pi}( s_t = s \mymid s_0 ) ,
	\label{eq:defn-state-visitation}
\end{align}
where $ \mathbb{P}^{\pi}( s_t = s \mymid s_0 )$ denotes the probability of $s_t =s$ when the MDP trajectory $\{s_t\}_{t\geq 0}$ is generated under policy $\pi$ given the initial state $s_0$.

Furthermore, the optimal value function and the optimal Q-function are defined and denoted by
\begin{align}
	\forall (s,a)\in \cS\times \cA: \qquad
	V^{\star}(s) \coloneqq \max_{\pi} V^{\pi}(s), \qquad  Q^{\star}(s,a) \coloneqq \max_{\pi} Q^{\pi}(s,a). 
\end{align}
It is well known that there exists at least one optimal policy, denoted by $\pi^{\star}$, that simultaneously maximizes the value function and the Q-function for all state-action pairs \citep{agarwal2019reinforcement}.

\paragraph{Regularized MDP.} 
In practice, the agent is often asked to design policies that possess certain structural properties in order to be cognizant of system constraints such as safety and operational constraints, as well as encourage exploration during the optimization/learning stage. 
A natural strategy to achieve these is to resort to the following {\em regularized value function} w.r.t.~a given policy $\pi$ \citep{neu2017unified,mei2020global,cccwc20,lan21}:
\begin{align}
	\forall s\in\mathcal{S}: 
	\qquad
	V^{\pi}_{\tau}(s) &\coloneqq \mathop{\mathbb{E}}\limits_{\substack{a_t\sim \pi(\cdot|s_t),\\ s_{t+1}\sim P(\cdot | s_t, a_t), ~\forall t\geq 0}} 
	\left[\sum_{t=0}^{\infty}\gamma^t\Big\{ r(s_t,a_t)-\tau h_{s_t}\big(\pi(\cdot \mymid s_t)\big)\Big\} ~\Big\vert~ s_0=s\right] \notag\\
	&= V^{\pi}(s) - \frac{\tau}{1-\gamma} \sum_{s'\in \cS} d_s^{\pi}(s')   h_{s'}\big(\pi(\cdot \mymid s')\big), 
	\label{eq:defn-reg-value-function}
\end{align}
where $h_s: \Delta_{\zeta}(\mathcal{A})\to \mathbb{R}$ stands for a convex and possibly nonsmooth regularizer for state $s$,  $\tau >0$ denotes the regularization parameter,
and $d_s^{\pi}(\cdot)$ is defined in \eqref{eq:defn-state-visitation}. Here, for technical convenience, we assume throughout that  $h_s(\cdot)$ ($s\in \cS$) is well-defined over an ``$\zeta$-neighborhood'' of the probability simplex $\Delta(\cA)$ defined as follows  
\[
	\Delta_{\zeta} (\mathcal{A}) \coloneqq \left\{x=[x_a]_{a\in \cA} ~\Big\vert~ x_a \ge 0 \text{ for all } a \in \mathcal{A};~ 1- \zeta \le \sum_{a\in \mathcal{A}} x_a \le 1 + \zeta \right\} ,
\]
where $\zeta>0$ can be an arbitrary constant. 
For instance, entropy regularization adopts the choice $h_s(p)= \sum_{i\in \cA}p_i \log p_i$ for all $s\in \cS$ and $p\in\Delta(\cA)$, which coincides with the negative Shannon entropy of a probability distribution. Similar, a KL regularization adopts the choice $h_s(p) = \KL( p \,\|\, p_{\mathsf{ref}})$, which penalizes the distribution $p$ that deviates from the reference  $p_{\mathsf{ref}}$. As another example, a weighted $\ell_1$ regularization adopts the choice $h_s(p) = \sum_{i\in\cA} w_{s,i} p_i$ for all $s\in \cS$ and $p\in\Delta(\cA)$, where $w_{s,i}\geq 0$ is the cost of taking action $i$ at state $s$, and the regularizer $h_s(\pi(\cdot|s))$ captures the expected cost of the policy $\pi$ in state $s$.
Throughout this paper, we impose the following assumption. 
\begin{assumption}
\label{assumption:h-inf}
	Consider an arbitrarily small constant $\zeta>0$. For for any  $s\in \cS$, suppose that $h_s(\cdot)$ is convex and
\begin{align}
	h_s(p) = \infty \qquad \text{for any } p \notin \Delta_\zeta(\mathcal{A}) .
	\label{eq:hs-boundary-condition}
\end{align}
\end{assumption}
%


Following the convention in prior literature (e.g., \citet{mei2020global}), we also define the corresponding {\em regularized Q-function} as follows:
\begin{subequations}
\label{eq:relation-reg-Q-reg-V}
\begin{equation}
	\forall (s,a)\in\mathcal{S} \times \cA: 
	\qquad
	{Q}^{\pi}_{\tau}(s,a):=r(s,a)+\gamma\mathop{\mathbb{E}}\limits_{s'\sim P( \cdot | s,a)}\big[V^{\pi}_{\tau}(s')\big].
	\label{eq:defn-reg-Q-function}
\end{equation}
As can be straightforwardly verified, one can also express $V^{\pi}_{\tau}$ in terms of $Q^{\pi}_{\tau}$ as 
\begin{equation}
	\forall s\in\mathcal{S} : 
	\qquad
	V^{\pi}_{\tau}(s):=\mathop{\mathbb{E}}\limits_{a\sim\pi(\cdot |s)} \Big[ {Q}^{\pi}_{\tau}(s,a) - \tau h_s\big(\pi(\cdot \mymid s)\big) \Big].
	\label{eq:reg-Q-V-relation}
\end{equation}
\end{subequations}
The optimal regularized value function $V^{\star}_{\tau}$ and the corresponding optimal policy $\pi_{\tau}^{\star}$ are defined respectively as follows: 
\begin{equation}
\label{optimal policy}
	\forall s\in\mathcal{S}: \qquad V^{\star}_{\tau}(s) \coloneqq V^{\pi_{\tau}^\star}_{\tau}(s) = \max_{\pi}V^{\pi}_{\tau}(s), \qquad  
	\pi_{\tau}^{\star} \coloneqq \arg\max_{\pi}V^{\pi}_{\tau}.
\end{equation}
It is worth noting that \citet{puterman2014markov} asserts the {\em existence} of an optimal policy $\pi_{\tau}^{\star}$ that achieves (\ref{optimal policy})  simultaneously for all $s\in\mathcal{S}$. 
Correspondingly, we shall also define the resulting optimal regularized Q-function as
\begin{equation}
	\forall (s,a)\in \cS\times \cA: \qquad
	Q^{\star}_{\tau} (s,a) = Q^{\pi_{\tau}^\star}_{\tau} (s,a) .
\end{equation}
%



\subsection{Algorithm: generalized policy mirror descent}
\label{sec:algorithm-PMD}

Motivated by PMD \citep{lan21}, 
we put forward a generalization of PMD that selects the Bregman divergence in cognizant of the regularizer in use. A thorough comparison with \citet{lan21} will be provided after introducing our generalized PMD algorithm.

\paragraph{Review: mirror descent (MD) for the composite model.} 
To better elucidate our algorithmic idea, let us first briefly review the design of classical mirror descent---originally proposed by \citet{nemirovsky1983problem}---in the optimization literature.
Consider the following composite model: 
\[
	\text{minimize}_x \quad F(x) \coloneqq f(x) + h(x), 
\]
where the objective function consists of two components. The first component is assumed to be differentiable, while the second component $h(\cdot)$ can be more general and is commonly employed to model some sort of regularizers.  
To solve this composite problem, one variant of mirror descent adopts the following update rule (see also \citet{beck2017first,duchi2010composite}):
\begin{align}
	x^{(k+1)} = \arg\min_{x} \left\{ f\big(x^{(k)}\big) + \big\langle \nabla f(x^{(k)}), x \big\rangle + h(x) + \frac{1}{\eta} D_h\big( x, x^{(k)} \big) \right\} ,
	\label{eq:general-MD}
\end{align}
where $\eta>0$ is the learning rate or step size, and $D_h(\cdot , \cdot)$ is the Bregman divergence defined in \eqref{eq:defn-Bregman-div-standard}. 
Note that the first term  within the  curly brackets of \eqref{eq:general-MD} can be safely discarded as it is a constant given $x^{(k)}$. 
In words, the above update rule approximates $f(x)$ via its first-order Taylor expansion $f\big(x^{(k)}\big) + \big\langle \nabla f(x^{(k)}), x\big\rangle$ at the point  $x^{(k)}$, employs the Bregman divergence $D_h$ to monitor the difference between the new iterate and the current iterate $x^{(k)}$, 
and attempts to optimize such (properly monitored) approximation instead. While one can further generalize the Bregman divergence to $D_{\omega}$ for a different generator $\omega$, we shall restrict attention to the case with $h=\omega$ in the current paper.


\paragraph{The proposed algorithm.} 

We are now ready to present the algorithm we come up with, which is an extension of the PMD algorithm \citep{lan21}. 
For notational simplicity, we shall write
\begin{align}\label{eq:short_hand_v}
	V^{(k)}_{\tau} \coloneqq V^{\pi^{(k)}}_{\tau},
	\qquad
	Q^{(k)}_{\tau}(s,a) \coloneqq Q^{\pi^{(k)}}_{\tau}(s,a)
	\qquad \text{and}\qquad
	d^{(k)}_{s_0}(s) \coloneqq d^{\pi^{(k)}}_{s_0}(s)
\end{align}
throughout the paper, where $\pi^{(k)}$ denotes our policy estimate in the $k$-th iteration.

To begin with, suppose for simplicity that $h_s(\cdot)$ is differentiable everywhere. 
In the $k$-th iteration, a natural MD scheme that comes into mind for solving \eqref{eq:defn-reg-value-function}---namely, $\text{maximize}_{\pi}V_{\tau}^{\pi}(s_0)$ for a given initial state $s_0$---is the following update rule:
\begin{align}
	\pi^{(k+1)}(\cdot \mymid s) & = \arg\min_{ p \in \Delta(\cA)} \left\{  - \Big\langle \nabla _{\pi(\cdot | s)} V_\tau^{\pi}(s_0)\,\Big|_{\pi=\pi^{(k)}}, p \Big\rangle + \frac{\tau}{1-\gamma} d_{s_0}^{(k)}(s) h_s(p) + \frac{1}{\eta'} D_{h_s} \big( p, \pi^{(k)} (\cdot \mymid s) \big) \right\} \notag\\
	& = \arg\min_{ p \in \Delta(\cA)} \left\{ \frac{1}{1-\gamma}d_{s_0}^{(k)}(s) \Big\{ -  \big\langle  Q_\tau^{(k)}(s,\cdot), p \big\rangle + \tau h_s(p) \Big\} + \frac{1}{\eta'} D_{h_s} \big( p, \pi^{(k)} (\cdot \mymid s) \big) \right\} \notag\\
	& = \arg\min_{ p \in \Delta(\cA)} \left\{  -  \big\langle  Q_\tau^{(k)}(s,\cdot), p \big\rangle + \tau h_s(p)  + \frac{1}{\eta} D_{h_s}\big( p, \pi^{(k)} (\cdot \mymid s) \big) \right\}
	\label{eq:general-MD-first-idea}
\end{align}
for every state $s\in \cS$, which is a direct application of \eqref{eq:general-MD} to our setting. 
Here, we start with a learning rate $\eta'$, and obtain simplification by replacing $\eta'$ with ${\eta (1-\gamma)}/{ d_{s_0}^{(k)}(s)} $. 
Notably, the update strategy \eqref{eq:general-MD-first-idea} is invariant to the initial state $s_0$,  akin to natural policy gradient methods \citep{agarwal2019optimality}.

This update rule is well-defined for, say, the case when $h_s$ is the negative entropy, since the algorithm guarantees  $\pi^{(k)} > 0$ all the time and hence $h_s$ is always differentiable w.r.t.~the $k$-th iterate (see \citet{cccwc20}). 
In general, however, it is possible to encounter situations when the gradient of $h_s$ does not exist on the boundary (e.g., when $h_s$ represents a certain indicator function). 
To cope with such cases, we resort to a generalized version of Bregman divergence (e.g., \citet{kiwiel1997proximal,lan2011primal,lan2018optimal}). 
 To be  specific, we attempt to replace the usual Bregman divergence $D_{h_s}(p,q)$ by the following metric 
 \begin{align}
	 D_{h_s}(p,q; g_s) \coloneqq h_s(p) - h_s(q) - \langle g_s, p-q \rangle \geq 0,
	 \label{eq:defn-generalized-Bregman}
 \end{align}
 where $g_s$ can be any vector falling within the subdifferential $\partial h_s(q)$.  
 Here, the non-negativity condition in \eqref{eq:defn-generalized-Bregman} follows directly from the definition of the subgradient for any convex function.  
 The constraint on $g_s$ can be further relaxed by exploiting the requirement $p,q\in \Delta(\cA)$.  
In fact, for any vector $\xi_s = g_s- c_s 1$ (with $c_s\in \mathbb{R}$ some constant and $1$ the all-one vector), one can readily see that
 \begin{align}
	 D_{h_s}(p,q; g_s) &= h_s(p) - h_s(q) - \langle g_s, p-q \rangle = h_s(p) - h_s(q) - \langle \xi_s, p-q \rangle + c_s \langle 1, p-q \rangle \notag\\
	 & = h_s(p) - h_s(q) - \langle \xi_s, p-q \rangle = D_{h_s}(p,q; \xi_s),\label{eq:gbd1}
 \end{align}
where the last line is valid since $1^{\top} p = 1^{\top}q=1$. As a result, everything boils down to identifying a vector $\xi_s$ that falls within $\partial h_s(q)$ upon global shift.

 Towards this,  we propose the following iterative rule for designing such a sequence of vectors as surrogates for the subgradient of $h_s$:  
 \begin{subequations}
	 \label{eq:defn-xi-construction}
 \begin{align}
	 \xi^{(0)}(s,\cdot) &\in \partial h_s\big( \pi^{(0)}(\cdot \mymid s) \big); \\
	 \xi^{(k+1)}(s,\cdot) &= \frac{1}{1+\eta\tau}\xi^{(k)}(s,\cdot)+\frac{\eta}{1+\eta\tau}{{Q}^{(k)}_{\tau}(s,\cdot)}, \qquad k\geq 0, 
 \end{align}
 \end{subequations}
 where $\xi^{(k+1)}(s,\cdot) $ is updated as a convex combination of the previous $\xi^{(k)}(s,\cdot)$ and ${Q}^{(k)}_{\tau}(s,\cdot)$, where more emphasis is put on ${Q}^{(k)}_{\tau}(s,\cdot)$ when the learning rate $\eta$ is large.
 As asserted by the following lemma, the above vectors $\xi^{(k)}(s,\cdot)$ we construct satisfy the desired property, i.e., lying within the subdifferential of $h_s$ under suitable global shifts. 
 It is worth mentioning that these global shifts $\{c_s^{(k)}\}$ only serve as an aid to better understand the construction, but are not required during the algorithm updates.

 \begin{lemma}
	\label{lem:fact-xi-global-shift}
	 For all $k\geq 0$ and every $s\in \cS$, there exists a quantity $c_s^{(k)}\in \mathbb{R}$ such that
    \begin{equation}
        \xi^{(k)}(s,\cdot) - c_s^{(k)} 1  \in  \partial h_s\big( \pi^{(k)}(\cdot \mymid s) \big). 
        \label{eq:xi_in_subgrad}
    \end{equation}
	 In addition, for every $s\in \cS$, there exists a quantity $c_s^{\star}\in \mathbb{R}$ such that
	 \begin{equation}
		 \tau^{-1} Q_\tau^\star(s,\cdot) - c_s^\star 1 \in \partial h_s \big(\pi_\tau^\star(\cdot\mymid s) \big). 
		 \label{eq:xi_in_subgrad-star}
    	\end{equation}
 \end{lemma}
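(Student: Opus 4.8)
The plan is to establish \eqref{eq:xi_in_subgrad} by induction on $k$ and to deduce \eqref{eq:xi_in_subgrad-star} from the optimality characterization of $\pi_\tau^\star$. Throughout I would write $\beta \coloneqq \tau + 1/\eta$, and I would record the standing consequence of Assumption~\ref{assumption:h-inf} that the effective domain satisfies $\mathrm{dom}\, h_s \subseteq \{p : p\ge 0\}$ for every $s\in\cS$; this is used repeatedly below.

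For the induction, the base case $k=0$ is immediate: $\xi^{(0)}(s,\cdot)\in\partial h_s(\pi^{(0)}(\cdot\mymid s))$ by construction, so $c_s^{(0)}=0$ works. Assume now \eqref{eq:xi_in_subgrad} holds at iteration $k$. First I would note that, by the inductive hypothesis together with the shift-invariance identity \eqref{eq:gbd1}, the generalized Bregman divergence $D_{h_s}(\cdot,\pi^{(k)}(\cdot\mymid s);\xi^{(k)}(s,\cdot))$ is nonnegative, so the GPMD update
\[
	\pi^{(k+1)}(\cdot\mymid s)=\arg\min_{p\in\Delta(\cA)}\Big\{-\big\langle Q_\tau^{(k)}(s,\cdot),p\big\rangle+\tau h_s(p)+\tfrac1\eta D_{h_s}\big(p,\pi^{(k)}(\cdot\mymid s);\xi^{(k)}(s,\cdot)\big)\Big\}
\]
is well posed; expanding the divergence and discarding $p$-independent terms, its objective equals $\beta h_s(p)-\langle u^{(k)},p\rangle$ with $u^{(k)}\coloneqq Q_\tau^{(k)}(s,\cdot)+\tfrac1\eta\xi^{(k)}(s,\cdot)$. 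Next I would verify the algebraic identity $\beta\,\xi^{(k+1)}(s,\cdot)=u^{(k)}$, which is exactly the recursion \eqref{eq:defn-xi-construction} after multiplying by $1+\eta\tau$ and dividing by $\eta$ (using $\tfrac{1+\eta\tau}{\eta}=\beta$). Finally I would invoke the first-order optimality condition: because $\mathrm{dom}\, h_s\subseteq\{p\ge 0\}$, minimizing the objective over $\Delta(\cA)$ is the same as minimizing it over the hyperplane $\{p:1^\top p=1\}$, whose normal cone at any point is the line $\mathbb{R}\cdot 1$; hence there exist $\nu\in\partial h_s(\pi^{(k+1)}(\cdot\mymid s))$ and $\lambda\in\mathbb{R}$ with $\beta\nu-u^{(k)}+\lambda 1=0$. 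Substituting $u^{(k)}=\beta\,\xi^{(k+1)}(s,\cdot)$ yields $\nu=\xi^{(k+1)}(s,\cdot)-(\lambda/\beta)1$, so \eqref{eq:xi_in_subgrad} holds at iteration $k+1$ with $c_s^{(k+1)}=\lambda/\beta$.

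For \eqref{eq:xi_in_subgrad-star}, I would invoke the regularized Bellman optimality equation for convex-regularized MDPs \citep{geist2019theory,puterman2014markov}, namely $\pi_\tau^\star(\cdot\mymid s)=\arg\max_{p\in\Delta(\cA)}\{\langle Q_\tau^\star(s,\cdot),p\rangle-\tau h_s(p)\}$, equivalently $\pi_\tau^\star(\cdot\mymid s)=\arg\min_{p\in\Delta(\cA)}\{\tau h_s(p)-\langle Q_\tau^\star(s,\cdot),p\rangle\}$. This is precisely the minimization analyzed above with $(\beta,u^{(k)})$ replaced by $(\tau,Q_\tau^\star(s,\cdot))$ and $\pi^{(k+1)}(\cdot\mymid s)$ by $\pi_\tau^\star(\cdot\mymid s)$, so the same first-order argument produces $\nu^\star\in\partial h_s(\pi_\tau^\star(\cdot\mymid s))$ and $\lambda^\star\in\mathbb{R}$ with $\tau\nu^\star-Q_\tau^\star(s,\cdot)+\lambda^\star 1=0$, i.e.\ $\tau^{-1}Q_\tau^\star(s,\cdot)-(\lambda^\star/\tau)1=\nu^\star\in\partial h_s(\pi_\tau^\star(\cdot\mymid s))$; taking $c_s^\star=\lambda^\star/\tau$ finishes the argument.

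The main obstacle I anticipate is the first-order optimality step: one must exploit Assumption~\ref{assumption:h-inf} to see that the simplex constraint contributes \emph{only} the multiplier $\lambda 1$ attached to $1^\top p=1$ — and this scalar multiplier is exactly the global shift $c_s^{(k)}$ in the statement. If one instead kept the nonnegativity constraints $p\ge 0$ explicit, their normal-cone contribution would have to be reabsorbed into $\partial h_s$ via an auxiliary monotonicity fact (namely $g-\delta e_a\in\partial h_s(q)$ whenever $g\in\partial h_s(q)$, $q_a=0$, and $\delta\ge 0$, which holds since $\mathrm{dom}\, h_s\subseteq\{p\ge 0\}$), so the domain condition in Assumption~\ref{assumption:h-inf} is what keeps the argument clean. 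A secondary point to get right is the identity $\beta\,\xi^{(k+1)}=Q_\tau^{(k)}+\tfrac1\eta\xi^{(k)}$, which is the reason the particular convex combination in \eqref{eq:defn-xi-construction} is the correct surrogate for the subgradient; and I should emphasize that the well-posedness of the $(k{+}1)$-th update genuinely relies on the inductive hypothesis, so the induction is essential rather than cosmetic.
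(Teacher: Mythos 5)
Your proposal is correct and follows essentially the same route as the paper's proof: both use Assumption~\ref{assumption:h-inf} to replace the simplex constraint by the affine constraint $1^\top p = 1$, apply the first-order (Lagrangian) optimality condition with a scalar multiplier attached to that constraint, and identify the resulting shifted subgradient with $\xi^{(k+1)}$ via the recursion \eqref{eq:defn-xi-construction}; the second claim is handled identically through the Bellman-optimality characterization of $\pi_\tau^\star$. The only cosmetic difference is your explicit induction framing, which is in fact dispensable---the optimality condition at step $k+1$ holds for any $\xi^{(k)}$ since the subproblem's objective is proper convex regardless of whether $\xi^{(k)}$ is a shifted subgradient---but it does no harm.
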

\begin{proof} See Appendix~\ref{sec:pf:lem:fact-xi-global-shift}. \end{proof}

Thus far, we have presented all crucial ingredients of our algorithm.  The whole procedure is summarized in Algorithm~\ref{alg:GPMD}, 
and will be referred to as {\em Generalized Policy Mirror Descent (GPMD)} throughout the paper. Interestingly, several well-known algorithms can be recovered as special cases of GPMD:
\begin{itemize}
	\item When the Bregman divergence $D_{h_s}(\cdot , \cdot)$ is taken as the KL divergence, GPMD reduces to the well-renowned NPG algorithm \citep{kakade2001natural} when $\tau=0$ (no regularization), and to the NPG algorithm with entropy regularization analyzed in \citep{cccwc20} when $h_s(\cdot)$ is taken as the negative Shannon entropy.
\item When $\eta=\infty$ (no divergence), GPMD reduces to regularized policy iteration in \cite{geist2019theory}; in particular, GPMD reduces to the standard policy iteration algorithm if in addition $\tau$ is also $0$.
\end{itemize}





\begin{algorithm}[th]
\caption{PMD with generalized Bregman divergence (GPMD)}
\label{alg:GPMD}
\textbf{Input:} initial policy iterate $\pi^{(0)}$, learning rate $\eta>0$.\\
\textbf{Initialize} $\xi^{(0)}$ so that $\xi^{(0)}(s,\cdot) \in \partial h_s\big( \pi^{(0)}(\cdot |s) \big)$ for all $s\in\mathcal{S}$.\\
\For{$k=0,1,\cdots,$}{

    For every $s \in \mathcal{S}$, set
   \begin{subequations}
    \begin{equation}
    \pi^{(k+1)}(\cdot | s)=\arg\min_{p\in\Delta(\mathcal{A})}\left\{- \big\langle Q_{\tau}^{(k)}(s,\cdot),p \big\rangle+\tau h_s(p)
	    +  \frac{1}{\eta}D_{h_s}\big(p, \pi^{(k)}(\cdot | s); \xi^{(k)}  \big)  \right\},
    \label{eq:GPMD_update}
    \end{equation}
    where 
    \begin{equation}
	    D_{h_s} \big(p,q ; \xi \big) \coloneqq h_s(p) - h_s(q) - \big\langle \xi(s,\cdot), p- q \big\rangle.
        \label{eq:strange_Bregman}
    \end{equation}\\
	For every $(s,a)\in \cS\times \cA$, compute
    \begin{equation}
	    \xi^{(k+1)}(s,a) = \frac{1}{1+\eta\tau}\xi^{(k)}(s,a)+\frac{\eta}{1+\eta\tau}{{Q}^{(k)}_{\tau}(s,a)}.
        \label{eq:xi_update}
    \end{equation}
   \end{subequations}
}

\end{algorithm}


\paragraph{Comparison with PMD \citep{lan21}.}
Before continuing, let us take a moment to point out the key differences between our algorithm GPMD and the PMD algorithm proposed in  \citet{lan21} in terms of algorithm designs. Although the primary exposition of PMD in \citet{lan21} fixes the Bregman divergence as the KL divergence, the algorithm also works in the presence of a generic Bregman divergence, whose relationship with the regularizer $h_s$ is, however, unspecified. Furthermore, GPMD adaptively sets this term to be the Bregman divergence generated by the regularizer $h_s$ in use, together with a carefully designed recursive update rule (cf.~\eqref{eq:defn-xi-construction}) to compute surrogates for the subgradient of $h_s$ to facilitate implementation. Encouragingly, this specific choice leads to a tailored performance analysis of GPMD, which was not present in and instead complementary with that of PMD \citep{lan21}. In truth, our theory offers linear convergence guarantees for more general scenarios by adapting to the geometry of the regularizer $h_s$; details to follow momentarily.


\section{Main results}
\label{sec:main-results}

This section presents our convergence guarantees for the GPMD method presented in Algorithm~\ref{alg:GPMD}. 
We shall start with the idealized case assuming that the update rule can be precisely implemented, and then discuss how to generalize it to the scenario with imperfect policy evaluation.

\subsection{Convergence of exact GPMD}
\label{sec:main-results-exact}

To start with, let us pin down the convergence behavior of GPMD, assuming that 
accurate evaluation of the policy $Q^{(k)}_{\tau}$ is available and the subproblem \eqref{eq:GPMD_update} can be solved perfectly.  
Here and below, we shall refer to the algorithm in this case as exact GPMD.  
Encouragingly, exact GPMD provably achieves global linear convergence from an arbitrary initialization, as asserted by the following theorem. 
\begin{theorem}[Exact GPMD]
\label{thm:linear_exact}
Suppose that Assumption~\ref{assumption:h-inf} holds. 
Consider any learning rate $\eta>0$, and set $\alpha:=\frac{1}{1+\eta\tau}$. Then the iterates of Algorithm~\ref{alg:GPMD} satisfy
\begin{subequations}
\begin{align}
	\big\Vert{Q}^{\star}_{\tau}-{Q}^{(k+1)}_{\tau} \big\Vert_{\infty} & \leq\gamma \big( 1 - (1-\alpha)(1-\gamma) \big)^{k}C_1, \\
	\big\Vert{V}^{\star}_{\tau}-{V}^{(k+1)}_{\tau} \big\Vert_{\infty}& \leq (\gamma + 2) \big( 1 - (1-\alpha)(1-\gamma) \big)^{k}C_1,
	\label{eq:V_exact_convergence}
\end{align}	
\end{subequations}
for all $k\geq 0$, 
where $C_1 \coloneqq \Vert{Q}^{\star}_{\tau}-{Q}^{(0)}_{\tau}\Vert_{\infty}+2\alpha\Vert{Q}^{\star}_{\tau}-\tau\xi^{(0)}\Vert_{\infty}$. 

In addition, if $h_s$ is $1$-strongly convex w.r.t.~the $\ell_1$ norm for some $s\in\cS$, then one further has
\begin{equation}
    \big\|\pi_\tau^\star(s) - \pi_\tau^{(k+1)}(s) \big\|_1 \le \tau^{-1}  \big( 1 - (1-\alpha)(1-\gamma) \big)^k C_1 ,
	\qquad k\geq 0.
\end{equation}
\end{theorem}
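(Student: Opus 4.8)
The plan is to monitor two coupled scalar sequences---the $Q$-function error $e_k \coloneqq \|Q^\star_\tau - Q^{(k)}_\tau\|_\infty$ and the ``surrogate subgradient'' error $b_k \coloneqq \|Q^\star_\tau - \tau\xi^{(k)}\|_\infty$---and show each contracts at rate $\rho \coloneqq 1-(1-\alpha)(1-\gamma)$. The first move is to rewrite the update \eqref{eq:GPMD_update} in mirror-map form: expanding $D_{h_s}(\cdot,\cdot;\xi^{(k)})$ and absorbing the $\tau h_s$ term shows $\pi^{(k+1)}(\cdot\mymid s)=\arg\max_{p\in\Delta(\cA)}\{\langle \xi^{(k+1)}(s,\cdot),p\rangle - h_s(p)\}$, where $\tau\xi^{(k+1)} = \alpha\,\tau\xi^{(k)}+(1-\alpha)Q^{(k)}_\tau$ is precisely \eqref{eq:xi_update} (using $\alpha=\tfrac1{1+\eta\tau}$, $1-\alpha=\tfrac{\eta\tau}{1+\eta\tau}$), while the regularized Bellman optimality condition gives $\pi^\star_\tau(\cdot\mymid s)=\arg\max_{p\in\Delta(\cA)}\{\langle\tau^{-1}Q^\star_\tau(s,\cdot),p\rangle - h_s(p)\}$, consistent with \eqref{eq:xi_in_subgrad-star}. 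Subtracting $Q^\star_\tau$ from the convex-combination identity for $\tau\xi^{(k+1)}$ immediately yields the easy recursion $b_{k+1}\le \alpha b_k + (1-\alpha)e_k$.

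The core is a one-step bound on the value error, resting on two ingredients. First, \emph{monotone improvement}: since $\pi^{(k+1)}$ is the exact maximizer of the concave subproblem and the objective evaluated at $\pi^{(k)}$ equals $V^{(k)}_\tau(s)$ (because $D_{h_s}(\pi^{(k)},\pi^{(k)};\xi^{(k)})=0$ and \eqref{eq:reg-Q-V-relation}), one gets $\langle Q^{(k)}_\tau(s,\cdot),\pi^{(k+1)}(\cdot\mymid s)\rangle - \tau h_s(\pi^{(k+1)}(\cdot\mymid s)) \ge V^{(k)}_\tau(s)$; iterating the $\gamma$-contraction $\mathcal{T}^{\pi^{(k+1)}}_\tau$ then gives $V^{(k+1)}_\tau \ge \mathcal{T}^{\pi^{(k+1)}}_\tau V^{(k)}_\tau \ge V^{(k)}_\tau$ pointwise, hence $V^\star_\tau\ge V^{(k)}_\tau$ and $Q^\star_\tau\ge Q^{(k)}_\tau$ for all $k$. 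Second, the \emph{Bregman three-point inequality} for the proximal update, applied with comparator $\pi^\star_\tau(\cdot\mymid s)$ and with the ``closing'' subgradient chosen as the shifted $\xi^{(k+1)}$ (legitimate by Lemma~\ref{lem:fact-xi-global-shift} together with the shift-invariance \eqref{eq:gbd1}). Combining these with $V^\star_\tau(s)=\langle Q^\star_\tau(s,\cdot),\pi^\star_\tau(\cdot\mymid s)\rangle - \tau h_s(\pi^\star_\tau(\cdot\mymid s))$ and a suitable splitting yields
\[
	V^\star_\tau(s)-V^{(k+1)}_\tau(s) \;\le\; \gamma\,\|V^\star_\tau-V^{(k)}_\tau\|_\infty + \tfrac1\eta D_{h_s}\big(\pi^\star_\tau(\cdot\mymid s),\pi^{(k)}(\cdot\mymid s);\xi^{(k)}\big)-\tfrac1\eta D_{h_s}\big(\pi^\star_\tau(\cdot\mymid s),\pi^{(k+1)}(\cdot\mymid s);\xi^{(k+1)}\big).
\]
Here the adaptive, regularizer-generated Bregman divergence does the work that strong convexity does in PMD \citep{lan21}: by \eqref{eq:xi_in_subgrad-star}, $\tau^{-1}Q^\star_\tau(s,\cdot)$ is (up to a global shift) a subgradient of $h_s$ at $\pi^\star_\tau(\cdot\mymid s)$, so $D_{h_s}(\pi^\star_\tau,\pi^{(k)};\xi^{(k)})\le \langle \tau^{-1}Q^\star_\tau(s,\cdot)-\xi^{(k)}(s,\cdot),\,\pi^\star_\tau(\cdot\mymid s)-\pi^{(k)}(\cdot\mymid s)\rangle$, i.e.\ the drift is slaved to $b_k$ up to the factor $\tfrac1{\eta\tau}=\tfrac{\alpha}{1-\alpha}$, with the telescoping difference of the two Bregman terms retained rather than bounded crudely.

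Closing the recursion then combines $b_{k+1}\le\alpha b_k+(1-\alpha)e_k$, the displayed one-step bound, and $\|Q^\star_\tau-Q^{(k+1)}_\tau\|_\infty \le \gamma\|V^\star_\tau-V^{(k+1)}_\tau\|_\infty$ (one Bellman step) into a two-dimensional linear recursion in $(e_k,b_k)$; a careful induction on a weighted combination of the two errors---exploiting $\rho-\gamma=(1-\alpha)(1-\gamma)$ and $\rho-\alpha=(1-\alpha)\gamma$---yields $\|Q^\star_\tau-Q^{(k+1)}_\tau\|_\infty\le \gamma\rho^{k}C_1$ and $b_{k+1}\le\rho^{k}C_1$ with $C_1=\|Q^\star_\tau-Q^{(0)}_\tau\|_\infty+2\alpha\|Q^\star_\tau-\tau\xi^{(0)}\|_\infty$, the weight $2\alpha$ being exactly the one forced by the $\tfrac{\alpha}{1-\alpha}$ drift factor after a Bellman contraction. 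The value bound follows by writing $V^\star_\tau(s)-V^{(k+1)}_\tau(s)$ as $\big[\langle Q^\star_\tau(s,\cdot),\pi^\star_\tau\rangle-\tau h_s(\pi^\star_\tau)\big]-\big[\langle Q^\star_\tau(s,\cdot),\pi^{(k+1)}\rangle-\tau h_s(\pi^{(k+1)})\big]+\langle Q^\star_\tau(s,\cdot)-Q^{(k+1)}_\tau(s,\cdot),\pi^{(k+1)}(\cdot\mymid s)\rangle$, bounding the first difference by $2b_{k+1}$ (optimality of $\pi^\star_\tau$ and of $\pi^{(k+1)}$ for their respective objectives, plus $\|\pi^\star_\tau-\pi^{(k+1)}\|_1\le2$) and the last term by $\|Q^\star_\tau-Q^{(k+1)}_\tau\|_\infty$, giving $(\gamma+2)\rho^k C_1$. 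For the last bullet, $1$-strong convexity of $h_s$ w.r.t.\ $\|\cdot\|_1$ makes the mirror map $\theta\mapsto\arg\max_{p\in\Delta(\cA)}\{\langle\theta,p\rangle-h_s(p)\}$ nonexpansive from $(\mathbb{R}^{\cA},\|\cdot\|_\infty)$ into $(\Delta(\cA),\|\cdot\|_1)$; applying it to $\tau^{-1}Q^\star_\tau(s,\cdot)$ and $\xi^{(k+1)}(s,\cdot)=\tau^{-1}\big(\tau\xi^{(k+1)}(s,\cdot)\big)$ gives $\|\pi^\star_\tau(s)-\pi^{(k+1)}(s)\|_1\le\tau^{-1}b_{k+1}\le\tau^{-1}\rho^k C_1$.

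The main obstacle is precisely this drift control together with the coupled recursion: obtaining a contraction at the \emph{exact} rate $1-(1-\alpha)(1-\gamma)$ with no strong convexity or smoothness of $h_s$ (the very reason GPMD, unlike PMD, converges linearly at every constant step size). Naive $\ell_1$-diameter bounds lose constants that would degrade the rate to something like $\tfrac{2\gamma}{1-\gamma}\cdot(\cdots)$ and break the induction even at $k=0$ when $\gamma$ is close to $1$; the remedy is to keep the telescoping structure of successive three-point inequalities and to use the exact arithmetic among $\alpha$, $\gamma$, and $\rho$ so that the drift contributions collapse cleanly into the constant $C_1$.
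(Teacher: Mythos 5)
Your overall skeleton matches the paper's: you track the same two quantities $\|Q^\star_\tau-Q^{(k)}_\tau\|_\infty$ and $\|Q^\star_\tau-\tau\xi^{(k)}\|_\infty$, derive the same recursion $b_{k+1}\le\alpha b_k+(1-\alpha)e_k$ from \eqref{eq:xi_update}, use the same mirror-map characterizations of $\pi^{(k+1)}$ and $\pi^\star_\tau$ (Lemma~\ref{lem:fact-xi-global-shift}), the same pointwise monotonicity, and essentially the same final conversions to the $V$-bound and the $\ell_1$-policy bound. The gap is in the one step that carries the whole theorem: the bound on $e_{k+1}$. The paper's Lemma~\ref{lemma:system2} proves $\|Q^\star_\tau-Q^{(k+1)}_\tau\|_\infty\le\gamma\|Q^\star_\tau-\tau\xi^{(k+1)}\|_\infty+\gamma\alpha^{k+1}\|Q^{(0)}_\tau-\tau\xi^{(0)}\|_\infty$ by recognizing $\pi^{(k+1)}(\cdot\mymid s)$ as the greedy policy w.r.t.\ $\tau\xi^{(k+1)}$, so that the corresponding term becomes $\mathcal{T}_{\tau,h}(\tau\xi^{(k+1)})$, then invoking the $\gamma$-contraction of the generalized Bellman operator \eqref{eq:Bellman} and controlling the residual via $Q^{(k+1)}_\tau-\tau\xi^{(k+1)}\ge\alpha^{k+1}\bigl(Q^{(0)}_\tau-\tau\xi^{(0)}\bigr)$, which follows from monotonicity. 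Note the index on the right-hand side is $k+1$, not $k$: this is exactly what makes the $2\times 2$ system \eqref{eq:system} have spectral radius $\alpha+(1-\alpha)\gamma$. Your proposal contains no counterpart to this lemma.

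Your substitute---the telescoping value recursion $V^\star_\tau(s)-V^{(k+1)}_\tau(s)\le\gamma\|V^\star_\tau-V^{(k)}_\tau\|_\infty+\tfrac1\eta D_{h_s}(\pi^\star_\tau,\pi^{(k)};\xi^{(k)})-\tfrac1\eta D_{h_s}(\pi^\star_\tau,\pi^{(k+1)};\xi^{(k+1)})$---is neither derived nor shown to close at the rate $1-(1-\alpha)(1-\gamma)$. On the derivation: the three-point identity (Lemma~\ref{lemma:update_three_point}) puts a coefficient $1+\eta\tau$, not $1$, on the forward Bregman term, and the performance-difference lemma produces an expectation over $d^{\pi}_s$ that does not combine with state-local Bregman terms into a clean ``$\gamma\cdot\|\cdot\|_\infty$ plus a telescope.'' On closing the recursion: if you discharge the drift via $\tfrac1\eta D_{h_s}(\pi^\star_\tau,\pi^{(k)};\xi^{(k)})\le\tfrac{2\alpha}{1-\alpha}b_k$ as you indicate, the resulting system in $(v_k,b_k)$ has off-diagonal entry $\tfrac{2\alpha}{1-\alpha}$, which diverges as $\eta\to0$ and yields a spectral radius exceeding $\gamma+\alpha$ (hence exceeding $1$ when $\gamma$ and $\alpha$ are close to $1$); if instead you keep the telescope, the potential $v_{k}(s)+\tfrac1\eta D_{h_s}(\pi^\star_\tau,\pi^{(k)};\xi^{(k)})$ does not contract under a supremum over $s$, because the $\gamma$ multiplies a global norm while the Bregman terms are per-state. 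You assert that ``careful induction'' and ``exact arithmetic'' resolve this, but that is precisely the open part---and the reason the paper introduces $\mathcal{T}_{\tau,h}$ at all is that the Lan-style telescoping argument does not deliver linear convergence for merely convex $h_s$ at small step sizes. To repair the proof, replace your displayed inequality with (a proof of) Lemma~\ref{lemma:system2}; the rest of your argument then goes through.
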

Our theorem confirms the fast global convergence of the GPMD algorithm, in terms of both the resulting regularized Q-value (if $h_s(\cdot)$ is convex) and the policy estimate (if $h_s(\cdot)$ is strongly convex). 
In summary, it takes GPMD no more than
\begin{subequations}
\label{eq:iteration-complexity-exact-PMD}
\begin{align}
\frac{1}{ (1-\alpha)(1-\gamma)  }\log\frac{C_{1}}{\varepsilon}=\frac{1+\eta\tau}{\eta\tau(1-\gamma)}\log\frac{C_{1}}{\varepsilon}
\end{align}
iterations to converge to an $\varepsilon$-optimal regularized Q-function (in the $\ell_{\infty}$ sense),  or
\begin{align}
	\frac{1}{ (1-\alpha)(1-\gamma)  }\log\frac{C_{1}}{\varepsilon\tau} = \frac{1+\eta\tau}{\eta\tau(1-\gamma)}\log\frac{C_{1}}{\varepsilon\tau}
\end{align}
\end{subequations}
iterations to yield an $\varepsilon$-approximation (w.r.t.~the $\ell_1$ norm error) of $\pi_{\tau}^{\star}$. 
The iteration complexity \eqref{eq:iteration-complexity-exact-PMD} is nearly dimension-free---namely, depending at most logarithmically on the dimension of the state-action space 
---making it scalable to large-dimensional problems.

\paragraph{Comparison with \citet[Theorems 1-3]{lan21}.}
To make clear our contributions, it is helpful to compare Theorem~\ref{thm:linear_exact} with the theory for the state-of-the-art algorithm PMD in \citet{lan21}.     
\begin{itemize}	
	\item {\em Linear convergence for convex regularizers under constant learning rates.} Suppose that constant learning rates are adopted for both GPMD and PMD. Our finding reveals that GPMD enjoys global linear convergence---in terms of both $\|{Q}^{\star}_{\tau}-{Q}^{(k+1)}_{\tau}\|_\infty$ and $\|{V}^{\star}_{\tau}-{V}^{(k+1)}_{\tau}\|_\infty$---even when the regularizer $h_s(\cdot)$ is only convex but not strongly convex. In contrast, \citet[{Theorem 2}]{lan21} provided only sublinear convergence guarantees (with an iteration complexity proportional to $1/\varepsilon$) for the case with convex regularizers, provided that constant learning rates are adopted.\footnote{In fact, \citet[Theorem 3]{lan21} suggests using a vanishing strongly convex regularization, as well as a corresponding increasing sequence of learning rates, in order to enable linear convergence for non-strongly-convex regularizers.}

	\item {\em A full range of learning rates.} Theorem~\ref{thm:linear_exact} reveals linear convergence of GPMD for a full range of learning rates, namely, our result is applicable to any $\eta>0$. In comparison, linear convergence was established in \citet{lan21} only when the learning rates are sufficiently large and when $h_s$ is $1$-strongly convex w.r.t.~the KL divergence. Consequently, the linear convergence results in \citet{lan21} do not extend to several widely used  regularizers such as negative Tsallis entropy and log-barrier functions (even after scaling), which are, in contrast, covered by our theory. It is worth noting that the case with small-to-medium learning rates is often more challenging to cope with in theory, given that its dynamics could differ drastically  from that of regularized policy iteration.    
		\item {\em Further comparison of rates under large learning rates.} \cite[Theorem 1]{lan21} achieves a contraction rate of $\gamma$ when the regularizer is strongly convex and the step size satisfies $\eta \ge \frac{1-\gamma}{\gamma \tau}$, while the contraction rate of GPMD is $1-\frac{\eta\tau}{1+\eta\tau}(1-\gamma)$ under the full range of the step size, which is slower but approaches the contraction rate $\gamma$ of PMD as $\eta$ goes to infinity. Therefore, in the limit $\eta \to\infty$, both GPMD and PMD achieve the contraction rate $\gamma$. As soon as $\eta \geq 1/\tau$, their iteration complexities are on the same order. 

\end{itemize}


\begin{remark}
While our primary focus is to solve the regularized RL problem, one might be tempted to apply GPMD as a means to solve unregularized RL;
	 for instance, one might run GPMD with the regularization parameter diminishing gradually in order to approach a  policy with the desired accuracy. We leave the details to Appendix~\ref{sec:adaptive}.
\end{remark}

\subsection{Convergence of approximate GPMD}
\label{sec:main-results-approx}

In reality, however, it is often the case that GPMD cannot be implemented in an exact manner, 
either because perfect policy evaluation is unavailable or because the subproblem \eqref{eq:GPMD_update} cannot be solved exactly.  
To accommodate these practical considerations, 
this subsection generalizes our previous result by permitting inexact policy evaluation and non-zero optimization error in solving \eqref{eq:GPMD_update}. 
The following assumptions make precise this imperfect scenario. 
\begin{assumption}[Policy evaluation error]
	\label{asp:estimation-error-inf}
	Suppose for any $k\geq 0$, we have access to an estimate $\hat{Q}^{(k)}_{\tau}$ obeying
	\begin{equation}
		\big\Vert\hat{Q}^{(k)}_{\tau}-Q^{(k)}_{\tau} \big\Vert_{\infty}\leq \epseval. 
	   \label{eq:estimation-error-inf}
   	\end{equation}
\end{assumption}
%

\begin{assumption}[Subproblem optimization error] 
\label{asp:optimization-error-approx}
Consider any policy $\pi$ and any vector $\xi \in \mathbb{R}^{|\cS||\cA|}$. Define 
\[
	f_{s} (p; \pi, \xi) \coloneqq - \big\langle Q(s,\cdot),p \big\rangle + \tau h_s(p) + \frac{1}{\eta}D_{h_s}\big( p, \pi(\cdot \mymid s); \xi(s,\cdot) \big),
\]
where $D_{h_s}(p,q;\xi)$ is defined in \eqref{eq:defn-generalized-Bregman}.  
Suppose there exists an oracle $G_{s,\epsopt}(Q,\pi,\xi)$, which is capable of returning $\pi'(\cdot \mymid s)$ such that
\begin{align}
\label{eq:oracle}
	f_{s} \big( \pi'(\cdot \mymid s) ; \pi, \xi \big) \leq  \min_{p\in\Delta(\mathcal{A})} f_{s} (p; \pi, \xi) + \epsopt. 
	%
\end{align}
%
%
%
\end{assumption}
Note that the oracle in Assumption~\ref{asp:optimization-error-approx} can be implemented efficiently in practice via various first-order methods \citep{beck2017first}. 
Under Assumptions~\ref{asp:estimation-error-inf} and \ref{asp:optimization-error-approx}, 
we can modify Algorithm~\ref{alg:GPMD} by replacing $\{{Q}^{(k)}_{\tau}\}$ with the estimate $\{\hat{Q}^{(k)}_{\tau}\}$, and invoking the oracle $G_{s,\epsopt}(Q,\pi,\xi)$ to solve the subproblem \eqref{eq:GPMD_update} approximately.  
The whole procedure, which we shall refer to as approximate GPMD, is summarized in Algorithm~\ref{alg:AGPMD}. 

\begin{algorithm}[h]
	\caption{Approximate PMD with generalized Bregman divergence (Approximate GPMD)}
	\label{alg:AGPMD}
	\textbf{Input:} initial policy $\pi^{(0)}$, learning rate $\eta>0$.\\
	\textbf{Initialize} $\hat{\xi}^{(0)}(s) \in \partial h_s\big(\pi^{(0)}(\cdot\mymid s) \big)$ for all $s\in\mathcal{S}$.\\
	\For{$k=0,1,\cdots,$}{
		For every $s \in \mathcal{S}$, invoke the oracle to obtain (cf.~\eqref{eq:oracle})
		%
		\begin{equation}
		\pi^{(k+1)}(s)=G_{s,\epsopt} \big(\hat{Q}^{(k)}_{\tau},\pi^{(k)},\hat{\xi}^{(k)} \big) .
		\label{eq:AGPMD_update}
		\end{equation}\\
		For every $(s,a)\in \cS\times \cA$, compute
		\begin{equation}
		\hat{\xi}^{(k+1)}(s,a) = \frac{1}{1+\eta\tau}\hat{\xi}^{(k)}(s,a)+\frac{\eta}{1+\eta\tau}{\hat{Q}^{(k)}_{\tau}(s,a)}.
		\label{eq:Axi_update}
		\end{equation}
	}
	
\end{algorithm}


The following theorem uncovers that approximate GPMD converges linearly---at the same rate as exact GPMD---before an error floor is hit.
\begin{theorem}[Approximate GPMD]
\label{thm:approximate-PMD}
Suppose that Assumptions~\ref{assumption:h-inf}, \ref{asp:estimation-error-inf} and \ref{asp:optimization-error-approx} hold. 
Consider any learning rate $\eta>0$. Then the iterates of Algorithm~\ref{alg:AGPMD} satisfy
\begin{subequations}
\begin{align}
	\Vert{Q}^{\star}_{\tau}-{Q}^{(k+1)}_{\tau}\Vert_{\infty}
	&\leq\gamma\left[\big( 1 - (1-\alpha)(1-\gamma) \big)^{k}C_1 
+ C_2\right],\label{eq:Q-converence-approx}\\
	\Vert{V}^{\star}_{\tau}-{V}^{(k+1)}_{\tau}\Vert_{\infty}
	& \leq (\gamma + 2)\left[\big( 1 - (1-\alpha)(1-\gamma) \big)^{k}C_1 
+ C_2\right] + (1-\alpha)\epsopt,
	\label{eq:V-converence-approx}
\end{align}
\end{subequations}
where $\alpha \coloneqq \frac{1}{1+\eta\tau}$, $C_1 $ is defined in Theorem~\ref{thm:linear_exact}, and 
\[
	C_2 \coloneqq \frac{1}{1-\gamma}\left[\left(2+\frac{2\gamma}{(1-\gamma)(1-\alpha)}\right)\epseval + \left(1 + \frac{2\gamma}{(1-\gamma)(1-\alpha)}\right)\epsopt\right].
\] 

In addition, if $h_s$ is $1$-strongly convex w.r.t.~the $\ell_1$ norm for any $s \in \mathcal{S}$, then we can further obtain 
\begin{subequations}	\label{eq:err_sc_bound1}
\begin{align}
		\Vert{Q}^{\star}_{\tau}-{Q}^{(k+1)}_{\tau}\Vert_{\infty}
		& \leq\gamma\left[\big( 1 - (1-\alpha)(1-\gamma) \big)^{k}C_1 
	+ C_3\right] , \\
	 \Vert{V}^{\star}_{\tau}-{V}^{(k+1)}_{\tau}\Vert_{\infty}
		&  \leq (\gamma + 2)\left[\big( 1 - (1-\alpha)(1-\gamma) \big)^{k}C_1 
+ C_3\right] + (1-\alpha)\epsopt , \\
	\big\| \pi_\tau^\star( \cdot\mymid s) - \pi^{(k+1)}(\cdot\mymid s) \big\|_1 
	& \leq\tau^{-1}\left[ \big( 1 - (1-\alpha)(1-\gamma) \big)^{k}C_1
+ C_3\right] + \sqrt{\frac{2\eta\epsopt}{1+\eta\tau}}, 
	\label{eq:pi-convergence-approx}
	\end{align}
\end{subequations}
%
where
\begin{equation}	\label{eq:err_sc_bound}
	C_3 \coloneqq \frac{1}{1-\gamma}\left[\left(2+\frac{\epseval\gamma}{\tau(1-\gamma)}\right)\epseval 
	+ \left(1 + \frac{4\gamma}{(1-\gamma)(1-\alpha)}\right)\epsopt\right] .
\end{equation}

\end{theorem}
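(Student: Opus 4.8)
The plan is to replay the proof of Theorem~\ref{thm:linear_exact} while carrying along the two perturbation sources --- the policy-evaluation error $\epseval$ of Assumption~\ref{asp:estimation-error-inf} and the subproblem optimization error $\epsopt$ of Assumption~\ref{asp:optimization-error-approx} --- and showing that they contribute only an additive error floor without degrading the linear rate $1-(1-\alpha)(1-\gamma)$. Concretely, I would track the potential
\[
	u_k \;\coloneqq\; \big\|Q^\star_\tau - Q^{(k)}_\tau\big\|_\infty + 2\alpha\,\big\|Q^\star_\tau - \tau\hat\xi^{(k)}\big\|_\infty ,
\]
which equals $C_1$ at $k=0$, and aim to establish a perturbed one-step contraction $u_{k+1}\le\big(1-(1-\alpha)(1-\gamma)\big)u_k + b$ with an error budget $b = O(\epseval+\epsopt)$. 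Unrolling a recursion of this form gives $u_k \le \big(1-(1-\alpha)(1-\gamma)\big)^k u_0 + \tfrac{b}{(1-\alpha)(1-\gamma)}$, and the extra factor $\tfrac{1}{(1-\alpha)(1-\gamma)}$ produced by summing the geometric series is precisely the origin of the $\tfrac{1}{(1-\gamma)(1-\alpha)}$-type terms inside $C_2$ and $C_3$.

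First I would record the fixed-point characterization of the target. By Lemma~\ref{lem:fact-xi-global-shift}, $\tau^{-1}Q^\star_\tau(s,\cdot)-c^\star_s 1\in\partial h_s(\pi^\star_\tau(\cdot\mymid s))$, so $\pi^\star_\tau(\cdot\mymid s)$ minimizes the GPMD subproblem with $\xi$ replaced by $\tau^{-1}Q^\star_\tau$ and $Q$ by $Q^\star_\tau$; this, together with the regularized Bellman equation $Q^\star_\tau = r + \gamma P V^\star_\tau$, is the reference we contract toward. Next I would establish a one-step lemma for \eqref{eq:AGPMD_update}, using the guarantee \eqref{eq:oracle} purely at the level of function-value comparisons --- for every $p\in\Delta(\cA)$, $f_s(\pi^{(k+1)}(\cdot\mymid s);\pi^{(k)},\hat\xi^{(k)}) \le f_s(p;\pi^{(k)},\hat\xi^{(k)}) + \epsopt$ --- which suffices to push an additive $O(\epsopt)$ through the subsequent inequalities without invoking strong convexity of $h_s$. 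Combining this with $\|\hat Q^{(k)}_\tau - Q^{(k)}_\tau\|_\infty\le\epseval$ (which lets us swap the estimate for the true value in both \eqref{eq:AGPMD_update} and \eqref{eq:Axi_update}) and with the convex-combination identity $\tau\hat\xi^{(k+1)} = \alpha\,\tau\hat\xi^{(k)} + (1-\alpha)\hat Q^{(k)}_\tau$, I would bound $\|Q^\star_\tau - \tau\hat\xi^{(k+1)}\|_\infty$, and, using that $Q^\star_\tau - Q^{(k+1)}_\tau = \gamma P\big(V^\star_\tau - V^{(k+1)}_\tau\big)$ so that $\|Q^\star_\tau - Q^{(k+1)}_\tau\|_\infty\le\gamma\|V^\star_\tau - V^{(k+1)}_\tau\|_\infty$, reduce the $Q$-error to the $V$-error (this is exactly the source of the leading $\gamma$ in \eqref{eq:Q-converence-approx}).

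To close the loop I would bound $\|V^\star_\tau - V^{(k+1)}_\tau\|_\infty$ in terms of quantities already under control. Writing $V^{(k+1)}_\tau(s) = \langle Q^{(k+1)}_\tau(s,\cdot),\pi^{(k+1)}(\cdot\mymid s)\rangle - \tau h_s(\pi^{(k+1)}(\cdot\mymid s))$, using optimality of $\pi^\star_\tau$ in one direction and $\epsopt$-near-optimality of $\pi^{(k+1)}$ together with the subgradient inequality supplied by $\hat\xi^{(k)}$ (which Lemma~\ref{lem:fact-xi-global-shift} places in $\partial h_s$ up to a harmless global shift) in the other, yields $\|V^\star_\tau - V^{(k+1)}_\tau\|_\infty$ bounded by $\|Q^\star_\tau - Q^{(k+1)}_\tau\|_\infty$ plus twice $\|Q^\star_\tau - \tau\hat\xi^{(k)}\|_\infty$ plus an additive $(1-\alpha)\epsopt$; this accounts for the isolated $(1-\alpha)\epsopt$ term in \eqref{eq:V-converence-approx} and, since $\|Q^\star_\tau - Q^{(k+1)}_\tau\|_\infty\le\gamma(\text{bracket})$ while $\|Q^\star_\tau - \tau\hat\xi^{(k)}\|_\infty\le(\text{bracket})$, for the $(\gamma+2)$ prefactor. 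Feeding this back into the $Q$-error bound and assembling gives the perturbed contraction for $u_k$; unrolling then produces \eqref{eq:Q-converence-approx}--\eqref{eq:V-converence-approx} with $C_2$ as stated, the outer $\tfrac{1}{1-\gamma}$ coming from one Bellman amplification and the nested $\tfrac{\gamma}{(1-\gamma)(1-\alpha)}$ from the subsequent geometric-series amplification of each of $\epseval$ and $\epsopt$.

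Finally, for the strongly convex refinement \eqref{eq:err_sc_bound1}: when $h_s$ is $1$-strongly convex w.r.t.\ $\ell_1$, the subproblem objective $f_s(\,\cdot\,;\pi^{(k)},\hat\xi^{(k)})$ equals $(\tau+1/\eta)h_s(p)$ up to an affine term and is therefore $(\tau+1/\eta)$-strongly convex w.r.t.\ $\ell_1$, so an $\epsopt$-suboptimal point lies within $\ell_1$-distance $\sqrt{2\epsopt/(\tau+1/\eta)} = \sqrt{2\eta\epsopt/(1+\eta\tau)}$ of the exact subproblem minimizer --- exactly the extra term in \eqref{eq:pi-convergence-approx}. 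Combining this with the triangle inequality and a bound on the $\ell_1$-distance between the exact subproblem minimizer and $\pi^\star_\tau$ (obtained from strong convexity together with the stationarity condition $\tau^{-1}Q^\star_\tau-c^\star_s 1\in\partial h_s(\pi^\star_\tau)$ and the $Q$/$\hat\xi$ error bounds), and observing that in this regime the $V$-error can be estimated more tightly through $\|\pi^{(k)}-\pi^\star_\tau\|_1$ directly rather than through the generic inequality used above, gives the modified constant $C_3$. The hard part will be the one-step lemma in the non-strongly-convex case: turning the pure function-value guarantee \eqref{eq:oracle} into a perturbed optimality relation strong enough to preserve the linear rate --- while only exploiting convexity (not strong convexity) of $h_s$ and handling the subgradient of the nonsmooth $h_s$ carefully --- and then propagating it through the two-level $(Q,\hat\xi)$ recursion with constants that match $C_2$ and $C_3$ exactly.
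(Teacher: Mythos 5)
There is a genuine gap in the central step of your plan. You propose to obtain the per-iteration $Q$-error bound by writing $\|Q^\star_\tau-Q^{(k+1)}_\tau\|_\infty\le\gamma\|V^\star_\tau-V^{(k+1)}_\tau\|_\infty$ and then closing the loop with a $V$-error bound of the form $\|V^\star_\tau-V^{(k+1)}_\tau\|_\infty\le\|Q^\star_\tau-Q^{(k+1)}_\tau\|_\infty+2\|Q^\star_\tau-\tau\hat\xi^{(k+1)}\|_\infty+O(\epsopt)$. Substituting one into the other is circular and only yields $\|Q^\star_\tau-Q^{(k+1)}_\tau\|_\infty\le\frac{2\gamma}{1-\gamma}\|Q^\star_\tau-\tau\hat\xi^{(k+1)}\|_\infty+\cdots$, and the coefficient $\frac{2\gamma}{1-\gamma}$ exceeds $1$ once $\gamma\ge 1/3$; feeding this into the $\hat\xi$-recursion gives a system whose spectral radius is $\alpha+(1-\alpha)\frac{2\gamma}{1-\gamma}$, which is not a contraction, so the claimed rate $1-(1-\alpha)(1-\gamma)$ cannot be recovered this way. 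In the paper the leading $\gamma$ in \eqref{eq:Q-converence-approx} does \emph{not} come from $Q^\star_\tau-Q^{(k+1)}_\tau=\gamma P(V^\star_\tau-V^{(k+1)}_\tau)$; it comes from applying the contraction of the generalized Bellman operator $\mathcal{T}_{\tau,h}$ to the pair $(Q^\star_\tau,\tau\hat\xi^{(k+1)})$, after recognizing (via Lemma~\ref{lem:fact-xi-global-shift}) that $\tilde\pi^{(k+1)}(\cdot\mymid s)$ maximizes $\langle\tau\hat\xi^{(k+1)}(s,\cdot),p\rangle-\tau h_s(p)$. The $V$-error bound is used only at the very end, to convert the already-established $Q$- and $\hat\xi$-bounds into \eqref{eq:V-converence-approx} with the $(\gamma+2)$ prefactor.

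The second missing ingredient is the residual term left over by that Bellman decomposition, namely $-\gamma\,\mathbb{E}_{s'\sim P(\cdot\mymid s,a),\,a'\sim\pi^{(k+1)}(s')}\bigl[Q^{(k+1)}_\tau(s',a')-\tau\hat\xi^{(k+1)}(s',a')\bigr]$. In the exact proof this is killed by pointwise monotonicity of $Q^{(k)}_\tau$; in the approximate setting monotonicity fails, and the paper handles it by (i) proving an approximate performance-improvement lemma, $V^{(k+1)}_\tau(s)\ge V^{(k)}_\tau(s)-\frac{1+\alpha}{1-\gamma}\epsopt-\frac{2}{1-\gamma}\epseval$ (with a sharper $\epseval^2$ version under strong convexity, which is precisely where the $\frac{\epseval\gamma}{\tau(1-\gamma)}\epseval$ term in $C_3$ originates), and (ii) enlarging the state of the linear system to \emph{three} variables by tracking $-\min_{s,a}\bigl(Q^{(k)}_\tau(s,a)-\tau\hat\xi^{(k)}(s,a)\bigr)$ alongside $\|Q^\star_\tau-Q^{(k)}_\tau\|_\infty$ and $\|Q^\star_\tau-\tau\hat\xi^{(k)}\|_\infty$. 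Your scalar potential $u_k$ omits this third quantity entirely, and no two-variable recursion of the kind you describe controls it. Your treatment of the strongly convex refinement is on the right track for the $\sqrt{2\eta\epsopt/(1+\eta\tau)}$ term in \eqref{eq:pi-convergence-approx} (strong convexity of the subproblem objective with modulus $\frac{1+\eta\tau}{\eta}$), but the improvement from $C_2$ to $C_3$ again flows through the refined performance-improvement lemma rather than through a tighter $V$-error estimate.
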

%
%

	 In the special case where $\epsopt = 0$ and $\eta = \infty$, Algorithm~\ref{alg:AGPMD} reduces to regularized policy iteration, and the convergence result can be simplified as follows 
	 \[
		 \big\Vert{Q}^{\star}_{\tau}-{Q}^{(k)}_{\tau} \big\Vert_{\infty}
		 \leq\gamma^{k} \big\Vert{Q}^{\star}_{\tau}-{Q}^{(0)}_{\tau} \big\Vert_{\infty} + \frac{2\gamma\epseval}{(1-\gamma)^2}.
	 \]

In particular, when $h_s$ is taken as the negative entropy, our result strengthens the prior result established in \cite{cccwc20} for approximate entropy-regularized NPG method with $\epsopt=0$ over a wide range of learning rates. 
	Specifically, the error bound in \cite{cccwc20} reads $\gamma\cdot\frac{\epseval}{1-\gamma}\left(2+\frac{2\gamma}{\eta\tau}\right)$, where the second term in the bracket scales 
	inversely with respect to $\eta$ and therefore grows unboundedly as $\eta$ approaches $0$. In contrast,  \eqref{eq:err_sc_bound1} and \eqref{eq:err_sc_bound} suggest a bound $\gamma\cdot\frac{\epseval}{1-\gamma}\left(2+\frac{\epseval \gamma}{\tau(1-\gamma)}\right)$, which is independent of the learning rate $\eta$ in use and thus prevents the error bound from blowing up when the learning rate approaches $0$. Indeed, our result improves over the prior art \cite{cccwc20} whenever 
$\eta\leq\frac{2(1-\gamma)}{\epseval}$.

\begin{remark}[Sample complexities]
One might naturally ask how many samples are sufficient to learn an $\varepsilon$-optimal regularized Q-function, by leveraging sample-based policy evaluation algorithms in GPMD. Notice that it is straightforward to consider an expected version of Assumption \ref{asp:estimation-error-inf} as following:
	\[
		\begin{cases}
			\mathbb{E}\big[\big\Vert\hat{Q}^{(k)}_{\tau}-Q^{(k)}_{\tau} \big\Vert_{\infty}\big] &\leq \epseval;\\[1ex]
			\mathbb{E}\big[\big\Vert\hat{Q}^{(k)}_{\tau}-Q^{(k)}_{\tau} \big\Vert_{\infty}^2\big] &\leq \epseval^2,
		\end{cases}
	\]
where the expectation is with respect to the randomness in policy evaluation, then the convergence results in Theorem \ref{thm:approximate-PMD} apply to $\mathbb{E}\big[\Vert{Q}^{\star}_{\tau}-{Q}^{(k+1)}_{\tau}\Vert_{\infty}\big]$ and $\mathbb{E}\big[\big\| \pi_\tau^\star( \cdot\mymid s) - \pi_\tau^{(k+1)}(\cdot\mymid s) \big\|_1 \big]$ instead. This randomized version makes it immediately amenable to combine with, e.g., the rollout-based policy evaluators in \citet[Section 5.1]{lan21} to obtain (possibly crude) bounds on the sample complexity. We omit these straightforward developments.
\end{remark}

Roughly speaking, approximate GPMD is guaranteed to converge linearly to an error bound that scales linearly in both the policy evaluation error $\epseval$ and the optimization error $\epsopt$, thus confirming the stability of our algorithm vis-\`a-vis imperfect implementation of the algorithm. 
As before, our theory improves upon prior works by demonstrating linear convergence for a full range of learning rates even in the absence of strong convexity and smoothness.

\section{Analysis for exact GPMD (Theorem~\ref{thm:linear_exact})}
\label{sec:analysis-exact-gradient}


In this section, we present the analysis for our main result in Theorem~\ref{thm:linear_exact}, which follows a different framework from \cite{lan21}. 
Here and throughout, we shall often employ the following shorthand notation when it is clear from the context: 
\begin{align}
\label{eq:notation-simplified-s}
\begin{array}{lll}
	& \pi^{(k)}(s)\coloneqq\pi^{(k)}(\cdot\mymid s)\in\Delta(\cA),\qquad\qquad  & Q^{\pi}(s)\coloneqq Q^{\pi}(s,\cdot)\in\mathbb{R}^{|\mathcal{A}|},\\
	& \xi^{(k)}(s)\coloneqq\xi^{(k)}(s,\cdot)\in\mathbb{R}^{|\mathcal{A}|},\qquad & Q_{\tau}^{\pi}(s)\coloneqq Q_{\tau}^{\pi}(s,\cdot)\in\mathbb{R}^{|\mathcal{A}|},
\end{array}
\end{align}
in addition to those already defined in \eqref{eq:short_hand_v}.%

\subsection{Preparation: basic facts}

In this subsection, we single out a few basic results that underlie the proof of our main theorems.

\paragraph{Performance improvement.} 

To begin with, we demonstrate that GPMD enjoys a sort of monotonic improvements concerning the updates of both the value function and the Q-function, as stated in the following lemma. This lemma can be viewed as a generalization of the well-established policy improvement lemma in the analysis of NPG \citep{agarwal2019optimality,cccwc20} as well as PMD \citep{lan21}.
\begin{lemma}[Pointwise monotonicity]
	\label{lemma:perf_improve}
	 For any $(s,a)\in\mathcal{S} \times \mathcal{A}$ and any $k\geq0$, Algorithm~\ref{alg:GPMD} achieves 
	\begin{align}
		V^{(k+1)}_{\tau}(s)\geq V^{(k)}_{\tau}(s) \qquad \text{and} \qquad {Q}^{(k+1)}_{\tau}(s,a)\geq{Q}^{(k)}_{\tau}(s,a).
	\end{align}
\end{lemma}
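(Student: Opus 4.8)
The plan is to first establish the one-step improvement at the level of the regularized value function, i.e.\ $V^{(k+1)}_\tau(s) \ge V^{(k)}_\tau(s)$ for every $s$, and then bootstrap this pointwise value inequality into the corresponding pointwise $Q$-function inequality via the Bellman-type relation \eqref{eq:defn-reg-Q-function}. The second step is purely mechanical: since ${Q}^{(k+1)}_\tau(s,a) = r(s,a) + \gamma\,\mathbb{E}_{s'\sim P(\cdot\mymid s,a)}[V^{(k+1)}_\tau(s')]$ and likewise for iteration $k$, monotonicity of $V_\tau$ immediately transfers to $Q_\tau$ because $r(s,a)$ cancels and $\gamma \ge 0$, $P(\cdot\mymid s,a)$ is a probability distribution. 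So the whole burden is the value-function inequality.

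For the value inequality, I would use the performance-difference style argument. First I would record the optimality condition for the subproblem \eqref{eq:GPMD_update}: writing $p \mapsto f_s(p) := -\langle Q^{(k)}_\tau(s,\cdot), p\rangle + \tau h_s(p) + \tfrac1\eta D_{h_s}(p, \pi^{(k)}(s); \xi^{(k)})$, the minimizer $\pi^{(k+1)}(s)$ satisfies $f_s(\pi^{(k+1)}(s)) \le f_s(\pi^{(k)}(s))$; since $D_{h_s}(\pi^{(k)}(s),\pi^{(k)}(s);\xi^{(k)}) = 0$, this rearranges to
\begin{align*}
	\big\langle Q^{(k)}_\tau(s,\cdot),\, \pi^{(k+1)}(s) - \pi^{(k)}(s)\big\rangle
	&\ge \tau\big(h_s(\pi^{(k+1)}(s)) - h_s(\pi^{(k)}(s))\big) + \tfrac1\eta D_{h_s}\big(\pi^{(k+1)}(s), \pi^{(k)}(s); \xi^{(k)}\big).
\end{align*}
Next I would invoke the regularized performance-difference identity: for any two policies $\pi, \pi'$,
\[
	V^{\pi'}_\tau(s) - V^\pi_\tau(s) = \frac{1}{1-\gamma}\sum_{s'} d^{\pi'}_s(s')\Big[\big\langle Q^\pi_\tau(s',\cdot), \pi'(\cdot\mymid s') - \pi(\cdot\mymid s')\big\rangle - \tau\big(h_{s'}(\pi'(\cdot\mymid s')) - h_{s'}(\pi(\cdot\mymid s'))\big)\Big],
\]
applied with $\pi = \pi^{(k)}$, $\pi' = \pi^{(k+1)}$. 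Substituting the rearranged optimality condition into each summand shows the bracketed term is lower bounded by $\tfrac1\eta D_{h_s}(\pi^{(k+1)}(s'), \pi^{(k)}(s'); \xi^{(k)}) \ge 0$, where nonnegativity of the generalized Bregman divergence is exactly \eqref{eq:defn-generalized-Bregman} (valid once we know $\xi^{(k)}(s',\cdot)$ is a subgradient of $h_{s'}$ up to a global shift, which is Lemma \ref{lem:fact-xi-global-shift}, and global shifts do not affect the divergence by \eqref{eq:gbd1}). Since $d^{\pi^{(k+1)}}_s(s') \ge 0$ and $\tfrac1{1-\gamma} > 0$, the whole sum is nonnegative, giving $V^{(k+1)}_\tau(s) \ge V^{(k)}_\tau(s)$.

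The main obstacle is making sure the regularized performance-difference identity is available in the exact form stated above and that I am entitled to apply the subproblem optimality condition termwise — in particular that the shorthand learning-rate substitution $\eta' \leftrightarrow \eta(1-\gamma)/d^{(k)}_{s_0}(s)$ that was used to derive \eqref{eq:GPMD_update} is consistent with the $d^{\pi^{(k+1)}}_s(s')$ weights appearing in the performance-difference formula; if the identity is only stated with $d^{\pi}_s$ rather than $d^{\pi'}_s$ I would instead apply it in the opposite direction and use $V^{(k)}_\tau(s) - V^{(k+1)}_\tau(s) \le -(\text{nonneg})$, which requires slightly more care with signs. A secondary technical point is the well-definedness of $\pi^{(k+1)}(s)$ as an interior-ish point so that $h_{s'}$ evaluations make sense — but Assumption \ref{assumption:h-inf} forces the iterates to stay in $\Delta_\zeta(\mathcal{A})$, so the finiteness of all $h_{s'}$ terms is automatic and this causes no trouble.
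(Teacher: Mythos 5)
Your proof is correct and follows the same skeleton as the paper's: both reduce the claim to the value-function inequality via the Bellman relation \eqref{eq:defn-reg-Q-function}, and both establish that inequality by applying the performance-difference identity (the paper's Lemma~\ref{lemma:perf_diff}, stated exactly in the form you need, with the expectation over $d^{\pi'}_s$ for the \emph{new} policy) and then showing the per-state bracketed term is nonnegative. The one genuine difference is in how that per-state term is handled. The paper proves an exact ``three-point'' identity (Lemma~\ref{lemma:update_three_point}), which expresses
$\eta\big[\langle Q^{(k)}_\tau(s),\pi^{(k+1)}(s)-p\rangle+\tau h_s(p)-\tau h_s(\pi^{(k+1)}(s))\big]$
as $(1+\eta\tau)D_{h_s}(p,\pi^{(k+1)};\xi^{(k+1)})+D_{h_s}(\pi^{(k+1)},\pi^{(k)};\xi^{(k)})-D_{h_s}(p,\pi^{(k)};\xi^{(k)})$ and then sets $p=\pi^{(k)}(s)$, invoking Lemma~\ref{lem:fact-xi-global-shift} to certify nonnegativity of both surviving divergences. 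You instead use only the suboptimality comparison $f_s(\pi^{(k+1)}(s))\le f_s(\pi^{(k)}(s))$ together with $D_{h_s}(\pi^{(k)}(s),\pi^{(k)}(s);\xi^{(k)})=0$, which yields the weaker lower bound $\tfrac1\eta D_{h_s}(\pi^{(k+1)},\pi^{(k)};\xi^{(k)})\ge 0$ (dropping the $(1+\eta\tau)D_{h_s}(\pi^{(k)},\pi^{(k+1)};\xi^{(k+1)})$ term). For this lemma alone your route is more elementary---it needs only Lemma~\ref{lem:fact-xi-global-shift}, \eqref{eq:gbd1}, and the definition of the minimizer, and it sidesteps the $\xi^{(k+1)}$ update rule entirely---whereas the paper's exact identity is reused later (in the proofs of Lemma~\ref{lemma:system2} and the approximate analysis), which is why the paper derives it in full. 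Your two worries at the end are non-issues: the performance-difference lemma is available in exactly the $d^{\pi'}_s$-weighted form you want, and the $\eta'\leftrightarrow\eta(1-\gamma)/d^{(k)}_{s_0}(s)$ substitution plays no role because \eqref{eq:GPMD_update} is the actual per-state update being analyzed.
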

\begin{proof}
	See Appendix~\ref{proof performance improvement}. 
\end{proof}
Interestingly, the above monotonicity holds simultaneously for all state-action pairs, and hence can be understood as a kind of pointwise monotonicity.


\paragraph{Generalized Bellman operator.} 

	Another key ingredient of our proof lies in the use of a generalized Bellman operator 
	$\mathcal{T}_{\tau,h}:\mathbb{R}^{\vert\mathcal{S}\vert\vert\mathcal{A}\vert}\rightarrow\mathbb{R}^{\vert\mathcal{S}\vert\vert\mathcal{A}\vert}$ associated with the regularizer $h=\{h_s\}_{s\in \cS}$. 
	Specifically, for any state-action pair $(s,a)$ and any vector $Q\in \mathbb{R}^{|\cS||\cA|}$, we define
\begin{align}
\label{eq:Bellman}
	\mathcal{T}_{\tau,h}(Q)(s,a)=r(s,a)+\gamma \mathop{\mathbb{E}}\limits_{s'\sim P(\cdot |s,a)}\left[\max_{p\in\Delta(\mathcal{A})} \Big\{ \big\langle Q(s'), p \big\rangle -\tau h_{s'}(p) \Big\} \right].
\end{align}
It is worth noting that 
this definition shares similarity with the regularized Bellman operator proposed in \citet{geist2019theory}, where the operator defined there is targeted at $V_{\tau}$, while ours is defined w.r.t.~$Q_{\tau}$.

The importance of this generalized Bellman operator is two-fold: it enjoys a desired contraction property, and its fixed point corresponds to the optimal regularized Q-function. 
These are generalizations of the properties for the classical Bellman operator, and are formally stated in the following lemma. The proof is deferred to Appendix \ref{sec:proof_Bellman}. 
\begin{lemma}[Properties of the generalized Bellman operator]
\label{lem:property-Bellman}
For any $\tau>0$, the operator $\mathcal{T}_{\tau,h}$ defined in \eqref{eq:Bellman} satisfies the following properties:
\begin{itemize}
	\item $\mathcal{T}_{\tau,h}$ is a contraction operator w.r.t.~the $\ell_\infty$ norm, namely, 
	for any $Q_1,Q_2\in\mathbb{R}^{\vert\mathcal{S}\vert\vert\mathcal{A}\vert}$, one has
	\begin{equation}
	\big\Vert\mathcal{T}_{\tau,h}(Q_1)-\mathcal{T}_{\tau,h}(Q_2) \big\Vert_{\infty}\leq\gamma\Vert Q_1-Q_2\Vert_{\infty}.
	\label{eq:Bellman_contract}
	\end{equation}

	\item The optimal regularized $Q$-function ${Q}^{\star}_{\tau}$ is a fixed point of $\mathcal{T}_{\tau,h}$, that is, 	
	\begin{equation}
		\mathcal{T}_{\tau,h}({Q}^{\star}_{\tau})={Q}^{\star}_{\tau}.
		\label{eq:Bellman_fixed_point}
	\end{equation}		
\end{itemize}
\label{lemma:Bellman}
\end{lemma}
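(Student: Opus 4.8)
\textbf{Proof plan for Lemma~\ref{lem:property-Bellman} (properties of the generalized Bellman operator).}

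The plan is to verify the two claimed properties separately, in each case mimicking the classical arguments for the unregularized Bellman optimality operator but carefully handling the extra regularization term $-\tau h_{s'}(p)$ inside the maximization.

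For the contraction property \eqref{eq:Bellman_contract}, the first step is to observe that for any fixed state $s'$, the map $Q(s') \mapsto \max_{p\in\Delta(\mathcal{A})}\{\langle Q(s'),p\rangle - \tau h_{s'}(p)\}$ is $1$-Lipschitz with respect to the $\ell_\infty$ norm on its argument. This follows from the general fact that a pointwise supremum of functions that are each $1$-Lipschitz (here, $Q(s')\mapsto \langle Q(s'),p\rangle - \tau h_{s'}(p)$ for each fixed $p\in\Delta(\mathcal{A})$, which is $1$-Lipschitz in $\ell_\infty$ since $\|p\|_1 = 1$) is again $1$-Lipschitz; concretely, if $p^\star$ attains the max for $Q_1(s')$, then
\begin{align*}
	\max_{p}\{\langle Q_1(s'),p\rangle - \tau h_{s'}(p)\} - \max_{p}\{\langle Q_2(s'),p\rangle - \tau h_{s'}(p)\}
	&\leq \langle Q_1(s') - Q_2(s'), p^\star\rangle \leq \|Q_1 - Q_2\|_\infty,
\end{align*}
and symmetrically for the reverse direction. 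The second step is to plug this into the definition \eqref{eq:Bellman}: the reward term $r(s,a)$ cancels, and
\begin{align*}
	\big|\mathcal{T}_{\tau,h}(Q_1)(s,a) - \mathcal{T}_{\tau,h}(Q_2)(s,a)\big|
	\leq \gamma\, \mathbb{E}_{s'\sim P(\cdot|s,a)}\Big[\big\|Q_1 - Q_2\big\|_\infty\Big] = \gamma\|Q_1 - Q_2\|_\infty,
\end{align*}
and taking the supremum over $(s,a)$ gives \eqref{eq:Bellman_contract}.

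For the fixed-point property \eqref{eq:Bellman_fixed_point}, the first step is to recall from \eqref{eq:reg-Q-V-relation} that $V^\star_\tau(s') = \max_{p\in\Delta(\mathcal{A})}\{\langle Q^\star_\tau(s'),p\rangle - \tau h_{s'}(p)\}$; this is the regularized Bellman optimality equation for $V_\tau$, which follows from the definition \eqref{optimal policy} of $\pi_\tau^\star$ as the maximizer together with \eqref{eq:reg-Q-V-relation}, using the fact (from \citet{puterman2014markov}) that $\pi_\tau^\star$ simultaneously optimizes all states, so that $\pi_\tau^\star(\cdot\mymid s')$ is exactly the maximizing distribution $p$. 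Then substituting into \eqref{eq:Bellman} and comparing with the definition \eqref{eq:defn-reg-Q-function} of $Q^\star_\tau$ gives
\begin{align*}
	\mathcal{T}_{\tau,h}(Q^\star_\tau)(s,a) = r(s,a) + \gamma\,\mathbb{E}_{s'\sim P(\cdot|s,a)}\big[V^\star_\tau(s')\big] = Q^\star_\tau(s,a).
\end{align*}
Strictly speaking, to fully justify $V^\star_\tau(s') = \max_p\{\langle Q^\star_\tau(s'),p\rangle - \tau h_{s'}(p)\}$ one argues both inequalities: ``$\geq$'' is immediate since $\pi_\tau^\star(\cdot\mymid s')$ is feasible and achieves the left side via \eqref{eq:reg-Q-V-relation}; ``$\leq$'' follows because any $p$ in the simplex, together with following $\pi_\tau^\star$ thereafter, defines a (possibly non-stationary) policy whose value at $s'$ cannot exceed $V^\star_\tau(s')$, combined with a one-step expansion of the regularized value.

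I do not anticipate a serious obstacle here — both parts are routine adaptations of standard dynamic-programming arguments. The one place requiring mild care is the $\ell_\infty$-Lipschitz bound for the regularized $\max$ operator, specifically noting that the regularizer $\tau h_{s'}(p)$ is the \emph{same} in both maximizations so it drops out of the difference, and that $p$ ranging over $\Delta(\mathcal{A})$ ensures $\|p\|_1 = 1$ which is what converts the inner-product bound into an $\ell_\infty$ bound on $Q$; the possible nonsmoothness or non-strong-convexity of $h_{s'}$ plays no role in this step since we never differentiate it.
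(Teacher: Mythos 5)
Your proposal is correct. The contraction argument is essentially identical to the paper's: both reduce the difference of the two regularized maxima to $\max_{p\in\Delta(\mathcal{A})}\langle Q_1(s')-Q_2(s'),p\rangle$ (the regularizer cancels because it is the same in both maximizations) and then use $\|p\|_1=1$ to pass to the $\ell_\infty$ bound; the paper phrases this via the elementary inequality $\max_x f(x)-\max_x g(x)\le\max_x(f(x)-g(x))$ while you phrase it via the maximizer $p^\star$ of the first problem, which is the same estimate.

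For the fixed-point property the routes differ mildly. The paper stays entirely within stationary policies: it defines the greedy policy $\pi^{\dagger}(s)=\arg\max_{p\in\Delta(\mathcal{A})}\{\langle Q_\tau^\star(s,\cdot),p\rangle-\tau h_s(p)\}$, shows $Q_\tau^\star\le Q_\tau^{\pi^\dagger}$ by iterating the one-step greedy-improvement inequality, obtains $Q_\tau^\star= Q_\tau^{\pi^\dagger}$ from optimality of $\pi_\tau^\star$, and then computes $Q_\tau^{\pi^\dagger}=\mathcal{T}_{\tau,h}(Q_\tau^\star)$ directly. You instead establish the Bellman optimality identity $V_\tau^\star(s')=\max_{p}\{\langle Q_\tau^\star(s',\cdot),p\rangle-\tau h_{s'}(p)\}$ head-on, with one direction from feasibility of $\pi_\tau^\star(\cdot\mid s')$ and the other from a one-step deviation argument. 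The latter direction implicitly invokes the fact that no (possibly non-stationary) one-step-deviation policy can exceed $V_\tau^\star$, which is the standard "stationary policies suffice" result you attribute to Puterman; the paper's iterated-expansion construction of $\pi^\dagger$ is precisely the self-contained way of proving that same inequality without leaving the stationary class. Both arguments are sound and of comparable length; the paper's buys self-containedness, yours buys a shorter statement at the cost of citing the classical dynamic-programming fact. (One cosmetic note: your "$\ge$"/"$\le$" labels are consistent only if read as inequalities on the max relative to $V_\tau^\star(s')$, not the reverse.)
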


\subsection{Proof of Theorem~\ref{thm:linear_exact}}

Inspired by \citet{cccwc20}, our proof consists of (i) characterizing the dynamics of $\ell_{\infty}$ errors and establishing a connection to a useful linear system with two variables, and (ii) analyzing the dynamics of this linear system directly. In what follows, we elaborate on each of these steps.

\paragraph{Step 1: error contraction and its connection to a linear system.}

With the assistance of the above preparations, we are ready to elucidate how to characterize the convergence behavior of $\Vert {Q}^{\star}_{\tau}-{Q}^{(k+1)}_{\tau}\Vert_{\infty}$.
Recalling the update rule of $\xi^{(k+1)}$ (cf.~\eqref{eq:xi_update}), we can deduce that
\[
	{Q}^{\star}_{\tau}-\tau\xi^{(k+1)}=\alpha \big( {Q}^{\star}_{\tau}-\tau\xi^{(k)} \big)+(1-\alpha)\big( {Q}^{\star}_{\tau}-{Q}^{(k)}_{\tau} \big)
\]
with $\alpha = \frac{1}{1+\eta\tau}$, thus indicating that
\begin{equation}
\label{eq:system1}
	\big\Vert {Q}^{\star}_{\tau}-\tau\xi^{(k+1)} \big\Vert_{\infty}
	\leq \alpha \big\Vert{Q}^{\star}_{\tau}-\tau\xi^{(k)} \big\Vert_{\infty} + (1-\alpha) \big\Vert{Q}^{\star}_{\tau}-{Q}^{(k)}_{\tau}\big\Vert_{\infty}.
\end{equation}

Interestingly, there exists an intimate connection between $\Vert{Q}^{\star}_{\tau}-{Q}^{(k+1)}_{\tau}\Vert_{\infty}$ and $\Vert{Q}^{\star}_{\tau}-\tau\xi^{(k+1)}\Vert_{\infty}$ 
that allows us to bound the former term by the latter. 
This is stated in the following lemma, with the proof postponed to Appendix~\ref{sec:pf_lemma_system2}.
\begin{lemma}
	\label{lemma:system2}
	Set $\alpha = \frac{1}{1+\eta\tau}$. The iterates of Algorithm \ref{alg:GPMD} satisfy
	\begin{equation}
		\label{eq:system2}
		\big\Vert{Q}^{\star}_{\tau}-{Q}^{(k+1)}_{\tau} \big\Vert_{\infty}
		\leq \gamma \big\Vert{Q}^{\star}_{\tau}-\tau\xi^{(k+1)} \big\Vert_{\infty}+\gamma\alpha^{k+1} \big\Vert {Q}^{(0)}_{\tau}-\tau\xi^{(0)} \big\Vert_{\infty}.
	\end{equation}		
\end{lemma}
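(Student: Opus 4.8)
\textbf{Proof plan for Lemma~\ref{lemma:system2}.}
The plan is to relate the updated regularized Q-function $Q^{(k+1)}_\tau$ to the generalized Bellman operator $\mathcal{T}_{\tau,h}$ applied to the surrogate subgradient vector $\tau\xi^{(k+1)}$, and then exploit the contraction property in Lemma~\ref{lem:property-Bellman} together with the fixed-point identity $\mathcal{T}_{\tau,h}(Q^\star_\tau) = Q^\star_\tau$. First I would unpack the optimality conditions of the subproblem \eqref{eq:GPMD_update}. Writing out the KKT/first-order conditions for $\pi^{(k+1)}(\cdot\mymid s) = \arg\min_{p\in\Delta(\cA)}\{-\langle Q^{(k)}_\tau(s,\cdot),p\rangle + \tau h_s(p) + \frac{1}{\eta}D_{h_s}(p,\pi^{(k)}(s);\xi^{(k)})\}$, and using the definition of the generalized Bregman divergence $D_{h_s}(p,q;\xi) = h_s(p)-h_s(q)-\langle\xi(s,\cdot),p-q\rangle$, I expect to find that there is a subgradient of $h_s$ at $\pi^{(k+1)}(s)$ that, up to a global shift by a multiple of $\mathbf{1}$, equals exactly $\xi^{(k+1)}(s,\cdot) = \frac{1}{1+\eta\tau}\xi^{(k)}(s,\cdot) + \frac{\eta}{1+\eta\tau}Q^{(k)}_\tau(s,\cdot)$. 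In other words, the recursive rule \eqref{eq:xi_update} is precisely engineered so that $\pi^{(k+1)}(s)$ is the maximizer of $p\mapsto \langle \tau\xi^{(k+1)}(s,\cdot), p\rangle - \tau h_s(p)$ over $\Delta(\cA)$ — this is the content of Lemma~\ref{lem:fact-xi-global-shift} in disguise, and it is the conceptual heart of the argument.

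Next, I would combine this with the Bellman-type recursion. By the definition of $Q^{(k+1)}_\tau$ in \eqref{eq:defn-reg-Q-function}–\eqref{eq:reg-Q-V-relation}, we have $Q^{(k+1)}_\tau(s,a) = r(s,a) + \gamma\,\mathbb{E}_{s'\sim P(\cdot|s,a)}[\langle Q^{(k+1)}_\tau(s'),\pi^{(k+1)}(s')\rangle - \tau h_{s'}(\pi^{(k+1)}(s'))]$. The issue is that the inner expression involves $Q^{(k+1)}_\tau(s')$, not $\tau\xi^{(k+1)}(s')$, so it is not literally $\mathcal{T}_{\tau,h}(\tau\xi^{(k+1)})(s,a)$. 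To bridge this, I would argue that since $\pi^{(k+1)}(s')$ maximizes $\langle\tau\xi^{(k+1)}(s'),p\rangle - \tau h_{s'}(p)$, one has $\langle Q^{(k+1)}_\tau(s'),\pi^{(k+1)}(s')\rangle - \tau h_{s'}(\pi^{(k+1)}(s')) \le \max_p\{\langle Q^{(k+1)}_\tau(s'),p\rangle - \tau h_{s'}(p)\}$ trivially, but I also get a matching lower bound of $\max_p\{\langle\tau\xi^{(k+1)}(s'),p\rangle - \tau h_{s'}(p)\} + \langle Q^{(k+1)}_\tau(s') - \tau\xi^{(k+1)}(s'),\pi^{(k+1)}(s')\rangle$; controlling the cross term $\|Q^{(k+1)}_\tau - \tau\xi^{(k+1)}\|_\infty$ will produce the second term on the right-hand side of \eqref{eq:system2}. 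Alternatively, and perhaps more cleanly, I would define an auxiliary quantity $\widetilde{Q}^{(k+1)} \coloneqq \mathcal{T}_{\tau,h}(\tau\xi^{(k+1)})$ and show $\|Q^{(k+1)}_\tau - \widetilde{Q}^{(k+1)}\|_\infty$ is small, then write $\|Q^\star_\tau - Q^{(k+1)}_\tau\|_\infty \le \|Q^\star_\tau - \widetilde{Q}^{(k+1)}\|_\infty + \|\widetilde{Q}^{(k+1)} - Q^{(k+1)}_\tau\|_\infty$, bound the first term by $\gamma\|Q^\star_\tau - \tau\xi^{(k+1)}\|_\infty$ using \eqref{eq:Bellman_contract} and \eqref{eq:Bellman_fixed_point}, and handle the second term by tracking how $Q^{(k)}_\tau - \tau\xi^{(k)}$ evolves.

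For the residual term, I would establish the recursion $Q^{(k+1)}_\tau - \tau\xi^{(k+1)} = \gamma\,(\text{something})$ or more precisely bound $\|Q^{(k+1)}_\tau - \tau\xi^{(k+1)}\|_\infty$ in terms of $\alpha^{k+1}\|Q^{(0)}_\tau - \tau\xi^{(0)}\|_\infty$ plus a geometric sum, by using the update \eqref{eq:xi_update} to write $\tau\xi^{(k+1)} = \alpha\,\tau\xi^{(k)} + (1-\alpha)Q^{(k)}_\tau$ and combining with monotonicity (Lemma~\ref{lemma:perf_improve}) which controls $Q^{(k+1)}_\tau - Q^{(k)}_\tau \ge 0$; the factor $\alpha^{k+1}$ and the isolated term $\|Q^{(0)}_\tau - \tau\xi^{(0)}\|_\infty$ strongly suggest that $Q^{(k+1)}_\tau - \tau\xi^{(k+1)}$ telescopes into $\alpha^{k+1}(Q^{(0)}_\tau - \tau\xi^{(0)})$ plus nonnegative correction terms that get absorbed. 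I expect the main obstacle to be the bookkeeping in this last step: carefully disentangling which cross terms are nonnegative (hence can be dropped after taking $\ell_\infty$ norms with the right signs, thanks to pointwise monotonicity) versus which contribute the $\alpha^{k+1}$ geometric decay, while making sure the generalized-Bregman-divergence optimality condition is invoked with the correct global shift so that the shift by $\mathbf{1}$ genuinely cancels (as in \eqref{eq:gbd1}) against the simplex constraint $\mathbf{1}^\top p = 1$. Getting the constant exactly $\gamma$ in front of both terms, rather than something slightly larger, is the delicate point.
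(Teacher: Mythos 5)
Your plan follows essentially the same route as the paper's proof: you invoke Lemma~\ref{lem:fact-xi-global-shift} to recognize $\pi^{(k+1)}(s)$ as the maximizer of $p\mapsto\langle\tau\xi^{(k+1)}(s),p\rangle-\tau h_s(p)$ so that the Bellman recursion for $Q^{(k+1)}_\tau$ can be rewritten via $\mathcal{T}_{\tau,h}(\tau\xi^{(k+1)})$, apply the contraction and fixed-point properties of Lemma~\ref{lem:property-Bellman} to get the $\gamma\Vert Q^\star_\tau-\tau\xi^{(k+1)}\Vert_\infty$ term, and telescope $Q^{(k+1)}_\tau-\tau\xi^{(k+1)}\geq\alpha^{k+1}(Q^{(0)}_\tau-\tau\xi^{(0)})$ using the update rule \eqref{eq:xi_update} and the pointwise monotonicity of Lemma~\ref{lemma:perf_improve} for the residual. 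Your only imprecision is speaking of bounding $\Vert Q^{(k+1)}_\tau-\tau\xi^{(k+1)}\Vert_\infty$ two-sidedly, whereas only the one-sided lower bound is needed (and available), but you correctly note that the sign considerations from monotonicity are what make the correction terms droppable.
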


The above inequalities \eqref{eq:system1} and \eqref{eq:system2}
can be succinctly described via a useful linear system with two variables $\Vert{Q}^{\star}_{\tau}-{Q}^{(k)}_{\tau}\Vert_{\infty}$ and $\Vert{Q}^{\star}_{\tau}-\tau\xi^{(k)}\Vert_{\infty}$, that is,
\begin{equation}
x_{k+1}\leq Ax_{k}+\gamma\alpha^{k+1}y,
\label{eq:system}
\end{equation}
where
\begin{equation}
\label{eq:system-specification-exact}
A \coloneqq
\begin{bmatrix}
\gamma(1-\alpha) & \gamma\alpha \\
1-\alpha & \alpha
\end{bmatrix}
, \qquad
x_k \coloneqq
\begin{bmatrix}
\Vert{Q}^{\star}_{\tau}-{Q}^{{(k)}}_{\tau}\Vert_{\infty} \\
\Vert{Q}^{\star}_{\tau}-\tau\xi^{(k)}\Vert_{\infty}
\end{bmatrix}
\qquad \text{and} \qquad
y \coloneqq
\begin{bmatrix}
\Vert{Q}^{(0)}_{\tau}-\tau\xi^{(0)}\Vert_{\infty} \\
0
\end{bmatrix}.
\end{equation}
This forms the basis for proving Theorem \ref{thm:linear_exact}.

\paragraph{Step 2: analyzing the dynamics of the linear system \eqref{eq:system}.}

Before proceeding, we note that a linear system similar to \eqref{eq:system} has been analyzed in \citet[Section 4.2.2]{cccwc20}. We intend to apply the following properties that have been derived therein:  
\begin{subequations}
	\label{eq:two-borrowed-properties-exact}
\begin{align}
	x_{k+1} & \leq A^{k+1}\left[x_0+\gamma(\alpha^{-1}A-I)^{-1}y\right],
	\label{eq:xk-recursive-formula-exact} \\
	\gamma(\alpha^{-1}A-I)^{-1}y & =
\begin{bmatrix}
0\\
\Vert{Q}^{(0)}_{\tau}-\tau\xi^{(0)}\Vert_{\infty}
\end{bmatrix}
,
\label{eq:non-negativity-alpha-A} \\
	A^{k+1} &=\big ((1-\alpha)\gamma+\alpha \big)^k \begin{bmatrix}
\gamma\\
1
\end{bmatrix}
\begin{bmatrix}
1-\alpha & \alpha
\end{bmatrix}
.
\label{eq:A-recursive-exact}
\end{align}
\end{subequations}
%
%
%
%
%
%
Substituting \eqref{eq:A-recursive-exact} and \eqref{eq:non-negativity-alpha-A} into (\ref{eq:xk-recursive-formula-exact})  and rearranging terms, we reach
\begin{align}
x_{k+1}&\leq \big((1-\alpha)\gamma+\alpha \big)^k\left((1-\alpha) \big\Vert{Q}^{\star}_{\tau}-{Q}^{(0)}_{\tau} \big\Vert_{\infty}+\alpha \big\Vert{Q}^{\star}_{\tau}-\tau\xi^{(0)} \big\Vert_{\infty}+\alpha \big\Vert{Q}^{(0)}_{\tau}-\tau\xi^{(0)} \big\Vert_{\infty}\right)
\begin{bmatrix}
\gamma\\
1
\end{bmatrix}\notag\\
&\leq \big((1-\alpha)\gamma+\alpha \big)^k\left(\big\Vert{Q}^{\star}_{\tau}-{Q}^{(0)}_{\tau}\big\Vert_{\infty}+2\alpha\big\Vert{Q}^{\star}_{\tau}-\tau\xi^{(0)}\big\Vert_{\infty}\right)
\begin{bmatrix}
\gamma\\
1
\end{bmatrix},
\end{align}
%
which taken together with the definition of $x_{k+1}$ gives
\begin{subequations}
\label{eq:Qstar-UB-123-exact}
\begin{align}
	\big\Vert{Q}^{\star}_{\tau}-{Q}^{(k+1)}_{\tau} \big\Vert_{\infty}
	& \leq\gamma \big((1-\alpha)\gamma+\alpha \big)^k\left( \big\Vert{Q}^{\star}_{\tau}-{Q}^{(0)}_{\tau} \big\Vert_{\infty}+2\alpha\big\Vert{Q}^{\star}_{\tau}-\tau\xi^{(0)}\big\Vert_{\infty}\right), \\
    \big\|Q_\tau^\star - \tau \xi^{(k+1)} \big\|_\infty 
	&\le \big((1-\alpha)\gamma+\alpha \big)^k\left( \big\Vert{Q}^{\star}_{\tau}-{Q}^{(0)}_{\tau} \big\Vert_{\infty}+2\alpha \big\Vert{Q}^{\star}_{\tau}-\tau\xi^{(0)} \big\Vert_{\infty}\right).
\end{align}
\end{subequations}

\paragraph{Step 3: controlling $\big\|  \pi_\tau^\star(s) - \pi^{(k+1)}(s) \big\|_1$ and $\big\Vert{V}^{\star}_{\tau}-{V}^{(k+1)}_{\tau} \big\Vert_{\infty}$.}
It remains to convert this result to an upper bound on $\big\|  \pi_{\tau}^\star(s) - \pi^{(k+1)}(s) \big\|_1$ and $\big\Vert{V}^{\star}_{\tau}-{V}^{(k+1)}_{\tau} \big\Vert_{\infty}$. 
By virtue of Lemma~\ref{lem:fact-xi-global-shift}, 
there exist two vectors $g_\tau^{\star}(s)\in \partial h_s\big( \pi_\tau^\star(s) \big)$, $g^{(k+1)}(s) \in \partial h_s\big(\pi ^{(k+1)}(s) \big)$ 
and two scalars $c_s^\star, c_s^{(k+1)}\in \mathbb{R}$ that satisfy
\[
\begin{cases}
	\tau^{-1} Q_\tau^\star (s) - c_s^\star 1 &= g_\tau^{\star}(s)\\
    \xi^{(k+1)}(s,\cdot) - c_s^{(k+1)} 1 &= g^{(k+1)}(s)
\end{cases}.
\]
It holds for all $s\in \mathcal{S}$ that
\begin{align}
	& V_\tau^\star (s) - V_\tau^{(k+1)}(s) \notag\\
	&= \big\langle Q_\tau^\star(s), \pi_\tau^\star(s) \big\rangle - \tau h_s(\pi_\tau^\star(s)) - \big\langle Q_\tau^{(k+1)}(s), \pi_\tau^{(k+1)}(s) \big\rangle + \tau h_s(\pi_\tau^{(k+1)}(s)) \notag\\
	&= \big\langle Q_\tau^\star(s) - Q_\tau^{(k+1)}(s), \pi_\tau^{(k+1)}(s) \big\rangle + \Big[\tau(h_s(\pi_\tau^{(k+1)}(s)) - h_s(\pi_\tau^\star(s))) - \big\langle Q_\tau^\star(s), \pi_\tau^{(k+1)}(s) - \pi_\tau^\star(s) \big\rangle\Big]\notag\\
	&\overset{\text{(i)}}{\le} \big\langle Q_\tau^\star(s) - Q_\tau^{(k+1)}(s), \pi_\tau^{(k+1)}(s) \big\rangle  + \big\langle \tau g^{(k+1)}(s) - Q_\tau^\star(s), \pi_\tau^{(k+1)}(s)-\pi_\tau^\star(s))\big\rangle\notag\\
	&= \big\langle Q_\tau^\star(s) - Q_\tau^{(k+1)}(s), \pi_\tau^{(k+1)}(s) \big\rangle  + \big\langle \tau \xi^{(k+1)}(s) - Q_\tau^\star(s), \pi_\tau^{(k+1)}(s)-\pi_\tau^\star(s))\big\rangle\notag\\
	&\le \big\| Q_\tau^\star(s) - Q_\tau^{(k+1)}(s) \big\|_\infty + 2 \big\|Q_\tau^\star(s) - \tau \xi^{(k+1)}(s)\big\|_\infty,
	\label{eq:V_bound_exact}
\end{align}
where (i) results from $h_s(\pi_\tau^{(k+1)}(s)) - h_s(\pi_\tau^\star(s)) \le \big\langle g^{(k+1)}(s), \pi_\tau^{(k+1)}(s)-\pi_\tau^\star(s) \big\rangle$.
Plugging \eqref{eq:Qstar-UB-123-exact} into \eqref{eq:V_bound_exact} completes the proof for \eqref{eq:V_exact_convergence}.

When $h_s$ is $1$-strongly convex w.r.t.~the $\ell_1$ norm, we can invoke the strong monotonicity property of a strongly convex function \citep[Theorem 5.24]{beck2017first} to obtain
\begin{align}
    \big\|  \pi_\tau^\star(s) - \pi^{(k+1)}(s) \big\|_1^2 
	&\le \big\langle \pi_\tau^\star(s) - \pi^{(k+1)}(s),  g_\tau^{\star}(s) - g^{(k+1)}(s)  \big\rangle\nonumber\\
    & = \big\langle \pi_\tau^\star(s) - \pi^{(k+1)}(s),  g_\tau^{\star}(s) + c_s^\star 1 - g^{(k+1)}(s) - c_s^{(k+1)} 1 \big\rangle\nonumber\\
    & \le \big\|\pi_\tau^\star(s) - \pi^{(k+1)}(s) \big\|_1 \big\|g_\tau^{\star}(s) + c_s^\star 1 - g^{(k+1)}(s) - c_s^{(k+1)} 1 \big\|_\infty \notag\\
	& =  \tau^{-1} \big\|\pi_\tau^\star(s) - \pi^{(k+1)}(s) \big\|_1 \big\| Q_\tau^\star (s) - \tau \xi^{(k+1)} (s) \big\|_{\infty} ,
    \label{eq:pi_conv_l1}
\end{align}
where the second line is valid since $\langle \pi_\tau^\star(s), 1\rangle = \langle \pi^{(k+1)}(s), 1\rangle=1$. 
This taken together with \eqref{eq:Qstar-UB-123-exact} gives rise to the advertised bound
\begin{align*}
    \big\|\pi_\tau^\star(s) - \pi^{(k+1)}(s) \big\|_1 
& \leq \tau^{-1} \big\| Q_\tau^\star (s) - \tau \xi^{(k+1)} (s) \big\|_{\infty} \\
	&\le \tau^{-1} \big( (1-\alpha)\gamma+\alpha \big)^k\left(\big \Vert{Q}^{\star}_{\tau}-{Q}^{(0)}_{\tau} \big\Vert_{\infty}+2\alpha \big\Vert{Q}^{\star}_{\tau}-\tau\xi^{(0)} \big\Vert_{\infty}\right).
\end{align*}
%

%



\section{Numerical experiments}
\label{sec:experiments}

In this section, we provide some simple numerical experiments to corroborate the effectiveness of the GPMD algorithm.

\subsection{Tsallis entropy}

While Shannon entropy is a popular choice of regularization, the discrepancy between the value function of the regularized MDP and the unregularized counterpart scales as $O(\frac{\tau}{1-\gamma} \log |\mathcal{A}|)$. In addition, the optimal policy under Shannon entropy regularization assigns positive mass to all actions and is hence non-sparse. To promote sparsity and obtain better control of the bias induced by regularization, \citet{lee2018sparse,lee2019tsallis} proposed to employ the Tsallis entropy \citep{tsallis1988possible} as an alternative. To be precise, for any vector $p \in \Delta(\mathcal{A})$, the associated Tsallis entropy is defined as
\[
	\mathsf{Tsallis}_q(p) = \frac{1}{q-1}\left(1 - \sum_{a\in \mathcal{A}} \big(p(a)\big)^q\right) 
   = \frac{1}{q-1}\mathbb{E}_{a\sim p}\left[1 - \big(p(a)\big)^{q-1} \right],
\]
where $q >0$ is often referred to as the entropic-index. When $q \to 1$, the Tsallis entropy reduces to the Shannon entropy.

We now evaluate numerically the performance of PMD and GPMD when applied to a randomly generated MDP with $|\mathcal{S}| = 200$ and $|\mathcal{A}| = 50$. Here, the transition probability kernel and the reward function are generated as follows. For each state-action pair $(s,a)$, we randomly select $20$ states to form a set $\mathcal{S}_{s,a}$, and set $P(s'|s, a) = 1/20$ if $s' \in \mathcal{S}_{s,a}$, and $0$ otherwise. The reward function is generated by $r(s, a) \sim  U_{s, a}\cdot U_{s}$, where $U_{s, a}$ and $ U_{s}$ are independent uniform random variables over $[0, 1]$. We shall set the regularizer as $h_s(p) = -\mathsf{Tsallis}_2(p)$ for all $s\in \cS$ with a regularization parameter $\tau = 0.001$. As can be seen from the numerical results displayed in Figure~\ref{fig:tsallis}(a),  GPMD enjoys a faster convergence rate compared to PMD.

\begin{figure}
    \centering
    \begin{tabular}{cc}
    \includegraphics[width=0.49\linewidth]{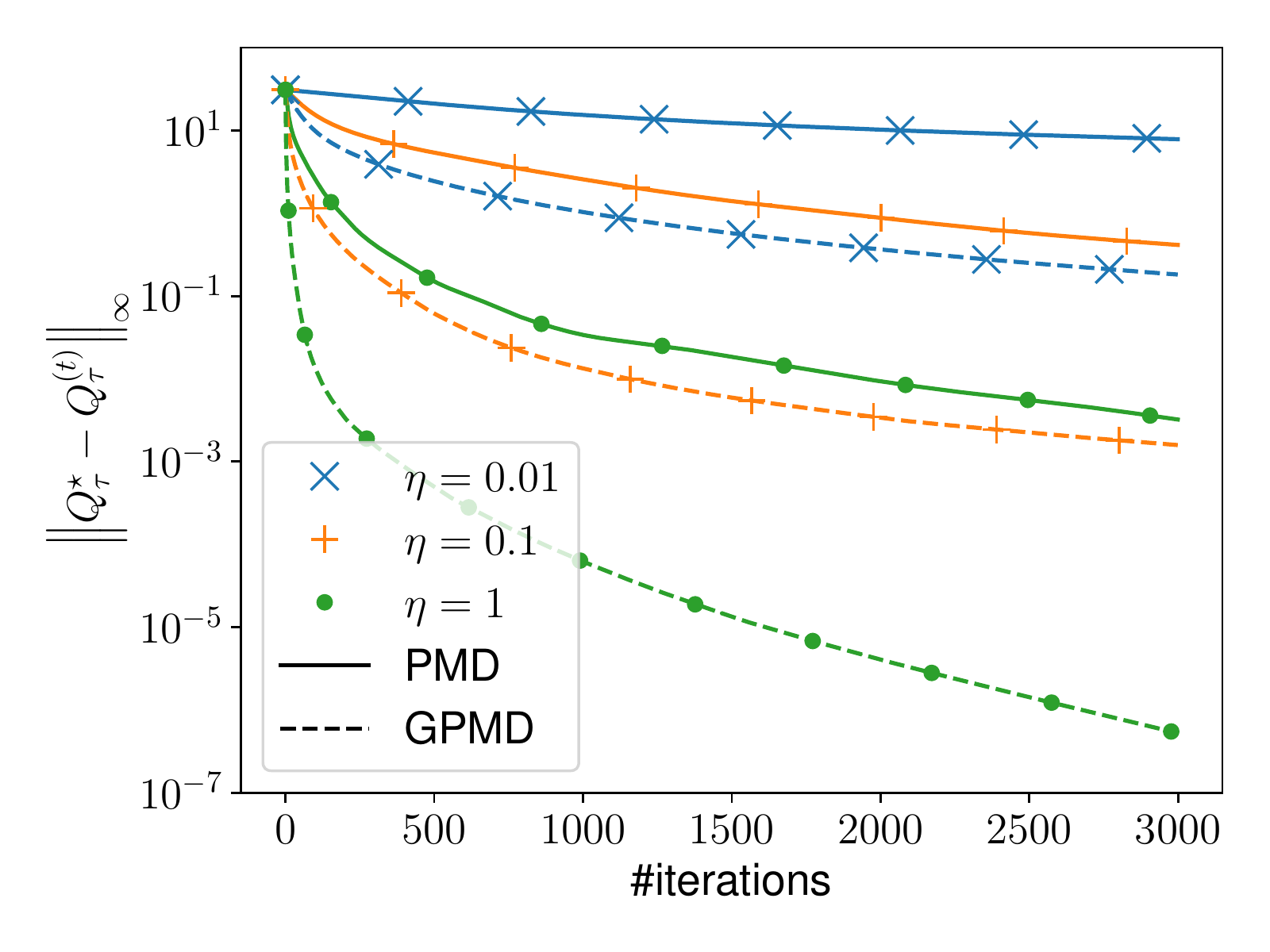} &
    \includegraphics[width=0.49\linewidth]{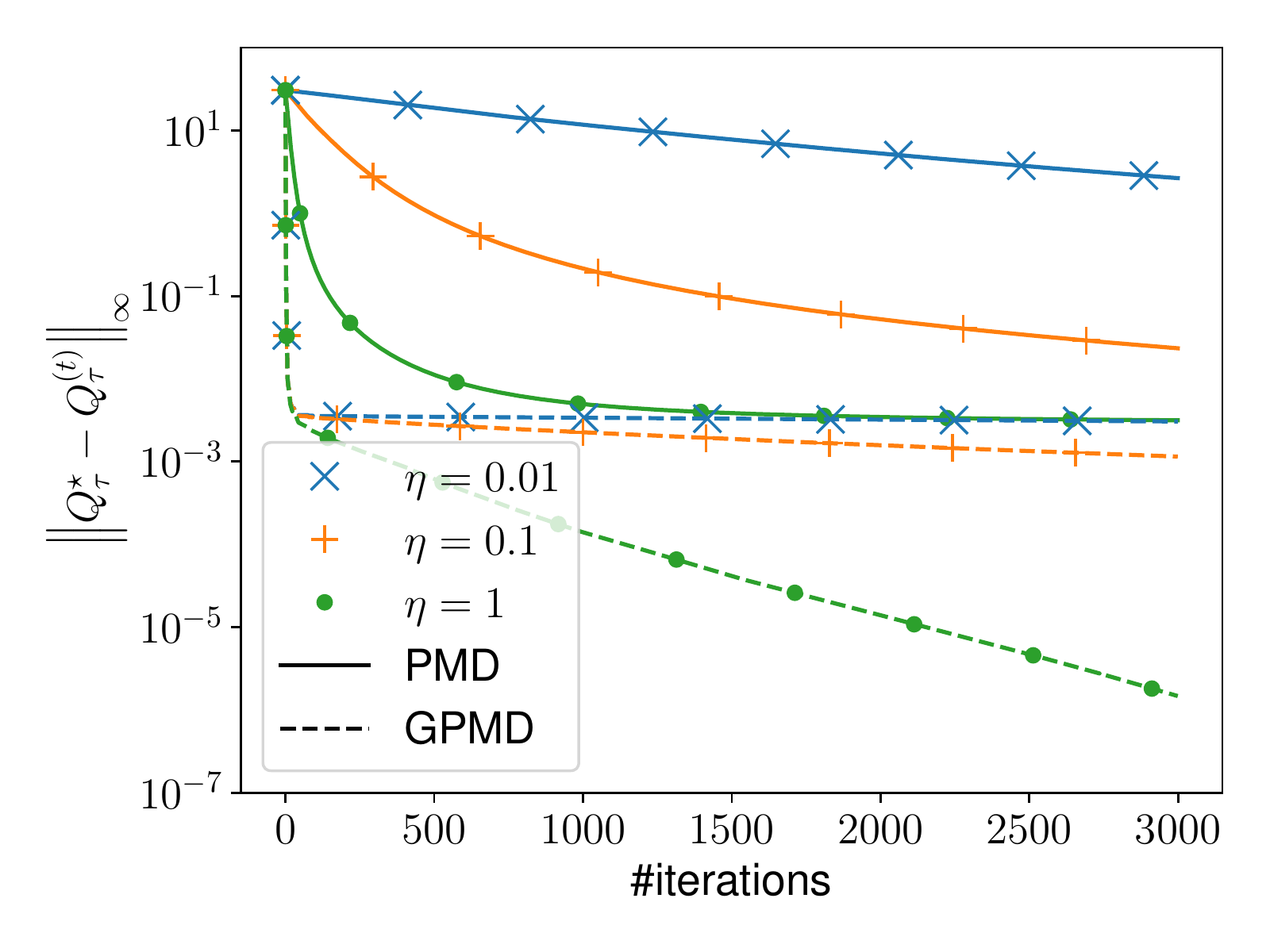} \\
    (a) Tsallis entropy regularization & (b) Log-barrier regularization
    \end{tabular}
	\caption{$\|Q_\tau^\star - Q_\tau^{(t)}\|_\infty$ versus the iteration count for both PMD and GPMD, for multiple choices of the learning rate $\eta$.
	The left plot (a) is concerned with Tsallis entropy regularization, whereas the right plot (b) concerns log-barrier regularization used in our constrained RL example. The error curves are averaged over 5 independent runs.}
    \label{fig:tsallis}
\end{figure}

\subsection{Constrained RL}

In reality, an agent with the sole aim of maximizing cumulative rewards might sometimes end up with unintended or even harmful behavior, due to, say, improper design of the reward function, or non-perfect simulation of physical laws. 
Therefore, it is sometimes necessary to enforce proper constraints on the policy
in order to prevent it from taking certain actions too frequently. 

To simulate this problem, we first solve a MDP with $|\mathcal{S}| = 200$ and $|\mathcal{A}| = 50$,  generated in the same way as in the previous subsection. We then pick 10 state-action pairs from the support of the optimal policy at random to form a set $\Psi$. We can ensure that $\pi_\tau^\star(a\mymid s) < \pi_{\rm max} = 0.1$ for all $(s,a) \in \Psi$ by adding the following log-barrier regularization with $\tau = 0.001$: 
\[
    h_s(p) = 
    \begin{cases}
	    \infty, & \text{if }(s, a) \in \Psi \text{ and } p(a) \ge \pi_{\rm max},\\
        -\log\big(\pi_{\rm max} - p(a) \big), & \text{if } (s, a) \in \Psi \text{ and } p(a) < \pi_{\rm max},\\
        0, & \text{otherwise}.
    \end{cases}
\] 
Numerical comparisons of PMD and GPMD when applied this problem are plotted in Figure \ref{fig:tsallis}(b). It is observed that PMD methods stall after reaching an error floor on the order of $10^{-2}$, while GPMD methods are able to converge to the optimal policy efficiently.


\section{Discussion}
\label{sec:discussion}

The present paper has introduced a generalized framework of policy optimization tailored to regularized RL problems.
We have proposed a generalized policy mirror descent (GPMD) algorithm that achieves dimension-free linear convergence,
which covers an entire range of learning rates and accommodates convex and possibly nonsmooth regularizers. 
Numerical experiments have been conducted to demonstrate the utility of the proposed GPMD algorithm. 
Our approach opens up a couple of future directions that are worthy of further exploration. 
For example, the current work restricts its attention to convex regularizers and tabular MDPs; 
it is of paramount interest to develop policy optimization algorithms when the regularizers are nonconvex and when sophisticated policy parameterization---including function approximation---is adopted.  
Understanding the sample complexities of  the proposed algorithm---when the policies are evaluated using samples collected over an online trajectory---is crucial in sample-constrained scenarios and is left for future investigation.   
Furthermore, it might be worthwhile to extend the proposed algorithm to accommodate multi-agent RL, with a representative example being regularized multi-agent Markov games \citep{cen2021fast,zhao2021provably,cen2022independent,cen2022faster}.

\section*{Acknowledgements}
S. Cen and Y.~Chi are supported in part by the grants ONR N00014-19-1-2404, NSF CCF-2106778, DMS-2134080, CCF-1901199, CCF-2007911, and CNS-2148212. S.~Cen is also gratefully supported by Wei Shen and Xuehong Zhang Presidential Fellowship, and Nicholas Minnici Dean's Graduate Fellowship in Electrical and Computer Engineering at Carnegie Mellon University. W. Zhan and Y.~Chen are supported in part by the Google Research Scholar Award, the Alfred P.~Sloan Research Fellowship, and the grants AFOSR FA9550-22-1-0198, ONR  N00014-22-1-2354, NSF CCF-2221009, CCF-1907661, IIS-2218713, and IIS-2218773. W. Zhan and J. Lee are supported in part by the ARO under MURI Award W911NF-11-1-0304, the Sloan Research Fellowship, NSF CCF 2002272, NSF IIS 2107304, and an ONR Young Investigator Award.

\appendix

\section{Proof of key lemmas}
\label{sec:proof-key-lemmas}

In this section, we collect the proof of several key lemmas. Here and  throughout, we use $\mathbb{E}_{\pi}[\cdot]$ to denote the expectation over the randomness of the MDP induced by policy $\pi$. 
We shall  follow the notation convention in \eqref{eq:notation-simplified-s} throughout. 
In addition, to further simplify notation, we shall abuse the notation by letting
\begin{subequations}
\label{eq:simplified-Bregman}
\begin{align}
	D_{h_s} (\widetilde{\pi}, \pi; \xi) &\coloneqq  D_{h_s} \big(\widetilde{\pi}(\cdot \mymid s), \pi(\cdot \mymid s); \xi(s,\cdot) \big) \\
	D_{h_s} (p, \pi; \xi) &\coloneqq  D_{h_s} \big(p, \pi(\cdot \mymid s); \xi(s,\cdot) \big) \\
	D_{h_s} (\pi, p; \xi) &\coloneqq  D_{h_s} \big( \pi(\cdot \mymid s), p; \xi(s,\cdot) \big)
\end{align}
\end{subequations}
for any policy $\pi$ and $\widetilde{\pi}$ and any $p\in \Delta(\cA)$, 
whenever it is clear from the context.

\subsection{Proof of Lemma~\ref{lem:fact-xi-global-shift}}
\label{sec:pf:lem:fact-xi-global-shift}

	We start by relaxing the probability simplex constraint (i.e., $p\in \Delta(\mathcal{A})$) in \eqref{eq:GPMD_update} with a simpler linear constraint $\sum_{a\in \mathcal{A}} p(a) = 1$ as follows
    \begin{equation}
        \begin{array}{ll}
		\text{minimize}_{p \in \mathbb{R}^{|\mathcal{A}|}}  & -\eta\big\langle Q^{(k)}_{\tau}(s),p \big\rangle+\eta\tau h_s(p)  +  D_{h_s} \big( p, \pi^{(k)};\xi^{(k)} \big)\\[1ex]
		\text{subject to} & \sum_{a\in \mathcal{A}} p(a) = 1 .		
        \end{array}
        \label{eq:GPMD_update_equiv}        
    \end{equation}
	To justify the validity of dropping the non-negative constraint, 
	we note that for any $p$ obeying $p(a) < 0$ for some $a \in \mathcal{A}$, our assumption on $h_s$ (see Assumption~\ref{assumption:h-inf}) leads to $h_s(p) = \infty$, which cannot possibly be the optimal solution. 
	This confirms the equivalence between \eqref{eq:GPMD_update} and \eqref{eq:GPMD_update_equiv}.

	Observe that the Lagrangian w.r.t.~\eqref{eq:GPMD_update_equiv} is given by
\begin{align*}
\mathcal{L}_{s} \big( p,\lambda_{s}^{(k)} \big) & =-\eta\big\langle Q_{\tau}^{(k)}(s),p\big\rangle+\eta\tau h_{s}(p)+h_{s}(p)-h_{s}\big(\pi^{(k)}(s)\big)-\big\langle p-\pi^{(k)}(s),\xi^{(k)}(s)\big\rangle+\lambda_{s}^{(k)}\left(\sum_{a\in\mathcal{A}}p(a)-1\right), 
\end{align*}
	where $\lambda_{s}^{(k)}\in \mathbb{R}$ denotes the Lagrange multiplier associated with the constraint $\sum_{a\in \mathcal{A}} p(a) = 1$. 
	Given that $\pi^{(k+1)}( s)$ is the solution to \eqref{eq:GPMD_update} and hence \eqref{eq:GPMD_update_equiv},  the optimality condition requires that
    \[
	    0\in \partial_{p}\mathcal{L}_{s}\big( p,\lambda_{s}^{(k)} \big) \,\Big|\,_{p=\pi^{(k+1)}( s)} 
	    = -\eta Q^{(k)}_{\tau}(s) + (1+\eta\tau) \partial h_s\big(\pi^{(k+1)}( s)\big) - \xi^{(k)}(s) + \lambda^{(k)}_s 1.
    \]
	Rearranging terms and making use of the construction \eqref{eq:defn-xi-construction}, we are left with
    \[
        \xi^{(k+1)}(s) - \frac{\lambda^{(k)}_s}{1+\eta\tau} 1=  \frac{1}{1+\eta\tau} \left[\eta Q^{(k)}_{\tau}(s) + \xi^{(k)}(s) - \lambda^{(k)}_s 1 \right] \in \partial h_s\big(\pi^{(k+1)}(s)\big),
    \]
	thus concluding the proof of the first claim \eqref{eq:xi_in_subgrad}.

	We now turn to the second claim \eqref{eq:xi_in_subgrad-star}.  In view of the property \eqref{eq:Bellman_fixed_point}, we have
	$$\pi_\tau^\star(s) = \arg\min_{p \in \Delta(\mathcal{A})} -\big\langle Q^{\star}_{\tau}(s),p \big\rangle+\tau h_s(p).$$ 
	This optimization problem is equivalent to 
	\begin{equation}
        \begin{array}{ll}
		\text{minimize}_{p \in \mathbb{R}^{|\mathcal{A}|}}  & -\big\langle Q^{\star}_{\tau}(s),p \big\rangle+\tau h_s(p) , \\[1ex]
		\text{subject to} & \sum_{a\in \mathcal{A}} p(a) = 1, 
        \end{array}
		\label{eq:GPMD_update_star_equiv}
    \end{equation}
	which can be verified by repeating a similar argument for \eqref{eq:GPMD_update_equiv}. The Lagrangian associated with \eqref{eq:GPMD_update_star_equiv} is
	\begin{align*}
		\mathcal{L}_{s} \big( p,\lambda_{s}^{\star} \big) & =-\big\langle Q_{\tau}^{\star}(s),p\big\rangle+\tau h_{s}(p)+\lambda_{s}^{\star}\left(\sum_{a\in\mathcal{A}}p(a)-1\right) ,
	\end{align*}
	where $\lambda_{s}^{\star} \in \mathbb{R} $ denotes the Lagrange multiplier. 
	Therefore, the first-order optimality condition requires that
	\[
	    0\in \partial_{p}\mathcal{L}_{s}\big( p,\lambda_{s}^{\star} \big) \,\Big|\,_{p=\pi^{\star}_\tau( s)} 
	    = - Q^{\star}_{\tau}(s) + \tau \partial h_s\big(\pi^{\star}_\tau( s)\big) + \lambda^{\star}_s 1,
    	\]
	which immediately finishes the proof.

\subsection{Proof of Lemma~\ref{lemma:perf_improve}}
\label{proof performance improvement}

We start by introducing the performance difference lemma that has previously been derived in \citet[Lemma 2]{lan21}. 
For the sake of self-containedness, we include a proof of this lemma in Appendix~\ref{sec:pf_perf_diff}.
\begin{lemma}[Performance difference]
\label{lemma:perf_diff}
For any two policies $\pi$ and $\pi'$, we have
\begin{equation}
V^{\pi'}_{\tau}(s)-V^{\pi}_{\tau}(s)
	=\frac{1}{1-\gamma} \mathop{\mathbb{E}}\limits_{s'\sim d^{\pi'}_s}\Big[ \big\langle Q^{\pi}_{\tau}(s'),\pi'(s')-\pi(s') \big\rangle-\tau h_{s'}\big(\pi'(s')\big)+\tau h_{s'}\big(\pi(s')\big)\Big],
\end{equation}	
where $d^{\pi}_s$ has been defined in \eqref{eq:defn-state-visitation}.
\end{lemma}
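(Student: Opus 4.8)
\textbf{Proof plan for Lemma~\ref{lemma:perf_diff} (Performance difference).}

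The plan is to mimic the classical telescoping argument used for the unregularized performance difference lemma, but carefully tracking the extra regularization terms. First I would fix the initial state $s$ and work with the trajectory $\{(s_t,a_t)\}_{t\geq 0}$ generated under policy $\pi'$ starting from $s_0=s$. The starting point is the identity $V^{\pi'}_\tau(s) = V^{\pi'}_\tau(s)$ rewritten as a telescoping sum: I want to express $V^{\pi'}_\tau(s) - V^\pi_\tau(s)$ as $\mathbb{E}_{\pi'}\big[\sum_{t=0}^\infty \gamma^t\big(\text{something}_t\big)\big]$, where each summand should telescope against $V^\pi_\tau(s_t)$. Concretely, I would write
\[
	V^{\pi'}_\tau(s) - V^\pi_\tau(s) = \mathop{\mathbb{E}}\limits_{\pi'}\left[\sum_{t=0}^\infty \gamma^t\Big( r(s_t,a_t) - \tau h_{s_t}\big(\pi'(\cdot\mymid s_t)\big)\Big)\right] - V^\pi_\tau(s),
\]
and then insert a telescoping term: add and subtract $\sum_{t=0}^\infty \big(\gamma^t V^\pi_\tau(s_t) - \gamma^{t+1} V^\pi_\tau(s_{t+1})\big)$, which equals $V^\pi_\tau(s_0) = V^\pi_\tau(s)$ in expectation (the boundary term at infinity vanishes by $\gamma<1$ and boundedness of $V^\pi_\tau$).

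Carrying this out, the $-V^\pi_\tau(s)$ cancels, and the remaining sum has summand at time $t$ equal to $\gamma^t\big[ r(s_t,a_t) - \tau h_{s_t}(\pi'(\cdot\mymid s_t)) + \gamma V^\pi_\tau(s_{t+1}) - V^\pi_\tau(s_t)\big]$. Taking the conditional expectation over $a_t\sim\pi'(\cdot\mymid s_t)$ and $s_{t+1}\sim P(\cdot\mymid s_t,a_t)$, and using the definition $Q^\pi_\tau(s_t,a_t) = r(s_t,a_t) + \gamma\,\mathbb{E}_{s_{t+1}}[V^\pi_\tau(s_{t+1})]$ from \eqref{eq:defn-reg-Q-function}, the time-$t$ contribution becomes
\[
	\gamma^t\,\mathop{\mathbb{E}}\limits_{a_t\sim\pi'(\cdot\mymid s_t)}\Big[ Q^\pi_\tau(s_t,a_t)\Big] - \gamma^t\,\tau h_{s_t}\big(\pi'(\cdot\mymid s_t)\big) - \gamma^t V^\pi_\tau(s_t).
\]
Now I would invoke the relation \eqref{eq:reg-Q-V-relation}, namely $V^\pi_\tau(s_t) = \mathbb{E}_{a\sim\pi(\cdot\mymid s_t)}[Q^\pi_\tau(s_t,a)] - \tau h_{s_t}(\pi(\cdot\mymid s_t))$, to rewrite $-\gamma^t V^\pi_\tau(s_t)$, so the bracket becomes $\gamma^t\big[\langle Q^\pi_\tau(s_t,\cdot), \pi'(\cdot\mymid s_t) - \pi(\cdot\mymid s_t)\rangle - \tau h_{s_t}(\pi'(\cdot\mymid s_t)) + \tau h_{s_t}(\pi(\cdot\mymid s_t))\big]$. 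Summing over $t$, pulling out the discount factors, and recognizing $(1-\gamma)\sum_{t\geq 0}\gamma^t\mathbb{P}^{\pi'}(s_t = s'\mymid s_0 = s) = d^{\pi'}_s(s')$ from \eqref{eq:defn-state-visitation}, the infinite sum over $t$ collapses into an expectation over $s'\sim d^{\pi'}_s$ with the prefactor $\frac{1}{1-\gamma}$, yielding exactly the claimed identity.

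The main obstacle—though a mild one—is justifying the interchange of the infinite sum with the expectation and confirming that the telescoping boundary term $\gamma^{T+1}V^\pi_\tau(s_{T+1})$ vanishes as $T\to\infty$; this follows since rewards lie in $[0,1]$ and $h_s$ is bounded on $\Delta(\cA)$ by Assumption~\ref{assumption:h-inf} being finite there, so $V^\pi_\tau$ is uniformly bounded and $\gamma^{T+1}\to 0$. A secondary point of care is bookkeeping the dummy state variable: the summand naturally involves $s_t$, and one must correctly identify the coefficient of each fixed state $s'$ across all $t$ with the visitation distribution. Everything else is routine algebra combining \eqref{eq:defn-reg-Q-function} and \eqref{eq:reg-Q-V-relation}.
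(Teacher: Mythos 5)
Your proposal is correct and follows essentially the same telescoping argument as the paper's proof in Appendix~\ref{sec:pf_perf_diff}: insert and telescope $\gamma^t V^{\pi}_{\tau}(s_t)-\gamma^{t+1}V^{\pi}_{\tau}(s_{t+1})$ along the $\pi'$-trajectory, invoke \eqref{eq:defn-reg-Q-function} and \eqref{eq:reg-Q-V-relation} to turn each summand into $\langle Q^{\pi}_{\tau}(s_t,\cdot),\pi'(\cdot\mymid s_t)-\pi(\cdot\mymid s_t)\rangle-\tau h_{s_t}(\pi'(\cdot\mymid s_t))+\tau h_{s_t}(\pi(\cdot\mymid s_t))$, and collapse the discounted sum into an expectation over $d^{\pi'}_s$. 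Your added remarks on the vanishing boundary term and the sum--expectation interchange are fine (the paper leaves these implicit) and do not change the argument.
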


Armed with Lemma~\ref{lemma:perf_diff}, one can readily rewrite the difference $V^{(k+1)}_{\tau}(s)-V^{(k)}_{\tau}(s)$ between two consecutive iterates as follows
\begin{align}
\label{eq:perf_diff}
	&V^{(k+1)}_{\tau}(s)-V^{(k)}_{\tau}(s)\notag\\
	&\qquad =\frac{1}{1-\gamma} \mathop{\mathbb{E}}\limits_{s'\sim d^{(k+1)}_s} \Big[\big\langle Q^{(k)}_{\tau}(s'),\pi^{(k+1)}(s')-\pi^{(k)}(s') \big\rangle-\tau h_{s'}\big (\pi^{(k+1)}(s') \big)+\tau h_{s'}\big(\pi^{(k)}(s') \big) \Big].
\end{align}
It then comes down to studying the right-hand side of the relation (\ref{eq:perf_diff}), which can be accomplished via the following ``three-point'' lemma.
The proof of this lemma can be found in Appendix~\ref{sec:pf_update_three_point}.
\begin{lemma}
\label{lemma:update_three_point}
For any $s \in \mathcal{S}$ and any vector $p\in\Delta(\mathcal{A})$, we have
\begin{align*}
	& (1+\eta\tau)D_{h_s}\big( {p}, {\pi^{(k+1)} }; \xi^{(k+1)} \big) + D_{h_s}\big( {\pi^{(k+1)}}, {\pi^{(k)}}; \xi^{(k)}\big) - D_{h_s}\big( {p} , {\pi^{(k)}}; \xi^{(k)}\big) \\
	& \qquad = \eta\left[ \big\langle Q^{(k)}_{\tau}(s),\pi^{(k+1)}(s)-p \big\rangle+\tau h_s(p)-\tau h_s\big(\pi^{(k+1)}(s) \big)\right] .
\end{align*}
\end{lemma}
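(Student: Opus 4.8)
The plan is to exploit the fact that $\pi^{(k+1)}(s)$ is the exact minimizer of the subproblem \eqref{eq:GPMD_update}, and to turn the first-order optimality condition into the claimed three-point identity by algebraic manipulation. First I would recall from the proof of Lemma~\ref{lem:fact-xi-global-shift} that after relaxing the simplex constraint to the affine constraint $\sum_a p(a)=1$, the KKT stationarity condition at $p=\pi^{(k+1)}(s)$ reads
\[
\eta Q^{(k)}_{\tau}(s) + \xi^{(k)}(s) - \lambda^{(k)}_s 1 \in (1+\eta\tau)\,\partial h_s\big(\pi^{(k+1)}(s)\big),
\]
and that, by the construction \eqref{eq:xi_update}, the left-hand side divided by $1+\eta\tau$ is exactly $\xi^{(k+1)}(s) - \tfrac{\lambda^{(k)}_s}{1+\eta\tau}1$. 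Hence $g^{(k+1)}(s) \coloneqq \xi^{(k+1)}(s) - \tfrac{\lambda^{(k)}_s}{1+\eta\tau}1 \in \partial h_s(\pi^{(k+1)}(s))$. The key equality I will use repeatedly is the elementary ``three-point'' decomposition of Bregman divergences generated by $h_s$: for any $p$, any $q$, and any $g\in\partial h_s(q)$ together with any $g'\in\partial h_s(\pi^{(k+1)}(s))$,
\[
D_{h_s}(p,\pi^{(k+1)}(s);g') + D_{h_s}(\pi^{(k+1)}(s),q;g) - D_{h_s}(p,q;g) = \big\langle g' - g,\, p - \pi^{(k+1)}(s)\big\rangle,
\]
which follows by simply expanding all three generalized Bregman divergences via \eqref{eq:defn-generalized-Bregman} and cancelling the $h_s$ terms.

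Next I would apply this decomposition with $q=\pi^{(k)}(s)$, $g=\xi^{(k)}(s)$ (noting we are free to work with $\xi^{(k)}$ itself rather than its shifted version, since by \eqref{eq:gbd1} the generalized Bregman divergence is invariant under global shifts of the subgradient when both arguments lie on the simplex), and $g'=g^{(k+1)}(s)=\xi^{(k+1)}(s) - \tfrac{\lambda^{(k)}_s}{1+\eta\tau}1$. By the same shift-invariance the left-hand side equals the expression in the lemma once I multiply the $D_{h_s}(p,\pi^{(k+1)};\cdot)$ term by $(1+\eta\tau)$ --- but here I must be slightly careful: the lemma has the factor $(1+\eta\tau)$ on the first term, so I should instead write the stationarity condition as $\eta Q^{(k)}_\tau(s) + \xi^{(k)}(s) - \lambda^{(k)}_s 1 = (1+\eta\tau) g^{(k+1)}(s)$ for the particular choice $g^{(k+1)}(s)\in\partial h_s(\pi^{(k+1)}(s))$, and carry the $(1+\eta\tau)$ through the decomposition. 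Concretely, multiplying the three-point identity for the first Bregman term by $(1+\eta\tau)$ and re-deriving, the left-hand side of the lemma becomes $\big\langle (1+\eta\tau)g^{(k+1)}(s) - \xi^{(k)}(s),\, p - \pi^{(k+1)}(s)\big\rangle$, and substituting $(1+\eta\tau)g^{(k+1)}(s) - \xi^{(k)}(s) = \eta Q^{(k)}_\tau(s) - \lambda^{(k)}_s 1$ gives
\[
\big\langle \eta Q^{(k)}_\tau(s) - \lambda^{(k)}_s 1,\, p-\pi^{(k+1)}(s)\big\rangle = \eta\big\langle Q^{(k)}_\tau(s),\, p - \pi^{(k+1)}(s)\big\rangle,
\]
where the $\lambda^{(k)}_s 1$ term drops because $\langle 1, p - \pi^{(k+1)}(s)\rangle = 1-1 = 0$ for $p,\pi^{(k+1)}(s)\in\Delta(\mathcal{A})$. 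This is $-\eta\langle Q^{(k)}_\tau(s),\pi^{(k+1)}(s)-p\rangle$, and it remains to account for the $\tau h_s(p) - \tau h_s(\pi^{(k+1)}(s))$ terms; these enter because I should actually set up the three-point identity relative to the full objective $-\langle Q^{(k)}_\tau(s),p\rangle + \tau h_s(p) + \frac{1}{\eta}D_{h_s}(p,\pi^{(k)};\xi^{(k)})$ rather than the bare divergence --- i.e., use that the stationarity condition involves $\partial h_s$ with the coefficient $(1+\eta\tau)$ precisely because the objective already contains a $\tau h_s(p)$ term in addition to the $h_s$ inside the divergence. I will split $(1+\eta\tau)\partial h_s$ as $\eta\tau\,\partial h_s + \partial h_s$ accordingly, so the $\eta\tau$ piece produces the $\eta\tau(h_s(p) - h_s(\pi^{(k+1)}(s)))$ contribution via convexity/subgradient inequality turned into an identity by the exact optimality, and the remaining $\partial h_s$ piece pairs with the Bregman divergence terms as above.

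The main obstacle I anticipate is bookkeeping the factor $(1+\eta\tau)$ correctly: it must be distributed between the ``extra'' regularizer term $\tau h_s$ in the GPMD objective and the $h_s$ generating the Bregman divergence, and any sign or coefficient slip will break the clean identity. The cleanest route is probably to not go through $g^{(k+1)}$ at all, but rather to write the exact optimality of $\pi^{(k+1)}(s)$ as: for the minimizer of a sum of a linear term, $\tau h_s$, and $\frac1\eta D_{h_s}(\cdot,\pi^{(k)};\xi^{(k)})$ over the affine set $\{\sum_a p(a)=1\}$, there is a vector in the relevant subdifferential that is orthogonal to the feasible directions, then add and subtract to form the three Bregman divergences. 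Every step after the optimality condition is a direct expansion using \eqref{eq:defn-generalized-Bregman} and the simplex constraints $\langle 1,p\rangle = \langle 1,\pi^{(k+1)}(s)\rangle=\langle 1,\pi^{(k)}(s)\rangle = 1$, so there is no analytic difficulty --- only the need to be meticulous. I would finish by verifying the identity holds with equality (not just inequality) precisely because $\pi^{(k+1)}(s)$ is an exact, not approximate, minimizer, which is what allows us to replace the subgradient inequality for $h_s$ by an equality in the combined expression.
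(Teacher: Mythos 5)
Your proposal assembles the right raw ingredients --- the three-point expansion of the generalized Bregman divergence, the recursion \eqref{eq:xi_update} relating $(1+\eta\tau)\xi^{(k+1)}$ to $\xi^{(k)}+\eta Q_\tau^{(k)}$, and the observation that global shifts by a multiple of $1$ are invisible on the simplex --- but the route you chose both overcomplicates the argument and, in its final step, relies on a move that does not work. The paper's proof never touches the optimality of $\pi^{(k+1)}(s)$: it expands $D_{h_s}(p,\pi^{(k)};\xi^{(k)})$ into $D_{h_s}(p,\pi^{(k+1)};\xi^{(k+1)})+D_{h_s}(\pi^{(k+1)},\pi^{(k)};\xi^{(k)})+\langle p-\pi^{(k+1)}(s),\xi^{(k+1)}(s)-\xi^{(k)}(s)\rangle$, substitutes $\xi^{(k+1)}-\xi^{(k)}=\eta Q_\tau^{(k)}-\eta\tau\xi^{(k+1)}$ from \eqref{eq:xi_update}, and then adds $\eta\tau\{h_s(p)-h_s(\pi^{(k+1)}(s))\}$ to both sides so that the leftover $\eta\tau\langle\xi^{(k+1)}(s),\pi^{(k+1)}(s)-p\rangle$ recombines into $\eta\tau D_{h_s}(p,\pi^{(k+1)};\xi^{(k+1)})$; that is where the coefficient $(1+\eta\tau)$ comes from. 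The identity is therefore valid for an arbitrary iterate $\pi^{(k+1)}$, with no KKT condition, no Lagrange multiplier $\lambda_s^{(k)}$, and no appeal to Lemma~\ref{lem:fact-xi-global-shift}.

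The step in your plan that would actually fail is the claim that the $\eta\tau\big(h_s(p)-h_s(\pi^{(k+1)}(s))\big)$ contribution is produced ``via convexity/subgradient inequality turned into an identity by the exact optimality.'' The subgradient inequality $h_s(p)\ge h_s(\pi^{(k+1)}(s))+\langle g^{(k+1)}(s),p-\pi^{(k+1)}(s)\rangle$ is strict for a generic convex $h_s$, and exact minimization of the GPMD subproblem does nothing to turn it into an equality at an arbitrary comparison point $p$; following that route literally yields only a one-sided bound, not the stated identity. The correct mechanism is the purely algebraic recombination described above. Two smaller slips are also worth flagging: the three-point decomposition should read $\langle g'-g,\,\pi^{(k+1)}(s)-p\rangle$ rather than $\langle g'-g,\,p-\pi^{(k+1)}(s)\rangle$, and the left-hand side of the lemma is not equal to the inner product $\langle(1+\eta\tau)g^{(k+1)}(s)-\xi^{(k)}(s),\,p-\pi^{(k+1)}(s)\rangle$ alone --- the term $\eta\tau\big(h_s(p)-h_s(\pi^{(k+1)}(s))\big)$ is already sitting inside $(1+\eta\tau)D_{h_s}(p,\pi^{(k+1)};\xi^{(k+1)})$ and survives the cancellation of the other $h_s$ terms. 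Once you drop the optimality detour and carry out the direct expansion that you yourself identify as the ``cleanest route,'' the proof closes in a few lines.
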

Taking $p=\pi^{(k)}(s)$ in Lemma~\ref{lemma:update_three_point} and combining it with \eqref{eq:perf_diff}, we arrive at
\begin{align*}
 & V_{\tau}^{(k+1)}(s)-V_{\tau}^{(k)}(s)\\
	& =\frac{1}{(1-\gamma)\eta} \mathop{\mathbb{E}}\limits_{s'\sim d_{s}^{(k+1)}}\left[(1+\eta\tau)D_{h_{s'}}\big(\pi^{(k)},\pi^{(k+1)};\xi^{(k+1)}\big)+D_{h_{s'}}\big(\pi^{(k+1)},\pi^{(k)};\xi^{(k)}\big)\right]\ge0 
\end{align*}
for any $s\in \cS$, thus establishing the advertised pointwise monotonicity w.r.t.~the regularized value function.

When it comes to the regularized Q-function, it is readily seen from the definition \eqref{eq:defn-reg-Q-function} that
\begin{align}
	{Q}^{(k+1)}_{\tau}(s,a) &=r(s,a)+\gamma \mathop{\mathbb{E}}\limits_{s'\sim P(\cdot |s,a)} \big[V^{(k+1)}_{\tau}(s') \big] \notag \\
	& \geq r(s,a)
	+ \gamma \mathop{\mathbb{E}}\limits_{s'\sim P(\cdot |s,a)} \big[V^{(k)}_{\tau}(s') \big]={Q}^{(k)}_{\tau}(s,a) \notag
\end{align}
for any $(s,a)\in \cS\times \cA$, where the last line is valid since $V_{\tau}^{(k+1)} \geq V_{\tau}^{(k)}$.
This concludes the proof.


\subsubsection{Proof of Lemma~\ref{lemma:perf_diff}}
\label{sec:pf_perf_diff}

For any two policies $\pi'$ and $\pi$, it follows from the definition \eqref{eq:defn-reg-value-function} of $V^{\pi}_{\tau}(s)$ that
\begin{align}
	& V^{\pi'}_{\tau}(s)-V^{\pi}_{\tau}(s)
	  =\mathbb{E}_{\pi'}\left[\sum_{t=0}^{\infty}\gamma^t\Big[r(s_t,a_t)-\tau h_{s_t}\big(\pi'(s_t) \big) \Big] \,\Big\vert\, s_0=s\right]-V^{\pi}_{\tau}(s) \notag\\
	&\qquad =\mathbb{E}_{\pi'}\left[\sum_{t=0}^{\infty}\gamma^t\Big[r(s_t,a_t)-\tau h_{s_t}\big(\pi'(s_t)\big)+V^{\pi}_{\tau}(s_t)-V^{\pi}_{\tau}(s_t)\Big] \,\Big\vert\, s_0=s\right]-V^{\pi}_{\tau}(s) \notag\\
	&\qquad =\mathbb{E}_{\pi'}\left[\sum_{t=0}^{\infty}\gamma^t\Big[r(s_t,a_t)-\tau h_{s_t}\big(\pi'(s_t)\big)+\gamma V^{\pi}_{\tau}(s_{t+1})-V^{\pi}_{\tau}(s_t)\Big] \,\Big\vert\, s_0=s\right]
	+ \mathbb{E}_{\pi'}\left[V^{\pi}_{\tau}(s_0) \,\Big\vert\, s_0=s\right]-V^{\pi}_{\tau}(s) \notag\\
	&\qquad =\mathbb{E}_{\pi'}\left[\sum_{t=0}^{\infty}\gamma^t\Big[r(s_t,a_t)-\tau h_{s_t}\big(\pi'(s_t)\big)+\gamma V^{\pi}_{\tau}(s_{t+1})-V^{\pi}_{\tau}(s_t)\Big] \,\Big\vert\, s_0=s\right] \notag\\
	&\qquad =\mathbb{E}_{\pi'}\left[\sum_{t=0}^{\infty}\gamma^t\Big[r(s_t,a_t)-\tau h_{s_t}\big(\pi(s_t)\big)+\gamma V^{\pi}_{\tau}(s_{t+1})-V^{\pi}_{\tau}(s_t)-\tau h_{s_t}\big(\pi'(s_t)\big)+\tau h_{s_t}\big(\pi(s_t)\big) \Big]
	\,\Big\vert\, s_0=s\right] \notag\\
	&\qquad =\mathbb{E}_{\pi'}\left[\sum_{t=0}^{\infty}\gamma^t\Big[Q^{\pi}_{\tau}(s_t,a_t) - \tau h_{s_t}\big(\pi(s_t)\big) -V^{\pi}_{\tau}(s_t)-\tau h_{s_t}\big( \pi'(s_t) \big)+\tau h_{s_t}\big (\pi(s_t) \big) \Big] \,\Big\vert\, s_0=s\right]\notag \\
	&\qquad =\frac{1}{1-\gamma} \mathop{\mathbb{E}}\limits_{s'\sim d^{\pi'}_s}\Big[ \big\langle Q^{\pi}_{\tau}(s'),\pi'(s')-\pi(s') \big\rangle-\tau h_{s'}\big(\pi'(s')\big)+\tau h_{s'}\big(\pi(s') \big)\Big] ,
	\label{eq:Vpiprime-Vpi-diff}
\end{align}
where the penultimate line comes from the definition \eqref{eq:defn-reg-Q-function}.
To see why the last line of \eqref{eq:Vpiprime-Vpi-diff} is valid, we make note of the following identity
\begin{align}
	& \mathop{\mathbb{E}}\limits_{a_t\sim\pi'(s_t)} \Big[ Q^{\pi}_{\tau}(s_t,a_t) - \tau h_{s_t}\big(\pi(s_t)\big) -V^{\pi}_{\tau}(s_t) \Big] \notag\\
    &\qquad =\mathop{\mathbb{E}}\limits_{a_t\sim\pi'(s_t)} \Big[ Q^{\pi}_{\tau}(s_t,a_t) - \tau h_{s_t}\big(\pi(s_t) \big) \Big] 
	- \mathop{\mathbb{E}}\limits_{a_t\sim\pi(s_t)} \Big[Q^{\pi}_{\tau}(s_t,a_t) - \tau h_{s_t}\big(\pi(s_t) \big) \Big] \notag\\
    &\qquad = \Big\langle Q^{\pi}_{\tau}(s_t) - \tau h_{s_t}\big(\pi(s_t)\big) \cdot 1,\pi'(s_t)-\pi(s_t) \Big\rangle \notag\\
    &\qquad = \Big\langle Q^{\pi}_{\tau}(s_t) ,\pi'(s_t)-\pi(s_t) \Big\rangle, \label{eq:Vpiprime-Vpi-diff-2}
\end{align}
where the first identity results from the relation \eqref{eq:reg-Q-V-relation}, and the last relation holds since $1^{\top}\pi'(s_t) = 1^{\top}\pi(s_t)=1$. 
The last line of \eqref{eq:Vpiprime-Vpi-diff} then follows immediately from the relation \eqref{eq:Vpiprime-Vpi-diff-2} and the definition \eqref{eq:defn-state-visitation} of $d_s^{\pi}$.

\subsubsection{Proof of Lemma~\ref{lemma:update_three_point}}
\label{sec:pf_update_three_point}

For any state $s \in \mathcal{S}$, we make the observation that
\begin{align*}
	& D_{h_{s}}\big(p,\pi^{(k)};\xi^{(k)}\big)  =h_{s}(p)-h_{s}\big(\pi^{(k)}(s)\big)-\big\langle p-\pi^{(k)}(s),\xi^{(k)}(s)\big\rangle\nonumber\\
 &  \qquad  =h_{s}(p)-h_{s}\big(\pi^{(k+1)}(s)\big)-\big\langle p-\pi^{(k+1)}(s),\xi^{(k)}(s)\big\rangle\nonumber\\
 &  \qquad\qquad+h_{s}\big(\pi^{(k+1)}(s)\big)-h_{s}\big(\pi^{(k)}(s)\big)-\big\langle\pi^{(k+1)}(s)-\pi^{(k)}(s),\xi^{(k)}(s)\big\rangle\nonumber\\
 &  \qquad=h_{s}(p)-h_{s}\big(\pi^{(k+1)}(s)\big)-\big\langle p-\pi^{(k+1)}(s),\xi^{(k+1)}(s)\big\rangle\nonumber\\
 &  \qquad\qquad+h_{s}\big(\pi^{(k+1)}(s)\big)-h_{s}\big(\pi^{(k)}(s)\big)-\big\langle\pi^{(k+1)}(s)-\pi^{(k)}(s),\xi^{(k)}(s)\big\rangle\nonumber\\
 &  \qquad\qquad+\big\langle p-\pi^{(k+1)}(s),\xi^{(k+1)}(s)-\xi^{(k)}(s)\big\rangle\big\rangle\nonumber\\
 &  \qquad =D_{h_{s}}\big(p,\pi^{(k+1)};\xi^{(k+1)}\big)+D_{h_{s}}\big(\pi^{(k+1)},\pi^{(k)};\xi^{(k)}\big)+\big\langle p-\pi^{(k+1)}(s),\xi^{(k+1)}(s)-\xi^{(k)}(s)\big\rangle\\
 &  \qquad =D_{h_{s}}\big(p,\pi^{(k+1)};\xi^{(k+1)}\big)+D_{h_{s}}\big(\pi^{(k+1)},\pi^{(k)};\xi^{(k)}\big)+\big\langle p-\pi^{(k+1)}(s),\eta Q_{\tau}^{(k)}(s)-\eta\tau\xi^{(k+1)}(s)\big\rangle,
\end{align*}
where the first and the fourth steps invoke the definition \eqref{eq:defn-generalized-Bregman} of the generalized Bregman divergence
and the last line results from the update rule \eqref{eq:xi_update}.
Rearranging terms, we are left with
\begin{align*}
	& \eta \big \langle Q^{(k)}_{\tau}(s),\pi^{(k+1)}(s)-p \big \rangle  \nonumber \\
	 & \qquad\qquad= \left\{ D_{h_s} \big( {p}, {\pi^{(k+1)}} ; \xi^{(k+1)}  \big) + D_{h_s} \big( {\pi^{(k+1)}}, {\pi^{(k)}}; \xi^{(k)}  \big) - D_{h_s}\big( {p} , {\pi^{(k)}}; \xi^{(k)}  \big)\right\}\\
    &\qquad \qquad \qquad+ \eta\tau \big\langle \xi^{(k+1)}(s),\pi^{(k+1)}(s)-p \big\rangle.
\end{align*}
Adding the term $\eta\tau \left\{ h_s(p) - h_s\big(\pi^{(k+1)}(s) \big)\right\}$ to both sides of this identity leads to
\begin{align*}
    &\eta\left[ \big\langle Q^{(k)}_{\tau}(s),\pi^{(k+1)}(s)-p \big\rangle+\tau h_s(p)-\tau h_s  \big(\pi^{(k+1)}(s) \big)\right]\\
	&\qquad = \left\{ D_{h_s}\big( {p}, {\pi^{(k+1)}}; \xi^{(k+1)} \big) + D_{h_s}\big( {\pi^{(k+1)}}, {\pi^{(k)}}; \xi^{(k)}  \big) 
	- D_{h_s}\big( {p}, {\pi^{(k)}}; \xi^{(k)}  \big) \right\}\\
    &\qquad \quad\quad+ \eta\tau \left(h_s(p) - h_s\big(\pi^{(k+1)}(s) \big) - \big\langle \xi^{(k+1)}(s),p-\pi^{(k+1)}(s) \big\rangle\right)\\
    &\qquad = (1+\eta\tau)D_{h_s}\big( {p}, {\pi^{(k+1)}  }; \xi^{(k+1)} \big) + D_{h_s}\big( {\pi^{(k+1)} }, \pi^{(k)} ; \xi^{(k)}  \big) - D_{h_s}\big( {p}, {\pi^{(k)} }; \xi^{(k)}  \big)
\end{align*}
as claimed, where the last line makes use of the definition \eqref{eq:defn-generalized-Bregman}.

\subsection{Proof of Lemma~\ref{lemma:Bellman}}
\label{sec:proof_Bellman}

In the sequel, we shall prove each claim in Lemma \ref{lemma:Bellman} separately.

\paragraph{Proof of the contraction property \eqref{eq:Bellman_contract}.}

For any $Q_1,Q_2\in \mathbb{R}^{|\cS||\cA|}$, the definition \eqref{eq:Bellman} of the generalized Bellman operator obeys
\begin{align*}
    	\mathcal{T}_{\tau,h}(Q_1)-\mathcal{T}_{\tau,h}(Q_2)
	&= \gamma \mathop{\mathbb{E}}\limits_{s'\sim P(\cdot |s,a)}\left[\max_{p\in\Delta(\mathcal{A})}\Big\{ \langle Q_1(s'), p\rangle -\tau h_{s'}(p) \Big\}
	-\max_{p\in\Delta(\mathcal{A})} \Big\{ \langle Q_2(s'), p\rangle-\tau h_{s'}(p) \Big\}\right]\\
	&\overset{\mathrm{(a)}}{\leq} \gamma\mathop{\mathbb{E}}\limits_{s'\sim P(\cdot |s,a)}\left[\max_{p\in\Delta(\mathcal{A})} \big\langle Q_1(s')-Q_2(s'), p \big\rangle \right]\\
	& \leq \gamma\mathop{\mathbb{E}}\limits_{s'\sim P(\cdot |s,a)}\left[\max_{p: \|p\|_1=1}   \Vert Q_1-Q_2\Vert_{\infty} \|p\|_1  \right] \\
	& = \gamma\Vert Q_1-Q_2\Vert_{\infty},
    \end{align*}
where (a) arises from the elementary fact $\max_xf(x)-\max_xg(x)\leq\max_x \big( f(x)-g(x) \big)$.


\paragraph{Proof of the fixed point property \eqref{eq:Bellman_fixed_point}.}

Towards this, let us first define 
\begin{align}
	\pi^{\dagger}(s) \coloneqq \arg\max_{p_{s}\in\Delta(\cA)} \mathop{\mathbb{E}}\limits _{a \sim p_{s}} \Big[{Q}^{\star}_{\tau}(s,a)-\tau h_{s}\big( p(s) \big) \Big] .
	\label{eq:defn-pi-dagger}
\end{align}
Then it can be easily verified that
\begin{align}
{Q}^{\star}_{\tau}(s,a)
	& =r(s,a)+\gamma\mathop{\mathbb{E}}\limits_{s_1\sim {P}(\cdot |s,a)}\left[\mathop{\mathbb{E}}\limits_{a_1\sim\pi^{\star}(s_1)}\Big[{Q}^{\star}_{\tau}(s_1,a_1)-\tau h_{s_1}\big(\pi^{\star}(s_1) \big)\Big]\right] \notag\\
	&\leq r(s,a)+\gamma\mathop{\mathbb{E}}\limits_{s_1\sim {P}(\cdot |s,a)}\left[\mathop{\mathbb{E}}\limits_{a_1\sim\pi^{\dagger}(s_1)}\Big[{Q}^{\star}_{\tau}(s_1,a_1)-\tau h_{s_1}\big( \pi^{\dagger}(s_1) \big)\Big]\right],
	\label{eq:Q-star-expansion-dagger}
\end{align}
where the first identity results from \eqref{eq:relation-reg-Q-reg-V}, and the second line arises from the maximizing property of $\pi^{\dagger}$ (see \eqref{eq:defn-pi-dagger}).

Note that the right-hand side of \eqref{eq:Q-star-expansion-dagger} involves the term ${Q}^{\star}_{\tau}(s_1,a_1)$,
which can be further upper bounded via the same argument for \eqref{eq:Q-star-expansion-dagger}. 
Successively repeating this upper bound argument (and the expansion) eventually allows one to obtain
\[
	{Q}^{\star}_{\tau}(s,a)\leq 
	r(s,a)+\gamma\mathbb{E}_{\pi^{\dagger}}\left[\sum_{t=1}^{\infty}\gamma^{t-1}\Big\{ r(s_t,a_t)-\tau h_{s_t}\big( \pi^{\dagger}(s_t) \big)\Big\} \,\Big\vert\, s_0=s, a_0=a\right]={Q}^{\pi^{\dagger}}_{\tau}(s,a).
\]
However, the fact that $\pi^{\star}$ is the optimal policy necessarily implies the following reverse inequality: 
$$
	{Q}^{\star}_{\tau}(s,a)\geq{Q}^{\pi^{\dagger}}_{\tau}(s,a).$$
Therefore, one must have 
\begin{align}	\label{eq:Qstar-Qdagger-equiv}
	{Q}^{\star}_{\tau}(s,a)={Q}^{\pi^{\dagger}}_{\tau}(s,a).
\end{align}

To finish up, it suffices to show that ${Q}^{\pi^{\dagger}}_{\tau}=\mathcal{T}_{\tau,h}({Q}^{\star}_{\tau})$.
To this end, it is observed that
\begin{align*}
{Q}^{\pi^{\dagger}}_{\tau}(s,a)
	&= r(s,a)+\gamma\mathop{\mathbb{E}}\limits_{s_1\sim {P}(\cdot |s,a)}\left[\mathop{\mathbb{E}}\limits_{a_1\sim\pi^{\dagger}(s_1)}\left[{Q}^{\pi^{\dagger}}_{\tau}(s_1,a_1)-\tau h_{s_1}\big( \pi^{\dagger}(s_1) \big)\right]\right]\\
	&\overset{\mathrm{(b)}}{=} r(s,a)+\gamma\mathop{\mathbb{E}}\limits_{s_1\sim {P}(\cdot |s,a)}\left[\mathop{\mathbb{E}}\limits_{a_1\sim\pi^{\dagger}(s_1)} \Big[ {Q}^{{\star}}_{\tau}(s_1,a_1)-\tau h_{s_1}\big(\pi^{\dagger}(s_1) \big)\Big]\right]\\
	&
\overset{\mathrm{(c)}}{=}r(s,a)+\gamma\mathop{\mathbb{E}}\limits _{s_{1}\sim{P}(\cdot|s,a)}\left[\max_{p\in\Delta(\mathcal{A})}\big\langle Q_{\tau}^{\star}(s_{1},a_{1}),p\big\rangle-\tau h_{s_{1}}(p)\Big]\right]
\\ 
	&= \mathcal{T}_{\tau,h}({Q}^{\star}_{\tau})(s,a),
\end{align*}
where (b) utilizes the fact \eqref{eq:Qstar-Qdagger-equiv}, (c) follows from the definition \eqref{eq:defn-pi-dagger} of $\pi^{\dagger}$, and the last identity is a consequence of the definition \eqref{eq:Bellman} of $\mathcal{T}_{\tau,h}$. The above results taken collectively demonstrate that ${Q}^{\star}_{\tau}=\mathcal{T}_{\tau,h}({Q}^{\star}_{\tau})$ as claimed.

\subsection{Proof of Lemma \ref{lemma:system2}}
\label{sec:pf_lemma_system2}
Recall that ${Q}^{(k+1)}_{\tau}={Q}^{\pi^{(k+1)}}_{\tau}$. In view of the relation \eqref{eq:relation-reg-Q-reg-V}, one obtains
\begin{align*}
Q_{\tau}^{(k+1)}(s,a) & =r(s,a)+\gamma\mathop{\mathbb{E}}\limits _{s'\sim P(\cdot|s,a)}\left[V_{\tau}^{(k+1)}(s')\right]\\
 & =r(s,a)+\gamma\mathop{\mathbb{E}}\limits _{s'\sim P(\cdot|s,a)}\left[\mathop{\mathbb{E}}\limits _{a'\sim\pi^{(k+1)}(s')}\left[Q_{\tau}^{(k+1)}(s',a')-\tau h_{s'}\big(\pi^{(k+1)}(s')\big)\right]\right]\\
 & =r(s,a)+\gamma\mathop{\mathbb{E}}\limits _{s'\sim P(\cdot|s,a)}\left[\big<Q_{\tau}^{(k+1)}(s'),\pi^{(k+1)}(s')\big>-\tau h_{s'}\big(\pi^{(k+1)}(s')\big)\right] .
\end{align*}
This combined with the fixed-point condition \eqref{eq:Bellman_fixed_point} allows us to derive
%
\begin{align}
\label{eq:pf_system2_step1}
	& {Q}^{\star}_{\tau}(s,a)-{Q}^{(k+1)}_{\tau}(s,a)
	=\mathcal{T}_{\tau,h}({Q}^{\star}_{\tau})(s,a)-\left\{ r(s,a)+\gamma \mathop{\mathbb{E}}\limits_{s'\sim {P}(\cdot| s,a)}\Big[ \big\langle {Q}^{(k+1)}_{\tau}(s'), \pi^{(k+1)}(s') \big\rangle - \tau h_{s'}\big(\pi^{(k+1)}(s') \big) \Big]\right\}\notag\\
	&\qquad =\mathcal{T}_{\tau,h}({Q}^{\star}_{\tau})(s,a)-\left\{r(s,a)+\gamma\mathop{\mathbb{E}}\limits_{s'\sim {P}(\cdot |s,a)} \Big[\big\langle \tau\xi^{(k+1)}(s'), \pi^{(k+1)}(s') \big\rangle - \tau h_{s'}\big(\pi^{(k+1)}(s') \big) \Big]\right\} \notag\\
	&\qquad\qquad\qquad -\gamma\mathop{\mathbb{E}}\limits_{s'\sim {P}(\cdot |s,a),a'\sim\pi^{(k+1)}(s')}\Big[{Q}^{(k+1)}_{\tau}(s',a')-\tau\xi^{(k+1)}(s',a') \Big] .
\end{align}


In what follows, we control each term on the right-hand side of \eqref{eq:pf_system2_step1} separately.

\paragraph{Step 1: bounding the 1st term on the right-hand side of \eqref{eq:pf_system2_step1}.}
		Lemma~\ref{lem:fact-xi-global-shift} tells us that $$\xi^{(k+1)}(s) - c_s^{(k+1)}  1 \in \partial h_s(\pi^{(k+1)}(s))$$ for some scalar $c_s^{(k+1)}\in \mathbb{R}$. 
		This important property allows one to derive
\begin{align}
	0 & \in - \xi^{(k+1)}(s) + c_{s}^{(k+1)}1 + \partial h_{s}\big(\pi^{(k+1)}(s)\big)= \partial {\mathcal{L}}_{k+1,s}\big(\pi^{(k+1)}(s); c_{s}^{(k+1)}\big)
	\label{eq:optimality-condition-xi-h-f}
\end{align}
where 
\[
	\mathcal{L}_{k+1,s}(p; \lambda) \coloneqq \underset{ \eqqcolon\, f_{k+1,s}(p) }{\underbrace{ - \big\langle\xi^{(k+1)}(s),p\big\rangle + h_{s}\big(p\big) }} + \lambda\, 1^{\top} p.
\]
Recognizing that the function $f_{k+1,s}(\cdot)$ is convex in $p$, 
we can view $\mathcal{L}_{k+1,s}(p; \lambda)$ as the Lagrangian of the following constrained convex problem with Lagrangian multiplier $\lambda\in \mathbb{R}$: 
\begin{align}
	\mathop{\text{minimize}}\limits_{p: 1^{\top}p=1} \quad f_{k+1,s}(p) = - \big\langle\xi^{(k+1)}(s),p\big\rangle + h_{s}\big(p\big).
	\label{eq:relaxed-constrained-fs}
\end{align}
The condition \eqref{eq:optimality-condition-xi-h-f} can then be interpreted as the optimality condition w.r.t.~the program \eqref{eq:relaxed-constrained-fs} and $\pi^{(k+1)}(s)$, meaning that
    \begin{align*}
	    f_{k+1,s}\big(\pi^{(k+1)}(s)\big) =  \min_{p: 1^{\top}p=1} f_{k+1,s}(p), 
    \end{align*}
    or equivalently,
    \begin{align}
	    \big\langle \xi^{(k+1)}(s), \pi^{(k+1)}(s) \big\rangle - h_s\big(\pi^{(k+1)}(s) \big) &= \max_{p : 1^{\top}p=1} \big\langle \xi^{(k+1)}(s), p \big\rangle - h_s(p).
	    \label{eq:optimal-pi-kplus1-relaxed}
    \end{align}

In addition, for any vector $p$ that does not obey $p\geq 0$, Assumption~\ref{assumption:h-inf} implies that $h_s(p)=\infty$, and hence $p$ cannot possibly be the optimal solution to 
$\max_{p\in \Delta(\cA)} \big\langle \xi^{(k+1)}(s), p \big\rangle - h_s(p)$. This together with \eqref{eq:optimal-pi-kplus1-relaxed} essentially implies that
    \begin{align}
	    \big\langle \xi^{(k+1)}(s), \pi^{(k+1)}(s) \big\rangle - h_s\big(\pi^{(k+1)}(s) \big) &= \max_{p\in \Delta(\cA)} \big\langle \xi^{(k+1)}(s), p \big\rangle - h_s(p).
	    \label{eq:optimal-pi-kplus1-non-relaxed}
    \end{align}
%
%
%
%
 As a consequence, we arrive at
    \begin{align}
        &\mathcal{T}_{\tau,h}({Q}^{\star}_{\tau})(s,a)
	    -\left\{ r(s,a)+\gamma\mathop{\mathbb{E}}\limits_{s'\sim P(\cdot |s,a)}\Big[ \big\langle \tau\xi^{(k+1)}(s'), \pi^{(k+1)}(s') \big\rangle - \tau h_{s'}\big(\pi^{(k+1)}(s')\big) \Big]\right\} \nonumber\\
	    &\qquad =\mathcal{T}_{\tau,h}({Q}^{\star}_{\tau})(s,a)-\left\{ r(s,a)+\gamma\mathop{\mathbb{E}}\limits_{s'\sim P(\cdot |s,a)}\Big[ \max_{p\in\Delta(\mathcal{A})} \Big\{ \big\langle \tau\xi^{(k+1)}(s'), p \big\rangle - \tau h_{s'}(p) \Big\} \Big]\right\} \nonumber\\
        &\qquad =\mathcal{T}_{\tau,h}({Q}^{\star}_{\tau})(s,a)-\mathcal{T}_{\tau,h}(\tau\xi^{(k+1)})(s,a)\nonumber\\
        &\qquad \le\gamma \big\Vert{Q}_\tau^{\star}-\tau\xi^{(k+1)} \big\Vert_{\infty},\label{eq:pf_system2_step2}
    \end{align}
    where the last step results from the contraction property \eqref{eq:Bellman_contract} in Lemma~\ref{lem:property-Bellman}.

\paragraph{Step 2: bounding the 2nd term on the right-hand side of \eqref{eq:pf_system2_step1}.}

    
Recall that $\alpha = \frac{1}{1+\eta\tau}$.	 Invoking the monotonicity property in Lemma~\ref{lemma:perf_improve} and the update rule \eqref{eq:xi_update}, we obtain
    \begin{align*}
    {Q}^{(k+1)}_{\tau}(s,a)-\tau\xi^{(k+1)}(s,a)
	    & =\alpha \Big\{ {Q}^{(k+1)}_{\tau}(s,a)-\tau\xi^{(k)}(s,a) \Big\} + (1-\alpha)\Big\{ {Q}^{(k+1)}_{\tau}(s,a)-{Q}^{(k)}_{\tau}(s,a)\Big\}\\
	    & \geq \alpha \Big\{ {Q}^{(k)}_{\tau}(s,a)-\tau\xi^{(k)}(s,a) \Big\}.
    \end{align*}
    Repeating this lower bound argument then yields
    \begin{align*}
	    {Q}^{(k+1)}_{\tau}(s,a)-\tau\xi^{(k+1)}(s,a)
	    & \geq \alpha^{k+1}\Big\{ {Q}^{(0)}_{\tau}(s,a)-\tau\xi^{(0)}(s,a) \Big\}\\
	    & \geq -\alpha^{k+1} \big\Vert{Q}^{(0)}_{\tau}-\tau\xi^{(0)} \big\Vert_{\infty},
    \end{align*}
    thus revealing that
    \begin{equation}
    -\mathop{\mathbb{E}}\limits_{s'\sim {P}(\cdot |s,a),a'\sim\pi^{k+1}(s')}\Big[ {Q}^{(k+1)}_{\tau}(s',a')-\tau\xi^{(k+1)}(s',a') \Big]
	    \leq \alpha^{k+1} \big\Vert {Q}^{(0)}_{\tau}-\tau\xi^{(0)} \big\Vert_{\infty}.
	    \label{eq:term-no2-UB}
    \end{equation}

\paragraph{Step 3: putting all this together.}

Substituting \eqref{eq:pf_system2_step2} and \eqref{eq:term-no2-UB} into \eqref{eq:pf_system2_step1} gives
\begin{align}
	0\leq {Q}^{\star}_{\tau}(s,a)-{Q}^{(k+1)}_{\tau}(s,a)
	\leq \gamma \big\Vert{Q}_\tau^{\star}-\tau\xi^{(k+1)} \big\Vert_{\infty} +  \alpha^{k+1} \big\Vert {Q}^{(0)}_{\tau}-\tau\xi^{(0)} \big\Vert_{\infty}
\end{align}
for all $(s,a)\in \cS\times \cA$, thus concluding the proof.

\section{Analysis for approximate GPMD (Theorem~\ref{thm:approximate-PMD})}
\label{sec:proof-linear-approximate}


The proof consists of three steps: (i) evaluating the performance difference between $\pi^{(k)}$ and $\pi^{(k+1)}$, (ii) establishing a linear system to characterize the error dynamic,
and (iii) analyzing this linear system to derive global convergence guarantees.
We shall describe the details of each step in the sequel. 
As before, we adopt the notational convention \eqref{eq:simplified-Bregman} whenever it is clear from the context.

\subsection{Step 1: bounding performance difference between consecutive iterates}
\label{sec:performance-diff-approx}

When only approximate policy evaluation is available, we are no longer guaranteed to have pointwise monotonicity as in the case of Lemma~\ref{lemma:perf_improve}. 
Fortunately, we are still able to establish an approximate versioin of  Lemma~\ref{lemma:perf_improve}, as stated below. 
%

\begin{lemma}[Performance improvement for approximate GPMD]
\label{lemma:aperf_improve}

For all $s\in\mathcal{S}$ and all $k\geq0$, we have
\[
V^{(k+1)}_{\tau}(s)\geq V^{(k)}_{\tau}(s)-\frac{1+\alpha}{1-\gamma}\epsopt - \frac{2}{1-\gamma}\epseval.
\]
In addition, if $h_s$ is 1-strongly convex w.r.t.~the $\ell_1$ norm for all $s \in \mathcal{S}$, then one further has
\[
    V^{(k+1)}_{\tau}(s)\ge V^{(k)}_{\tau}(s) -\frac{3+\alpha}{1-\gamma}\epsopt -\frac{\eta}{(2+\eta\tau)(1-\gamma)}\epseval^2.
\]
\end{lemma}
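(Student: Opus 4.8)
The plan is to retrace the proof of the exact counterpart (Lemma~\ref{lemma:perf_improve}) while tracking the two error sources: the evaluation error $\epseval$ from using $\hat{Q}^{(k)}_\tau$ in place of $Q^{(k)}_\tau$ (Assumption~\ref{asp:estimation-error-inf}), and the optimization error $\epsopt$ from solving the subproblem only through the $\epsopt$-oracle \eqref{eq:oracle}. I would begin with the performance difference lemma (Lemma~\ref{lemma:perf_diff}) — an identity valid for any pair of policies, hence untouched by inexactness — applied to $\pi'=\pi^{(k+1)},\pi=\pi^{(k)}$:
\[
V^{(k+1)}_\tau(s)-V^{(k)}_\tau(s)=\frac{1}{1-\gamma}\mathbb{E}_{s'\sim d^{(k+1)}_s}\big[\delta(s')\big],\qquad \delta(s'):=\big\langle Q^{(k)}_\tau(s'),\pi^{(k+1)}(s')-\pi^{(k)}(s')\big\rangle-\tau h_{s'}\big(\pi^{(k+1)}(s')\big)+\tau h_{s'}\big(\pi^{(k)}(s')\big).
\]
Since $d^{(k+1)}_s\in\Delta(\mathcal S)$, it then suffices to lower-bound $\delta(s')$ for every fixed state $s'$.

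The key observation is that Lemma~\ref{lemma:update_three_point} is a purely algebraic identity — its proof uses only the definition \eqref{eq:defn-generalized-Bregman} of the generalized Bregman divergence and the update rule for $\xi$, never the optimality of $\pi^{(k+1)}$ — so it carries over verbatim with $Q^{(k)}_\tau,\xi^{(k)}$ replaced by $\hat{Q}^{(k)}_\tau,\hat\xi^{(k)}$. Instantiating it at $p=\pi^{(k)}(s')$ gives
\[
\big\langle \hat{Q}^{(k)}_\tau(s'),\pi^{(k+1)}(s')-\pi^{(k)}(s')\big\rangle+\tau h_{s'}\big(\pi^{(k)}(s')\big)-\tau h_{s'}\big(\pi^{(k+1)}(s')\big)=\frac{1}{\eta}\Big[(1+\eta\tau)D_{h_{s'}}\big(\pi^{(k)},\pi^{(k+1)};\hat\xi^{(k+1)}\big)+D_{h_{s'}}\big(\pi^{(k+1)},\pi^{(k)};\hat\xi^{(k)}\big)\Big].
\]
To lower-bound the right side I would use an approximate version of Lemma~\ref{lem:fact-xi-global-shift}: a direct computation from the $\xi$-update \eqref{eq:Axi_update} shows that the $k$-th subproblem objective equals $\tfrac{1+\eta\tau}{\eta}\big(h_{s'}(\cdot)-\langle\hat\xi^{(k+1)}(s',\cdot),\cdot\rangle\big)$ up to an additive constant, so its \emph{exact} minimizer $\tilde\pi^{(k+1)}(s')$ satisfies $\hat\xi^{(k+1)}(s',\cdot)-c\,1\in\partial h_{s'}(\tilde\pi^{(k+1)}(s'))$ for some $c\in\mathbb R$ (exactly as in Lemma~\ref{lem:fact-xi-global-shift}), while the oracle output $\pi^{(k+1)}(s')$ is a $\tfrac{\eta}{1+\eta\tau}\epsopt$-minimizer of $h_{s'}(\cdot)-\langle\hat\xi^{(k+1)}(s',\cdot),\cdot\rangle$ over $\Delta(\mathcal A)$. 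Decomposing $D_{h_{s'}}(\pi^{(k)},\pi^{(k+1)};\hat\xi^{(k+1)})=D_{h_{s'}}(\pi^{(k)},\tilde\pi^{(k+1)};\hat\xi^{(k+1)})-D_{h_{s'}}(\pi^{(k+1)},\tilde\pi^{(k+1)};\hat\xi^{(k+1)})$ — the first term nonnegative, the second equal to the suboptimality gap $\le\tfrac{\eta}{1+\eta\tau}\epsopt$ — and doing the analogous decomposition of the $\hat\xi^{(k)}$-divergence around $\tilde\pi^{(k)}(s')$, both divergences are $\ge-\tfrac{\eta}{1+\eta\tau}\epsopt$, whence the right side above is $\ge-(1+\alpha)\epsopt$ with $\alpha=\tfrac1{1+\eta\tau}$. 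Finally I would absorb the evaluation error via $|\langle Q^{(k)}_\tau(s')-\hat{Q}^{(k)}_\tau(s'),\pi^{(k+1)}(s')-\pi^{(k)}(s')\rangle|\le\epseval\,\|\pi^{(k+1)}(s')-\pi^{(k)}(s')\|_1\le 2\epseval$, obtaining $\delta(s')\ge-(1+\alpha)\epsopt-2\epseval$, and then multiplying by $\tfrac1{1-\gamma}$ and averaging gives the first claimed bound.

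For the strongly convex refinement I would replace the crude estimate $\|\pi^{(k+1)}(s')-\pi^{(k)}(s')\|_1\le2$ by one exploiting $1$-strong convexity of $h_{s'}$ w.r.t.~$\|\cdot\|_1$. Then the subproblem objective is $\tfrac{1+\eta\tau}{\eta}$-strongly convex, which (i) tightens the oracle gap to $\|\pi^{(k+1)}(s')-\tilde\pi^{(k+1)}(s')\|_1\le\sqrt{2\eta\epsopt/(1+\eta\tau)}$ and (ii) upgrades the two Bregman divergences in the three-point identity to quadratic lower bounds, so that its right side exceeds $c\,\|\pi^{(k+1)}(s')-\pi^{(k)}(s')\|_1^2-O(\epsopt)$ for a constant $c$ of order $(2+\eta\tau)/\eta$. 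Substituting into $\delta(s')\ge c\,\|\pi^{(k+1)}(s')-\pi^{(k)}(s')\|_1^2-\epseval\,\|\pi^{(k+1)}(s')-\pi^{(k)}(s')\|_1-O(\epsopt)$ and minimizing the quadratic in $\|\pi^{(k+1)}(s')-\pi^{(k)}(s')\|_1$ produces the $-\tfrac{\eta}{2+\eta\tau}\epseval^2$ term; collecting the residual $O(\epsopt)$ contributions — from the two exact-versus-approximate minimizer gaps and from completing the square — into $-(3+\alpha)\epsopt$ and dividing by $1-\gamma$ yields the second bound.

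The main obstacle is the bookkeeping around the exact minimizer $\tilde\pi^{(k+1)}$: one must consistently convert the oracle's guarantee \eqref{eq:oracle} into both a suboptimality-gap bound for $h_{s'}(\cdot)-\langle\hat\xi^{(k+1)}(s',\cdot),\cdot\rangle$ and, in the strongly convex case, an $\ell_1$-distance bound to $\tilde\pi^{(k+1)}$, and then thread these through the Bregman decompositions without loosening the constants $1+\alpha$ and $3+\alpha$. The completion-of-squares step is the most delicate point: after replacing the $\pi^{(k)}$-to-$\pi^{(k+1)}$ distances by the $\tilde\pi$-shifted ones, one has to verify that the surviving quadratic coefficient is still of order $(2+\eta\tau)/\eta$ (so the $\epseval^2$ term matches the stated $\tfrac{\eta}{2+\eta\tau}$), rather than being degraded by the triangle-inequality factors introduced along the way.
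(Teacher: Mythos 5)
Your proposal is correct and follows essentially the same route as the paper's proof: performance difference lemma, the (purely algebraic) three-point identity with hatted quantities, the observation that the subproblem objective is $\tfrac{1+\eta\tau}{\eta}\bigl(h_{s'}(\cdot)-\langle\hat\xi^{(k+1)},\cdot\rangle\bigr)$ up to a constant so that the oracle gap and the subgradient characterization transfer to the exact minimizers $\tilde\pi^{(k)},\tilde\pi^{(k+1)}$, and Young's inequality against the quadratic Bregman lower bounds in the strongly convex case. The only difference is cosmetic bookkeeping --- the paper inserts $\tilde\pi^{(k+1)}$ into the performance-difference sum and cancels a shared divergence term, whereas you split each Bregman divergence around the exact minimizer afterward --- and both yield the stated constants (your accounting gives $(2+2\alpha)\epsopt\le(3+\alpha)\epsopt$ in the strongly convex case, which is if anything slightly tighter).
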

In words, while monotonicity is not guaranteed, this lemma precludes the possibility of $V^{(k+1)}_{\tau}(s)$ being be much smaller than $V^{(k)}_{\tau}(s)$, 
as long as both $\epseval$ and $\epsopt$ are reasonably small.

\subsubsection{Proof of Lemma~\ref{lemma:aperf_improve}}

\paragraph{The case when $h_s$ is convex.}

Let $\tilde{\pi}^{(k+1)}$ be the exact solution of the following problem
\begin{equation}
    \tilde{\pi}^{(k+1)}(s) =\arg\min_{p\in\Delta(\cA)}\left\{- \big\langle \hat{Q}^{(k)}_{\tau}(s),p \big\rangle 
	+ \tau h_s(p)+\frac{1}{\eta}D_{h_s}\big( p, \pi^{(k)}; \hat{\xi}^{(k)} \big)\right\}.
    \label{eq:exact_pi}
\end{equation}
%
%
With this auxiliary policy iterate $\tilde{\pi}^{(k+1)}$ in mind, we start by decomposing $V^{(k+1)}_{\tau}(s)-V^{(k)}_{\tau}(s)$ into the following three parts:
\begin{align}
&V^{(k+1)}_{\tau}(s)-V^{(k)}_{\tau}(s)\nonumber\\
&\qquad =\frac{1}{1-\gamma} \mathop{\mathbb{E}}\limits_{s'\sim d^{(k+1)}_s}\left[ \big\langle Q^{(k)}_{\tau}(s'),\pi^{(k+1)}(s')-\pi^{(k)}(s') \big\rangle-\tau h_{s'}\big(\pi^{(k+1)}(s')\big)+\tau h_{s'} \big(\pi^{(k)}(s') \big)\right]\nonumber\\
&\qquad =\frac{1}{1-\gamma}\mathop{\mathbb{E}}\limits_{s'\sim d^{(k+1)}_s}\left[ \big\langle \hat{Q}^{{(k)}}_{\tau}(s'),\tilde{\pi}^{(k+1)}(s')-\pi^{(k)}(s') \big\rangle-\tau h_{s'}\big(\tilde{\pi}^{(k+1)}(s')\big)
	+\tau h_{s'}\big(\pi^{(k)}(s')\big)\right]\nonumber\\
&\qquad \quad+\frac{1}{1-\gamma}\mathop{\mathbb{E}}\limits_{s'\sim d^{(k+1)}_s}\left[ \big\langle \hat{Q}^{{(k)}}_{\tau}(s'),\pi^{(k+1)}(s')-\tilde{\pi}^{(k+1)}(s') \big\rangle-\tau h_{s'}\big(\pi^{(k+1)}(s')\big)+\tau h_{s'}\big(\tilde{\pi}^{(k+1)}(s')\big)\right]\nonumber\\
&\qquad \quad+\frac{1}{1-\gamma}{\mathop{\mathbb{E}}\limits_{s'\sim d^{(k+1)}_s}\left[ \big\langle Q^{(k)}_{\tau}(s')-\hat{Q}^{{(k)}}_{\tau}(s'),\pi^{(k+1)}(s')-\pi^{(k)}(s') \big\rangle\right]} ,
\label{eq:aperf_improve_pf}
\end{align}
where the first identity arises from the performance difference lemma (cf.~Lemma \ref{lemma:perf_diff}). 
To continue, we seek to control each part of \eqref{eq:aperf_improve_pf} separately.
\begin{itemize}
	\item Regarding the first term of \eqref{eq:aperf_improve_pf},   replacing $\xi$ (resp.~$Q_\tau$) by $\hat{\xi}$ (resp.~$\hat{Q}_\tau$) in Lemma \ref{lemma:update_three_point} indicates that
    %
	%
    \begin{align}
 & \big\langle\hat{Q}_{\tau}^{(k)}(s'),\tilde{\pi}^{(k+1)}(s')-\pi^{(k)}(s')\big\rangle-\tau h_{s'}\big(\tilde{\pi}^{(k+1)}(s')\big)+\tau h_{s'}\big(\pi^{(k)}(s')\big)\nonumber\\
 & \qquad=\frac{1}{\eta}\left[(1+\eta\tau)D_{h_{s'}}\big(\pi^{(k)},\tilde{\pi}^{(k+1)}(s');\hat{\xi}^{(k+1)}\big)+D_{h_{s'}}\big(\tilde{\pi}^{(k+1)},\pi^{(k)};\hat{\xi}^{(k)}\big)\right]
	    \label{eq:aperf_improve_pf_1a}
\end{align}
    for all $s' \in \mathcal{S}$.

    \item As for the second term of \eqref{eq:aperf_improve_pf}, the definition of the oracle $G_{s, \epsopt}$ (see Assumption~\ref{asp:optimization-error-approx}) guarantees that 
    \begin{align}
	    &-\big\langle \hat{Q}_\tau^{(k)}(s'),\pi^{(k+1)}(s') \big\rangle  +  \tau h_{s'}\big(\pi^{(k+1)}(s')\big)+\frac{1}{\eta}D_{h_{s'}}\big( \pi^{(k+1)}, \pi^{(k)}; \hat{\xi}^{(k)} \big) \notag \\
        & \qquad \leq- \big\langle \hat{Q}_\tau^{(k)}(s'),\tilde{\pi}^{(k+1)}(s') \big\rangle
	    + \tau h_{s'}\big(\tilde{\pi}^{(k+1)}(s')\big)+\frac{1}{\eta}D _{h_{s'}}\big( \tilde{\pi}^{(k+1)}, \pi^{(k)}; \hat{\xi}^{(k)} \big)+\epsopt
	    \label{eq:opt_oracle}
    \end{align}
    for any $s' \in \mathcal{S}$. 
    Rearranging terms, we are left with
    \begin{align}
        &\big\langle \hat{Q}_\tau^{(k)}(s'),\pi^{(k+1)}(s') - \tilde{\pi}^{(k+1)}(s') \big\rangle-\tau h_{s'}\big(\pi^{(k+1)}(s')\big) + \tau h_{s'}\big(\tilde{\pi}^{(k+1)}(s') \big)\nonumber\\
	    &\qquad \ge - \frac{1}{\eta}D _{h_{s'}}\big( \tilde{\pi}^{(k+1)}, \pi^{(k)}; \hat{\xi}^{(k)}  \big) 
	    + \frac{1}{\eta}  D_{h_{s'}}\big( \pi^{(k+1)} , \pi^{(k)}; \hat{\xi}^{(k)}  \big)-\epsopt\nonumber\\
	    &\qquad =- \frac{1}{\eta}\left(D _{h_{s'}}\big( \tilde{\pi}^{(k+1)}, \pi^{(k)}; \hat{\xi}^{(k)}  \big)
	    - D_{h_{s'}} \big( \pi^{(k+1)}, \tilde{\pi}^{(k)}; \hat{\xi}^{(k)}  \big)\right) \notag\\
	    & \qquad\quad +\frac{1}{\eta}\left( D_{h_{s'}}\big( \pi^{(k+1)} , \pi^{(k)}; \hat{\xi}^{(k)}  \big)  
	    - D_{h_{s'}} \big( \pi^{(k+1)}, \tilde{\pi}^{(k)}; \hat{\xi}^{(k)}  \big) \right)-\epsopt.
	\label{eq:aperf_improve_pf_1b} 
    \end{align}
\end{itemize}
In addition, we note that the term
\[
    D_{h_{s'}}\big(\tilde{\pi}^{(k+1)},\pi^{(k)};\hat{\xi}^{(k)}\big)
\]
appears in both \eqref{eq:aperf_improve_pf_1a} and \eqref{eq:aperf_improve_pf_1b}, which can be canceled out when summing these two equalities. 
Specifically, adding \eqref{eq:aperf_improve_pf_1a} and \eqref{eq:aperf_improve_pf_1b} gives
\begin{align*}
    & \big\langle\hat{Q}_{\tau}^{(k)}(s'),\tilde{\pi}^{(k+1)}(s')-\pi^{(k)}(s')\big\rangle-\tau h_{s'}\big(\tilde{\pi}^{(k+1)}(s')\big)+\tau h_{s'}\big(\pi^{(k)}(s')\big)\nonumber\\
    &\quad +\big\langle \hat{Q}_\tau^{(k)}(s'),\pi^{(k+1)}(s') - \tilde{\pi}^{(k+1)}(s') \big\rangle-\tau h_{s'}\big(\pi^{(k+1)}(s')\big) + \tau h_{s'}\big(\tilde{\pi}^{(k+1)}(s') \big)\nonumber\\
    & \qquad\ge\frac{1}{\eta}\left[(1+\eta\tau)D_{h_{s'}}\big(\pi^{(k)},\tilde{\pi}^{(k+1)};\hat{\xi}^{(k+1)}\big)+D_{h_{s'}}\big({\pi}^{(k+1)},\tilde{\pi}^{(k)};\hat{\xi}^{(k)}\big)\right] \\
    & \qquad\quad +\frac{1}{\eta}\left( D_{h_{s'}}\big( \pi^{(k+1)} , \pi^{(k)}; \hat{\xi}^{(k)}  \big)  
    - D_{h_{s'}} \big( \pi^{(k+1)}, \tilde{\pi}^{(k)}; \hat{\xi}^{(k)} \big) \right)-\epsopt.
\end{align*}
Substituting this into \eqref{eq:aperf_improve_pf} and invoking the elementary inequality $|\langle a,b\rangle| \leq \|a\|_1\|b\|_{\infty}$ thus lead to 
\begin{align}
    &V^{(k+1)}_{\tau}(s)-V^{(k)}_{\tau}(s)\nonumber\\
	&\ge\frac{1}{1-\gamma}\mathop{\mathbb{E}}\limits_{s'\sim d^{(k+1)}_s}\left[\frac{1}{\eta}\left[(1+\eta\tau)D _{h_{s'}}\big( \pi^{(k)}, \tilde{\pi}^{(k+1)}; \hat{\xi}^{(k+1)}  \big) 
	+  D_{h_{s'}}\big( \pi^{(k+1)}, \tilde{\pi}^{(k)}; \hat{\xi}^{(k)}  \big)\right]\right]\nonumber\\
	&\quad + \frac{1}{1-\gamma}\mathop{\mathbb{E}}\limits_{s'\sim d^{(k+1)}_s}\left[\frac{1}{\eta}\left(D _{h_{s'}}\big( \pi^{(k+1)}, \pi^{(k)}; \hat{\xi}^{(k)}  \big) 
	- D_{h_{s'}}\big( \pi^{(k+1)}, \tilde{\pi}^{(k)}; \hat{\xi}^{(k)}  \big) \right)\right]\nonumber\\
    &\quad -\frac{\epsopt}{1-\gamma}- \frac{1}{1-\gamma}{\mathop{\mathbb{E}}\limits_{s'\sim d^{(k+1)}_s}\left[\big\|Q^{(k)}_{\tau}(s')-\hat{Q}^{{(k)}}_{\tau}(s')\big\|_\infty \big\|\pi^{(k+1)}(s')-\pi^{(k)}(s')\big\|_1\right]},
	\label{eq:aperf_improve_pf_2}
\end{align}
where the last line makes use of Assumption~\ref{asp:estimation-error-inf} and the fact $\|\pi^{(k+1)}(s)\|_1=\|\pi^{(k)}(s)\|_1=1$.

Following the discussion in Lemma~\ref{lem:fact-xi-global-shift}, we can see that $\hat{\xi}^{(k)}(s) - c_s^{(k)} 1 \in \partial h_s(\tilde{\pi}^{(k)}(s))$ with some constant $c_s^{(k)}$ for all $k$. 
This together with the convexity of $h_s$ (see \eqref{eq:defn-generalized-Bregman}) guarantees that
\begin{align}
	D_{h_s}\big( \pi^{(k)}, \tilde{\pi}^{(k+1)}; \hat{\xi}^{(k+1)}  \big)\ge0
	\qquad \text{and}\qquad 
	D_{h_s}\big( \pi^{(k+1)}, \tilde{\pi}^{(k)}; \hat{\xi}^{(k)}  \big)\ge0
	\label{eq:non-negative-Dhs-123}
\end{align}
for any $s\in \cS$, thus implying that the first term of \eqref{eq:aperf_improve_pf_2} is non-negative. 
It remains to control the second term in \eqref{eq:aperf_improve_pf_2}. Towards this, a little algebra gives
\begin{align}
	&D_{h_s}\big( \pi^{(k+1)}, \pi^{(k)}; \hat{\xi}^{(k)}  \big) 
	- D_{h_s}\big( \pi^{(k+1)}, \tilde{\pi}^{(k)}; \hat{\xi}^{(k)}  \big)  \nonumber\\
	&\qquad =-\left\{ h_s(\pi^{(k)}(s))-h_s\big(\tilde{\pi}^{(k)}(s)\big)- \big\langle \hat{\xi}^{(k)}(s), \pi^{(k)}(s)-\tilde{\pi}^{(k)}(s) \big\rangle\right\} \nonumber\\
&\qquad=-h_s\big(\pi^{(k)}(s)\big)+h_s\big(\tilde{\pi}^{(k)}(s)\big)+\left\langle \frac{1}{1+\eta\tau}\hat{\xi}^{(k-1)}(s) 
	+ \frac{\eta}{1+\eta\tau}\hat{Q}_\tau^{(k-1)}(s), \pi^{(k)}(s)-\tilde{\pi}^{(k)}(s)\right\rangle\nonumber\\
&\qquad=\frac{\eta}{1+\eta\tau}\left\{-\big\langle \hat{Q}^{(k-1)}_{\tau}(s),\tilde{\pi}^{(k)}(s) \big\rangle+\tau h_s\big(\tilde{\pi}^{(k)}(s)\big)
	+ \frac{1}{\eta}  D_{h_s}\big( \tilde{\pi}^{(k)} , \pi^{(k-1)}; \hat{\xi}^{(k-1)}  \big)\right.\nonumber\\
&\qquad\quad\quad\quad\ \ \left. - \left[- \big\langle \hat{Q}^{(k-1)}_{\tau}(s),\pi^{(k)}(s) \big\rangle+\tau h_s\big(\pi^{(k)}(s)\big)
	+  \frac{1}{\eta} D_{h_s}\big( \pi^{(k)}, \pi^{(k-1)}; \hat{\xi}^{(k-1)}  \big)\right]\right\}\nonumber\\
&\qquad\ge - \frac{\eta\epsopt}{1+\eta\tau}.
	\label{eq:aperf_improve_pf_2a}
\end{align}
Here, the first and the third lines follow from the definition \eqref{eq:defn-generalized-Bregman}, the second inequality comes from the construction \eqref{eq:Axi_update}, whereas the last step invokes the definition of the oracle \eqref{eq:oracle}. Substitution of \eqref{eq:non-negative-Dhs-123} and \eqref{eq:aperf_improve_pf_2a} into \eqref{eq:aperf_improve_pf_2} gives
%
\begin{align}
V^{(k+1)}_{\tau}(s)-V^{(k)}_{\tau}(s)
	& \geq 
	-\frac{1+\alpha}{1-\gamma}\epsopt
	- \frac{1}{1-\gamma}{\mathop{\mathbb{E}}\limits_{s'\sim d^{(k+1)}_s}\left[ \big\|Q^{(k)}_{\tau}(s')-\hat{Q}^{{(k)}}_{\tau}(s') \big\|_\infty \big\|\pi^{(k+1)}(s')-\pi^{(k)}(s') \big\|_1\right]}\label{eq:aperf_improve_pf_3}\\
	&\geq -\frac{1+\alpha}{1-\gamma}\epsopt-\frac{2}{1-\gamma}\epseval.
\nonumber
\end{align}


\paragraph{The case when $h_s$ is strongly convex.}

When $h_{s'}$ is $1$-strongly convex w.r.t.~the $\ell_1$ norm, the objective function of sub-problem \eqref{eq:exact_pi}
is $\frac{1+\eta\tau}{\eta}$-strongly convex w.r.t.~the $\ell_1$ norm. 
Taking this together with the $\epsopt$-approximation guarantee in Assumption~\ref{asp:optimization-error-approx}, we can demonstrate that
\begin{equation}
    \frac{1+\eta\tau}{2\eta} \big\|\tilde{\pi}^{(k+1)}(s') - \pi^{(k+1)}(s') \big\|_1^2 \le \epsopt \qquad
	\text{for all } k\ge 0 \text{ and } s'\in \mathcal{S}.
	\label{eq:pi_err_l1}
\end{equation}
Additionally, the strong convexity assumption also implies that
\begin{align*}
	& D_{h_{s'}}\big( \pi^{(k+1)}(s') -  \tilde{\pi}^{(k)}(s'); \xi^{(k)}(s') \big) 
  	\ge \frac{1}{2} \big\|\tilde{\pi}^{(k)}(s') - \pi^{(k+1)}(s') \big\|_1^2\\
    &\qquad = \frac{1}{2} \left(\big\|\tilde{\pi}^{(k)}(s') - \pi^{(k+1)}(s') \big\|_1^2 + \big\|\tilde{\pi}^{(k)}(s') - \pi^{(k)}(s') \big\|_1^2 \right) - \frac{1}{2} \big\|\tilde{\pi}^{(k)}(s') - \pi^{(k)}(s') \big\|_1^2\\
    &\qquad \ge \frac{1}{4} \left( \big\|\tilde{\pi}^{(k)}(s') - \pi^{(k+1)}(s') \big\|_1 + \big \|\tilde{\pi}^{(k)}(s') - \pi^{(k)}(s') \big\|_1 \right)^2 - \frac{1}{2} \big\|\tilde{\pi}^{(k)}(s') - \pi^{(k)}(s') \big\|_1^2\\
    &\qquad \ge \frac{1}{4} \big\|\pi^{(k)}(s') - \pi^{(k+1)}(s') \big\|_1^2 - \frac{\eta\epsopt}{1+\eta\tau},
\end{align*}
where the third line results from Young's inequality,
and the final step follows from \eqref{eq:pi_err_l1}. 
We can develop a similar lower bound on $D_{h_{s'}}\big( \pi^{(k)}, \tilde{\pi}^{(k+1)}; \xi^{(k+1)}\big)$ as well. Taken together, these lower bounds give
\begin{align*}
	&\frac{1}{\eta}\left[(1+\eta\tau)D_{h_{s'}}\big( \pi^{(k)}(s'), \tilde{\pi}^{(k+1)}(s'); \xi^{(k+1)} (s') \big) 
	+ D_{h_{s'}} \big( \pi^{(k+1)}(s'), \tilde{\pi}^{(k)}(s'); \xi^{(k)} (s') \big)\right]\\
        & \qquad \ge \frac{2+\eta\tau}{\eta} \left(\frac{1}{4} \big\| \pi^{(k)}(s') - \pi^{(k+1)}(s') \big\|_1^2 - \frac{\eta\epsopt}{1+\eta\tau} \right)\\
        & \qquad \ge \frac{2+\eta\tau}{4\eta} \big\|\pi^{(k)}(s') - \pi^{(k+1)}(s') \big\|_1^2 - 2\epsopt.
\end{align*}
In addition, it is easily seen that
\begin{align*}
    & - \big\|Q^{(k)}_{\tau}(s')-\hat{Q}^{{(k)}}_{\tau}(s') \big\|_\infty  \big\|\pi^{(k+1)}(s')-\pi^{(k)}(s') \big\|_1\\
    &\qquad \ge -\frac{1}{2}\left(\frac{2\eta}{2+\eta\tau} \big\|Q^{(k)}_{\tau}(s')-\hat{Q}^{{(k)}}_{\tau}(s') \big\|_\infty^2 + \frac{2+\eta\tau}{2\eta} \big\| \pi^{(k+1)}(s')-\pi^{(k)}(s') \big\|_1^2\right)\\
    &\qquad \ge -\frac{\eta}{2+\eta\tau}\epseval^2 - \frac{2+\eta\tau}{4\eta} \big\| \pi^{(k+1)}(s')-\pi^{(k)}(s') \big\|_1^2.
\end{align*}
Combining the above two inequalities with \eqref{eq:aperf_improve_pf_3}, we arrive at the advertised bound
\[
    V^{(k+1)}_{\tau}(s)-V^{(k)}_{\tau}(s) \ge -\frac{3+\alpha}{1-\gamma}\epsopt -\frac{\eta}{(2+\eta\tau)(1-\gamma)}\epseval^2.
\]
%

\subsection{Step 2: connecting the algorithm dynamic with a linear system}
\label{sec:linear-system-approx}


Now we are ready to discuss how to control $\big\Vert {Q}^{\star}_{\tau}-{Q}^{{(k)}}_{\tau} \big\Vert_{\infty}$.
In short, we intend to establish the connection among several intertwined quantities, and identify a simple linear system that captures the algorithm dynamic.

\paragraph{Bounding $\big\Vert {Q}^{\star}_{\tau}-\tau\hat{\xi}^{(k+1)} \big\Vert_{\infty}$.} From the definition of $\hat{\xi}^{(k+1)}$ in (\ref{eq:Axi_update}), we have
\begin{align}
\label{eq:asystem1}
\big\Vert {Q}^{\star}_{\tau}-\tau\hat{\xi}^{(k+1)} \big\Vert_{\infty} 
	&= \big\Vert \alpha({Q}^{\star}_{\tau}-\tau\hat{\xi}^{(k)})+(1-\alpha)({Q}^{\star}_{\tau}-{Q}^{{(k)}}_{\tau})+(1-\alpha)({Q}^{{(k)}}_{\tau}-\hat{{Q}}^{{(k)}}_{\tau}) \big\Vert_{\infty}\notag\\
	&\leq \alpha \big\Vert{Q}^{\star}_{\tau}-\tau\hat{\xi}^{(k)} \big\Vert_{\infty}+(1-\alpha)\big\Vert{Q}^{\star}_{\tau}-{Q}^{{(k)}}_{\tau} \big\Vert_{\infty}+(1-\alpha) \big\Vert{Q}^{{(k)}}_{\tau}-\hat{{Q}}^{{(k)}}_{\tau} \big\Vert_{\infty}\notag\\
	&\leq\alpha \big\Vert{Q}^{\star}_{\tau}-\tau\hat{\xi}^{(k)} \big\Vert_{\infty}+(1-\alpha)\big\Vert{Q}^{\star}_{\tau}-{Q}^{{(k)}}_{\tau} \big\Vert_{\infty}+(1-\alpha)\epseval,
\end{align}
where the last inequality is a consequence of Assumption~\ref{asp:estimation-error-inf}.

\paragraph{Bounding $-\min_{s,a}\big({Q}^{{(k+1)}}_{\tau}(s,a)-\tau\hat{\xi}^{(k+1)}(s,a)\big)$.} Applying the definition in (\ref{eq:Axi_update}) once again, we obtain
\begin{align}
-\big({Q}^{{(k+1)}}_{\tau}(s,a)-\tau\hat{\xi}^{(k+1)}(s,a)\big)
	& =- \alpha\big({Q}^{{(k)}}_{\tau}(s,a)-\tau\hat{\xi}^{(k)}(s,a)\big)+(1-\alpha)\big(\hat{{Q}}^{{(k)}}_{\tau}(s,a)-{Q}^{{(k)}}_{\tau}(s,a)\big)\notag\\
&\qquad +\big({Q}^{{(k)}}_{\tau}(s,a)-{Q}^{{(k+1)}}_{\tau}(s,a)\big)\notag\\
	&\leq-\alpha\big({Q}^{{(k)}}_{\tau}(s,a)-\tau\hat{\xi}^{(k)}(s,a)\big)+(1-\alpha+c_1)\epseval+c_2\epsopt,
	\label{med7}
\end{align}
where 
\begin{equation}
\begin{split}
c_{1}= & \begin{cases}
\frac{2\gamma}{1-\gamma},\qquad & \text{if }h_{s}\text{ is convex but not strongly convex},\\
\frac{\eta\epseval\gamma}{(2+\eta\tau)(1-\gamma)},\quad & \text{if }h_{s}\text{ is $1$-strongly convex w.r.t. the $\ell_1$ norm},
\end{cases}\\
c_{2} & =\begin{cases}
\frac{(\alpha+1)\gamma}{1-\gamma},\quad & \quad \text{if }h_{s}\text{ is convex but not strongly convex},\\
\frac{(\alpha+3)\gamma}{1-\gamma},\quad & \quad \text{if }h_{s}\text{ is $1$-strongly convex w.r.t. the $\ell_1$ norm}. 
\end{cases}
\label{eq:c_1c_2}
\end{split}
\end{equation}
Here, the last step of \eqref{med7} follows from Assumption~\ref{asp:estimation-error-inf} as well as the following relation: 
\[
	{Q}^{{(k)}}_{\tau}(s,a)-{Q}^{{(k+1)}}_{\tau}(s,a)
		=\gamma \mathop{\mathbb{E}}\limits_{s'\sim {P}(\cdot |s,a)} \left[V^{{(k)}}_{\tau}(s')-V^{{(k+1)}}_{\tau}(s')\right]\leq c_1\epseval+c_2\epsopt,
\]
where we have made use of Lemma \ref{lemma:aperf_improve}. 
%
Taking the maximum over $(s,a)$ on both sides of (\ref{med7}) yields
\begin{equation}
\label{eq:asystem2}
-\min_{s,a}\left({Q}^{{(k+1)}}_{\tau}(s,a)-\tau\hat{\xi}^{(k+1)}(s,a)\right)
	\leq-\alpha\min_{s,a} \big({Q}^{{(k)}}_{\tau}(s,a)-\tau\hat{\xi}^{(k)}(s,a) \big)+(1-\alpha+c_1)\epseval+c_2\epsopt.   
\end{equation}

\paragraph{Bounding $\big\Vert{Q}^{{\star}}_{\tau}-{Q}^{{(k+1)}}_{\tau}\big\Vert_{\infty}$.}

To begin with, let us decompose ${Q}^{{\star}}_{\tau}(s,a)-{Q}^{{(k+1)}}_{\tau}(s,a)$ into several parts. Invoking the relation \eqref{eq:Bellman_fixed_point} in Lemma~\ref{lem:property-Bellman} as well as the property \eqref{eq:relation-reg-Q-reg-V}, we reach 
%
\begin{align}
&{Q}^{{\star}}_{\tau}(s,a)-{Q}^{{(k+1)}}_{\tau}(s,a) \notag\\
&\qquad =\mathcal{T}_{\tau,h}({Q}^{\star}_{\tau})(s,a)-\left[r(s,a)+\gamma\mathop{\mathbb{E}}\limits_{s'\sim {P}(\cdot |s,a)}\Big[ \big\langle {Q}^{(k+1)}_{\tau}(s'), \pi^{(k+1)}(s') \big\rangle - \tau h_{s'}\big(\pi^{(k+1)}(s)\big) \Big]\right]\notag\\
&\qquad = \mathcal{T}_{\tau,h}({Q}^{\star}_{\tau})(s,a)-\left[r(s,a)+\gamma\mathop{\mathbb{E}}\limits_{s'\sim {P}(\cdot |s,a)}\Big[ \big\langle \tau\hat{\xi}^{(k+1)}(s'), \pi^{(k+1)}(s') \big\rangle - \tau h_{s'}\big(\pi^{(k+1)}(s)\big) \Big]\right]\notag\\
&\qquad\qquad -\gamma \mathop{\mathbb{E}}\limits_{s'\sim P(\cdot |s,a),a'\sim\pi^{(k+1)}(s')}\Big[{Q}^{(k+1)}_{\tau}(s',a')-\tau \widehat{\xi}^{(k+1)}(s',a') \Big]\notag\\
&\qquad =\left\{\mathcal{T}_{\tau,h}({Q}^{\star}_{\tau})(s,a)-\left[r(s,a)+\gamma\mathop{\mathbb{E}}\limits_{s'\sim {P}(\cdot |s,a)}\Big[ \big\langle \tau\hat{\xi}^{(k+1)}(s'), \tilde{\pi}^{(k+1)}(s') \big\rangle - \tau h_{s'}\big(\tilde{\pi}^{(k+1)}(s)\big) \Big]\right]\right\}\notag\\
&\qquad\quad -\tau\gamma\mathop{\mathbb{E}}\limits_{s'\sim {P}(\cdot |s,a)}\Big[ \big\langle\hat{\xi}^{(k+1)}(s'), {\pi}^{(k+1)}(s')-\tilde{\pi}^{(k+1)}(s') \big\rangle - h_{s'}\big({\pi}^{(k+1)}(s)\big) + h_{s'}\big(\tilde{\pi}^{(k+1)}(s) \big) \Big]\notag\\
	&\qquad\quad -\gamma\mathop{\mathbb{E}}\limits_{s'\sim {P}(\cdot |s,a),a'\sim\pi^{(k+1)}(s')}\Big[{Q}^{(k+1)}_{\tau}(s',a')-\tau \widehat{\xi}^{(k+1)}(s',a') \Big] . 
	\label{eq:decompose-12345}
\end{align}
In the sequel, we control the three terms in \eqref{eq:decompose-12345} separately.
\begin{itemize}
    \item To begin with, we repeat a similar argument as for \eqref{eq:pf_system2_step2} to show that
    \begin{align*}
	    &\mathcal{T}_{\tau,h}({Q}^{\star}_{\tau})(s,a)-\left[r(s,a)+\gamma \mathop{\mathbb{E}}\limits_{s'\sim P(\cdot |s,a)}\Big[ \big\langle \tau \widehat{\xi}^{(k+1)}(s'), \tilde{\pi}^{(k+1)}(s') \big\rangle - \tau h_{s'}\big(\tilde{\pi}^{(k+1)}(s) \big) \Big]\right]\\
        &\qquad =\mathcal{T}_{\tau,h}({Q}^{\star}_{\tau})(s,a)-\mathcal{T}_{\tau,h}(\hat{\xi}^{(k+1)})(s,a)\\
        &\qquad \le \gamma \big\Vert{Q}^{\star}_{\tau}-\tau\hat{\xi}^{(k+1)} \big\Vert_{\infty}.
    \end{align*}
    \item The second term of \eqref{eq:decompose-12345} can be bounded by applying \eqref{eq:aperf_improve_pf_2a} with $k$ replaced by $k+1$:
    \begin{align*}
        \big\langle\hat{\xi}^{(k+1)}(s'), {\pi}^{(k+1)}(s')-\tilde{\pi}^{(k+1)}(s') \big\rangle - h_{s'}\big({\pi}^{(k+1)}(s)\big) + h_{s'}\big(\tilde{\pi}^{(k+1)}(s)\big) \ge - \frac{\eta\epsopt}{1+\eta\tau}.
    \end{align*}
    \item As for the third term of \eqref{eq:decompose-12345}, taking the maximum over all $(s, a) \in \mathcal{S}\times\mathcal{A}$ gives
    \[
	    {Q}^{(k+1)}_{\tau}(s',a')-\tau \widehat{\xi}^{(k+1)}(s',a') \le -\min_{s,a}\left({Q}^{{(k+1)}}_{\tau}(s,a)-\tau\hat{\xi}^{(k+1)}(s,a)\right).
    \]
\end{itemize}

Taken together, the above bounds and the decomposition \eqref{eq:decompose-12345} lead to
\begin{equation}
\label{eq:asystem3}
 \big\Vert{Q}^{{*}}_{\tau}-{Q}^{{(k+1)}}_{\tau} \big\Vert_{\infty}
	\leq\gamma \big\Vert{Q}^{\star}_{\tau}-\tau\hat{\xi}^{(k+1)} \big\Vert_{\infty}-\gamma\min_{s,a}\left({Q}^{{(k+1)}}_{\tau}(s,a)-\tau\hat{\xi}^{(k+1)}(s,a)\right)+\gamma(1-\alpha)\epsopt.
\end{equation}

\paragraph{A linear system of interest.}
Combining (\ref{eq:asystem1}),(\ref{eq:asystem2}) and (\ref{eq:asystem3}), we reach the following linear system
\begin{equation}
\label{asystem}
z_{k+1}\leq Bz_{k}+b,
\end{equation}
where
\begin{equation*}
B \coloneqq
\begin{bmatrix}
\gamma(1-\alpha)&\gamma\alpha&\gamma\alpha\\
1-\alpha&\alpha&0\\
0&0&\alpha
\end{bmatrix}
,\qquad
z_k \coloneqq
\begin{bmatrix}
\big\Vert{Q}^{{\star}}_{\tau}-{Q}^{{(k)}}_{\tau} \big\Vert_{\infty}\\
\big\Vert {Q}^{\star}_{\tau}-\tau\hat{\xi}^{(k)} \big\Vert_{\infty}\\
-\min_{s,a}\big({Q}^{{(k)}}_{\tau}(s,a)-\tau\hat{\xi}^{(k)}(s,a)\big)
\end{bmatrix},
\end{equation*}
\begin{equation}
b \coloneqq
\begin{bmatrix}
\gamma(2-2\alpha+c_1)\epseval+\gamma(1-\alpha+c_2)\epsopt\\
(1-\alpha)\epseval\\
(1-\alpha+c_1)\epseval+c_2\epsopt
\end{bmatrix}.
	\label{eq:defn-b-approx}
\end{equation}
This linear system of three variables captures how the estimation error progresses as the iteration count increases.

\subsection{Step 3: linear system analysis}
\label{sec:linear-system-analysis-approx}

In this step, we analyze the behavior of the linear system \eqref{asystem} derived above. 
Observe that the eigenvalues and respective eigenvectors of the matrix $B$ are given by
\begin{equation}
\lambda_1=\alpha+(1-\alpha)\gamma, \qquad \lambda_2=\alpha,\qquad \lambda_3=0,
\end{equation}
\begin{equation}
v_1=
\begin{bmatrix}
\gamma\\
1\\
0
\end{bmatrix},
\qquad 
v_2=
\begin{bmatrix}
0\\
-1\\
1
\end{bmatrix},
\qquad
v_3=
\begin{bmatrix}
\alpha\\
\alpha-1\\
0
\end{bmatrix}.
\end{equation}
Armed with these, we can decompose $z_0$ in terms of the eigenvectors of $B$ as follows
\begin{align}
\label{eq:z0_decomp}
z_0&\leq
\begin{bmatrix}
\Vert{Q}^{{\star}}_{\tau}-{Q}^{{(0)}}_{\tau}\Vert_{\infty}\\
\Vert {Q}^{\star}_{\tau}-\tau\hat{\xi}^{(0)}\Vert_{\infty}\\
\Vert {Q}^{(0)}_{\tau}-\tau\hat{\xi}^{(0)}\Vert_{\infty}
\end{bmatrix}\notag\\
&=\frac{1}{\alpha+(1-\alpha)\gamma}\left[(1-\alpha) \big\Vert{Q}^{{\star}}_{\tau}-{Q}^{{(0)}}_{\tau} \big\Vert_{\infty}+\alpha \big\Vert {Q}^{\star}_{\tau}-\tau\hat{\xi}^{(0)} \big\Vert_{\infty}+\alpha \big\Vert {Q}^{(0)}_{\tau}-\tau\hat{\xi}^{(0)} \big\Vert_{\infty}\right]v_1\notag\\
&\qquad + \big\Vert {Q}^{\star}_{\tau}-\tau\hat{\xi}^{(0)} \big\Vert_{\infty}v_2+e_z v_3\notag\\
&\leq  \frac{1}{\alpha+(1-\alpha)\gamma}\left[\big\Vert{Q}^{{\star}}_{\tau}-{Q}^{{(0)}}_{\tau} \big\Vert_{\infty}+2\alpha \big\Vert {Q}^{\star}_{\tau}-\tau\hat{\xi}^{(0)} \big\Vert_{\infty}\right]v_1
+\big\Vert {Q}^{\star}_{\tau}-\tau\hat{\xi}^{(0)} \big\Vert_{\infty}v_2+e_z v_3,
\end{align}
where $e_z \in \mathbb{R}$ is some constant that does not affect our final result. Also, the vector $b$ defined in \eqref{eq:defn-b-approx} satisfies
\begin{align}
    b&\le 
    \begin{bmatrix}
        \gamma(2-2\alpha+c_1)\epseval+\gamma(1-\alpha+c_2)\epsopt\\
        (1-\alpha)\epseval + (1-\alpha)\epsopt\\
        (1-\alpha+c_1)\epseval+c_2\epsopt
    \end{bmatrix}\nonumber\\
    &= \big[(2-2\alpha + c_1)\epseval + (1-\alpha+c_2)\epsopt\big] v_1 + \big[(1-\alpha + c_1)\epseval+c_2\epsopt \big] v_2.
    \label{eq:b_decomp}
\end{align}
%

 Using the decomposition in (\ref{eq:z0_decomp}) and (\ref{eq:b_decomp}) and applying the system relation~(\ref{asystem}) recursively, we can derive
\begin{align*}
z_{k+1}&\le B^{k+1}z_0 + \sum_{t=0}^k B^{k-t}b\\
&\le B^{k+1} \left[\frac{1}{\alpha+(1-\alpha)\gamma}\left[\big\Vert{Q}^{{\star}}_{\tau}-{Q}^{{(0)}}_{\tau} \big\Vert_{\infty}+2\alpha \big\Vert {Q}^{\star}_{\tau}-\tau\hat{\xi}^{(0)} \big\Vert_{\infty}\right]v_1+ \big\Vert {Q}^{\star}_{\tau}-\tau\hat{\xi}^{(0)} \big\Vert_{\infty}v_2+e_z v_3\right]\\
&\quad + \sum_{t=0}^{k} B^{k-t}\Big[\big[(2-2\alpha + c_1)\epseval + (1-\alpha+c_2)\epsopt\big] v_1 + \big[(1-\alpha + c_1)\epseval+c_2\epsopt\big] v_2\Big]\\
&=\left[\lambda_1^k\left( \big\Vert{Q}^{\star}_{\tau}-{Q}^{(0)}_{\tau} \big\Vert_{\infty}+2\alpha \big\Vert {Q}^{\star}_{\tau}-\tau\hat{\xi}^{(0)} \big\Vert_{\infty}\right)+\frac{1-\lambda_1^{k+1}}{1-\lambda_1}\Big[(2-2\alpha + c_1)\epseval + (1-\alpha+c_2)\epsopt\Big]\right]v_1\\
&\quad+\Big[\lambda_2^{k+1} \big\Vert {Q}^{\star}_{\tau}-\tau\hat{\xi}^{(0)} \big\Vert_{\infty}+\frac{1-\lambda_2^{k+1}}{1-\lambda_2}[(1-\alpha + c_1)\epseval+c_2\epsopt]\Big]v_2.
\end{align*}
%
Recognizing that the first two entries of $v_2$ are non-positive, we can discard the term involving $v_2$ and obtain
\begin{align*}
&\begin{bmatrix}
    \Vert{Q}^{\star}_{\tau}-{Q}^{{(k+1)}}_{\tau}\Vert_{\infty}\\
    \|Q_\tau^\star - \tau \hat{\xi}^{(k+1)}\|_\infty
\end{bmatrix}\\
&\le \left[\lambda_1^k\left( \big\Vert{Q}^{\star}_{\tau}-{Q}^{(0)}_{\tau} \big\Vert_{\infty}+2\alpha \big\Vert {Q}^{\star}_{\tau}-\tau\hat{\xi}^{(0)} \big\Vert_{\infty}\right)+\frac{1-\lambda_1^{k}}{1-\lambda_1}\big[(2-2\alpha + c_1)\epseval + (1-\alpha+c_2)\epsopt\big]\right]
\begin{bmatrix}
    \gamma\\
    1
\end{bmatrix}\\
&\le \bigg[\lambda_1^k\left(\big\Vert{Q}^{\star}_{\tau}-{Q}^{(0)}_{\tau}\big\Vert_{\infty}+2\alpha\big\Vert {Q}^{\star}_{\tau}-\tau\hat{\xi}^{(0)}\big\Vert_{\infty}\right)+\underbrace{\frac{1}{1-\lambda_1}\big[(2-2\alpha + c_1)\epseval + (1-\alpha+c_2)\epsopt\big]}_{\eqqcolon \, C}\bigg]
\begin{bmatrix}
    \gamma\\
    1
\end{bmatrix}.
\end{align*}
Making use of the fact that $1 - \lambda_1 = (1-\alpha)(1-\gamma)$, we can conclude
\[
    C = \frac{1}{1-\gamma}\left[\left(2 + \frac{c_1}{1-\alpha}\right)\epseval + \left(1 + \frac{c_2}{1-\alpha}\right)\epsopt\right].
\]
The above bound essentially says that
\begin{align*}
& \big\Vert{Q}^{\star}_{\tau}-{Q}^{{(k+1)}}_{\tau} \big\Vert_{\infty}\le\gamma\left[\lambda_1^k\left( \big\Vert{Q}^{\star}_{\tau}-{Q}^{(0)}_{\tau} \big\Vert_{\infty}+2\alpha \big\Vert {Q}^{\star}_{\tau}-\tau\hat{\xi}^{(0)} \big\Vert_{\infty}\right)+C\right]
\end{align*}
%
and
\begin{align*}
	&\big\|Q_\tau^\star - \tau \hat{\xi}^{(k+1)} \big\|_{\infty}
	\le\lambda_1^k\left( \big\Vert{Q}^{\star}_{\tau}-{Q}^{(0)}_{\tau} \big\Vert_{\infty}+2\alpha \big\Vert {Q}^{\star}_{\tau}-\tau\hat{\xi}^{(0)} \big\Vert_{\infty}\right)+C.
\end{align*}

Turning to $ V_\tau^\star (s) - V_\tau^{(k+1)}(s)$, by a similar argument as \eqref{eq:V_bound_exact}, we have
\begin{equation*}
\begin{aligned}
	& V_\tau^\star (s) - V_\tau^{(k+1)}(s) \notag\\
	&= \big\langle Q_\tau^\star(s) - Q_\tau^{(k+1)}(s), \pi^{(k+1)}(s) \big\rangle + \Big[\tau(h_s(\pi^{(k+1)}(s)) - h_s(\pi_\tau^\star(s))) - \big\langle Q_\tau^\star(s), \pi^{(k+1)}(s) - \pi_\tau^\star(s) \big\rangle\Big]\\
	&= \big\langle Q_\tau^\star(s) - Q_\tau^{(k+1)}(s), \pi^{(k+1)}(s) \big\rangle + \tau D_{h_s}(\pi^{(k+1)}, \pi_\tau^\star; g_\tau^\star)\\
	&\le \Big\|Q_\tau^\star - Q_\tau^{(k+1)}\Big\|_\infty  + \tau D_{h_s}(\pi^{(k+1)}, \tilde\pi^{(k+1)}; \hat{\xi}^{(k+1)})\\
	&\qquad + \tau D_{h_s}(\tilde\pi^{(k+1)}, \pi_\tau^\star; g_\tau^\star) + \tau\big\langle \pi^{(k+1)}(s) - \tilde\pi^{(k+1)}(s) , \hat{\xi}^{(k+1)}(s) - g_\tau^\star(s)\big\rangle,
\end{aligned}
\end{equation*}
where the third step results from the standard three-point lemma. To control the second term, we rearrange terms in \eqref{eq:opt_oracle} and reach at
\begin{align*}
	\epsopt &\ge-\big\langle \hat{Q}_\tau^{(k)}(s),\pi^{(k+1)}(s) \big\rangle  +  \tau h_{s}\big(\pi^{(k+1)}(s)\big)+\frac{1}{\eta}D_{h_{s}}\big( \pi^{(k+1)}, \pi^{(k)}; \hat{\xi}^{(k)} \big) \notag \\
    & \qquad + \big\langle \hat{Q}_\tau^{(k)}(s),\tilde{\pi}^{(k+1)}(s) \big\rangle
    - \tau h_{s}\big(\tilde{\pi}^{(k+1)}(s)\big)-\frac{1}{\eta}D _{h_{s}}\big( \tilde{\pi}^{(k+1)}, \pi^{(k)}; \hat{\xi}^{(k)} \big)\\
    &= \big\langle \hat{Q}_\tau^{(k)}(s),\tilde{\pi}^{(k+1)}(s) - \pi^{(k+1)}(s)\big\rangle + \frac{1+\eta\tau}{\eta}\big(h_s(\pi^{(k+1)}(s)) - h_s(\tilde\pi^{(k+1)}(s))\big)\\
    &\qquad + \frac{1}{\eta}\big\langle \hat{\xi}^{(k)}(s),\tilde{\pi}^{(k+1)}(s) - \pi^{(k+1)}(s)\big\rangle\\
    &= \frac{1+\eta\tau}{\eta}D_{h_s}(\pi^{(k+1)}, \tilde\pi^{(k+1)}; \hat{\xi}^{(k+1)}).
\end{align*}
For the remaining terms, recall that $\hat{\xi}^{(k+1)} - c_s^{(k+1)} 1 \in \partial h_s(\tilde{\pi}^{(k+1)}(s))$ with some constant $c_s^{(k+1)}$. So we have
\begin{align*}
	&\tau D_{h_s}(\tilde\pi^{(k+1)}, \pi_\tau^\star; g_\tau^\star) + \tau\big\langle \pi^{(k+1)}(s) - \tilde\pi^{(k+1)}(s) , \hat{\xi}^{(k+1)}(s) - g_\tau^\star(s)\big\rangle\\
	& = \tau h_s(\tilde\pi^{(k+1)}(s)) - \tau h_s(\pi_\tau^\star(s))- \big\langle \tilde\pi^{(k+1)}(s) - \pi_\tau^\star(s), Q_\tau^\star(s)\big\rangle + \tau\big\langle \pi^{(k+1)}(s) - \tilde\pi^{(k+1)}(s) , \hat{\xi}^{(k+1)}(s) - g_\tau^\star(s)\big\rangle\\
	&\le \big\langle  \pi_\tau^\star(s) - \tilde\pi^{(k+1)}(s), Q_\tau^\star(s) - \tau \hat\xi^{(k+1)} \big\rangle - \big\langle \pi^{(k+1)}(s) - \tilde\pi^{(k+1)}(s) , Q_\tau^\star(s) -  \tau \hat{\xi}^{(k+1)}(s)\big\rangle\\
	&= \big\langle  \pi_\tau^\star(s) - \pi^{(k+1)}(s), Q_\tau^\star(s) - \tau \hat\xi^{(k+1)} \big\rangle\\
	&\le 2 \big\| Q_\tau^\star (s) - \tau\hat\xi^{(k+1)} (s) \big\|_\infty.
\end{align*}
Taken together, we conclude that
\begin{align*}
V_\tau^\star (s) - V_\tau^{(k+1)}(s) &\le \Big\|Q_\tau^\star - Q_\tau^{(k+1)}\Big\|_\infty + 2 \big\| Q_\tau^\star (s) - \tau\hat\xi^{(k+1)} (s) \big\|_\infty +\frac{\eta\tau }{1+\eta\tau}\epsopt\\
&\le (\gamma + 2)\left[\lambda_1^k\left( \big\Vert{Q}^{\star}_{\tau}-{Q}^{(0)}_{\tau} \big\Vert_{\infty}+2\alpha \big\Vert {Q}^{\star}_{\tau}-\tau\hat{\xi}^{(0)} \big\Vert_{\infty}\right)+C\right] +\frac{\eta\tau }{1+\eta\tau}\epsopt.
\end{align*}

Finally, plugging in the choices of $c_1$ and $c_2$ (cf. \eqref{eq:c_1c_2}), we have $C\le C_2$ when $\{h_s\}$ is convex, and $C \le C_3$ when $\{h_s\}$ is $1$-strongly convex w.r.t.~the $\ell_1$ norm. In addition, for the latter case, we can follow a similar argument as for \eqref{eq:pi_conv_l1} to demonstrate that
\begin{align*}
    \big\|\pi_\tau^\star(s) - \tilde{\pi}_\tau^{(k)}(s) \big\|_1 &\le \tau^{-1}\big((1-\alpha)\gamma+\alpha \big)^k\left( \big\Vert{Q}^{\star}_{\tau}-{Q}^{(0)}_{\tau} \big\Vert_{\infty}
	+ 2\alpha \big\Vert{Q}^{\star}_{\tau}-\tau\xi^{(0)} \big\Vert_{\infty}\right)
    + \tau^{-1}C_3,
\end{align*}
which taken together with \eqref{eq:pi_err_l1} gives
\begin{align*}
    \big\|\pi_\tau^\star(s) - \pi_\tau^{(k)}(s) \big\|_1 &\le \big\|\pi_\tau^\star(s) - \tilde{\pi}_\tau^{(k)}(s) \big\|_1 + \big\|\pi_\tau^{(k)}(s) - \tilde{\pi}_\tau^{(k)}(s) \big\|_1\\
    &\le \tau^{-1}\big((1-\alpha)\gamma+\alpha \big)^k\left(\big\Vert{Q}^{\star}_{\tau}-{Q}^{(0)}_{\tau} \big\Vert_{\infty}+2\alpha \big\Vert{Q}^{\star}_{\tau}-\tau\xi^{(0)} \big\Vert_{\infty}\right)\\
    &\quad + \tau^{-1}C_3 + \sqrt{\frac{2\eta\epsopt}{1+\eta\tau}}.
\end{align*}
This concludes the proof.

\section{Adaptive GPMD}
\label{sec:adaptive}
In this section, we present adaptive GPMD, an adaptive variant of GPMD that computes optimal policies of the original MDP without the need of specifying the regularization parameter $\tau$ in advance. 
 In a nutshell, the proposed adaptive GPMD algorithm is a stage-based algorithm. In the $i$-th stage, we execute GPMD (i.e., Algorithm~\ref{alg:GPMD}) with regularization parameter $\tau_i$ for $T_i+1$ iterations. 
 In what follows, we shall denote by $\pi_i^{(t)}$ and $\xi_i^{(t)}$ the $t$-th iterates in the $i$-th stage. At the end of each stage, Adaptive GPMD will halve the regularization parameter $\tau$, 
 and in the meantime,  double $\xi_i^{(T_i+1)}$ (i.e., the auxiliary vector corresponding to some subgradient up to global shift) and use it to as the initial vector $\xi_{i+1}^{(0)}$ for the next stage. 
 To ensure that $\xi_{i+1}^{(0)}(s)$ still lies within the set of subgradients $\partial h_s\big( \pi_{i+1}^{(0)}(s) \big)$ up to global shift, 
 we solve the sub-optimization problem \eqref{eq:ad ini} to obtain $\pi_{i+1}^{(0)}$ as the initial policy iterate for the next stage. The whole procedure is summarized in Algorithm~\ref{alg:AdGPMD}.

\begin{algorithm}[th]
\caption{Adaptive GPMD}
\label{alg:AdGPMD}
\textbf{Input:} learning rate $\eta>0$.\\
\textbf{Initialize} $\tau_0=1,\xi_0^{(0)}(s,a)=0$ for all $s\in\cS, a\in\A$. Choose $\pi_0^{(0)}$ to be the minimizer of the following problems:
\[
\pi_0^{(0)}(s)=\arg\min_{p\in\Delta(\A)}h_s(p),\qquad \forall s\in\cS.
\]

\For{stage $i=0,1,\cdots,$}{
    Call Algorithm~\ref{alg:GPMD} with regularization parameter $\tau_i$, learning rate $\eta$, and initialization $\pi_{i}^{(0)}$ and $\xi_{i}^{(0)}$. Obtain $\pi_{i}^{(T_i+1)}$ and $\xi_{i}^{(T_i+1)}$ with $T_i=\big\lceil\frac{1+\eta\tau_i}{(1-\gamma)\eta\tau_i}\log\frac{8}{1-\gamma}\big\rceil$, where $\lceil\cdot\rceil$ is the ceiling function.
    
    Set $\tau_{i+1}=\tau_i/2$, $\xi_{i+1}^{(0)}=2\xi_{i}^{(T_i+1)}$, and choose $\pi_{i+1}^{(0)}$ to be the minimizer of the following problems:
    \begin{align}
    \label{eq:ad ini}
    	\pi_{i+1}^{(0)}(s)=\arg\min_{p\in\Delta(\A)}-\langle\xi_{i+1}^{(0)}(s),p\rangle+h_s(p), \qquad \forall s\in\cS.
    \end{align}	
}
\end{algorithm}
To help characterize the discrepancy of the value functions due to regularization, we assume boundedness of the regularizers $\{h_s\}$ as follows.
\begin{assumption}
\label{ass:bounded}
Suppose that there exists some quantity $B>1$ such that
 $|h_s(p)|\leq B$ holds for all $p\in\Delta(\A)$ and all $s\in\cS$.
\end{assumption}

The following theorem demonstrates that Algorithm~\ref{alg:AdGPMD} is capable of finding an $\eps$-optimal policy for the unregularized MDP within an order of $\log \frac{1}{\eps}$ stages. 
To simplify notation, we abbreviate $Q_{\tau_i}^\star$, $Q_{\tau_i}^{\pi_i^{(T)}}$ and $V_{\tau_i}^{\pi_i^{(T)}}$ as $Q_{i}^\star$, $Q^{(T)}_{i}$ and $V^{(T)}_{i}$, respectively, as long as it is clear from the context. 
\begin{theorem}
\label{thm:adapt}
Suppose that Assumptions~\ref{assumption:h-inf} and \ref{ass:bounded} hold. For any learning rate $\eta>0$ and any stage $i \ge 0$, the iterates of Algorithm~\ref{alg:AdGPMD} satisfy
\[
\big\Vert Q^\star-Q^{\pi_{i}^{(T_i+1)}}\big\Vert_{\infty}\leq\frac{3\tau_i B}{1-\gamma} = \frac{3 B}{(1-\gamma)2^i}.
\]
\end{theorem}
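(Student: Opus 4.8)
The plan is to prove the bound by induction on the stage index $i$, where the key quantities to track across stages are the regularized $Q$-value error $\|Q_i^\star - Q_i^{(T_i+1)}\|_\infty$ and the auxiliary vector $\xi_i^{(T_i+1)}$ (or more precisely, the quantity $\|Q_i^\star - \tau_i \xi_i^{(T_i+1)}\|_\infty$ that appears inside the constant $C_1$ of Theorem~\ref{thm:linear_exact}). First I would establish a simple ``closeness'' bound between the regularized and unregularized optimal $Q$-functions: since $r(s,a)\in[0,1]$ and $|h_s(p)|\le B$ by Assumption~\ref{ass:bounded}, the discounted accumulation of the regularization penalty $\tau_i h_{s_t}(\pi(\cdot\mymid s_t))$ contributes at most $\tfrac{\tau_i B}{1-\gamma}$ in absolute value to any value or $Q$-function, so that $\|Q^\star - Q_{\tau_i}^\star\|_\infty \le \tfrac{\tau_i B}{1-\gamma}$ and, for any policy $\pi$, $\|Q^\pi - Q_{\tau_i}^\pi\|_\infty \le \tfrac{\tau_i B}{1-\gamma}$. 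Combining these via the triangle inequality with $Q^\star \ge Q^{\pi_i^{(T_i+1)}}$ and $Q_{\tau_i}^\star \ge Q_{\tau_i}^{\pi_i^{(T_i+1)}}$ reduces the theorem to showing that the regularized error $\|Q_i^\star - Q_i^{(T_i+1)}\|_\infty$ is itself at most of order $\tfrac{\tau_i B}{1-\gamma}$, with a small enough constant.

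Next I would control $\|Q_i^\star - Q_i^{(T_i+1)}\|_\infty$ by invoking Theorem~\ref{thm:linear_exact} \emph{within} stage $i$: after $T_i$ iterations of GPMD with parameter $\tau_i$ (so $\alpha_i = \tfrac{1}{1+\eta\tau_i}$), one has $\|Q_i^\star - Q_i^{(T_i+1)}\|_\infty \le \gamma(1-(1-\alpha_i)(1-\gamma))^{T_i} C_1^{(i)}$ where $C_1^{(i)} = \|Q_i^\star - Q_i^{(0)}\|_\infty + 2\alpha_i \|Q_i^\star - \tau_i \xi_i^{(0)}\|_\infty$. By the choice $T_i = \lceil \tfrac{1+\eta\tau_i}{(1-\gamma)\eta\tau_i}\log\tfrac{8}{1-\gamma}\rceil = \lceil \tfrac{1}{(1-\alpha_i)(1-\gamma)}\log\tfrac{8}{1-\gamma}\rceil$, the contraction factor raised to $T_i$ is at most $\tfrac{1-\gamma}{8}$ (using $(1-x)^{1/x}\le e^{-1}$), so the error is at most $\tfrac{\gamma(1-\gamma)}{8}C_1^{(i)} \le \tfrac{1-\gamma}{8}C_1^{(i)}$. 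It therefore suffices to show, by induction, that $C_1^{(i)} \le \tfrac{\text{const}\cdot\tau_i B}{1-\gamma}$; I will aim to show $\|Q_i^\star - \tau_i \xi_i^{(0)}\|_\infty$ and $\|Q_i^\star - Q_i^{(0)}\|_\infty$ are both $O(\tfrac{\tau_i B}{1-\gamma})$.

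The heart of the argument — and the main obstacle — is the induction step tracking how the initialization quality carries over from stage $i$ to stage $i+1$ under the doubling/halving rule $\tau_{i+1}=\tau_i/2$, $\xi_{i+1}^{(0)} = 2\xi_i^{(T_i+1)}$. The point is that $\tau_{i+1}\xi_{i+1}^{(0)} = \tau_i \xi_i^{(T_i+1)}$ exactly, so $\|Q_{i+1}^\star - \tau_{i+1}\xi_{i+1}^{(0)}\|_\infty \le \|Q_{i+1}^\star - Q_i^\star\|_\infty + \|Q_i^\star - \tau_i\xi_i^{(T_i+1)}\|_\infty$; the first term is $O(\tfrac{(\tau_i-\tau_{i+1})B}{1-\gamma}) = O(\tfrac{\tau_{i+1}B}{1-\gamma})$ by the closeness bound applied to both regularization levels, and the second term is controlled by the second component of the linear-system estimate \eqref{eq:Qstar-UB-123-exact} in the proof of Theorem~\ref{thm:linear_exact}, namely $\|Q_i^\star - \tau_i\xi_i^{(k+1)}\|_\infty \le (1-(1-\alpha_i)(1-\gamma))^{k}C_1^{(i)}$, which at $k=T_i$ is again $\lesssim \tfrac{1-\gamma}{8}C_1^{(i)}$ — small by the inductive hypothesis on $C_1^{(i)}$. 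Similarly, $\|Q_{i+1}^\star - Q_{i+1}^{(0)}\|_\infty$ is handled by relating $Q_{i+1}^{(0)} = Q_{\tau_{i+1}}^{\pi_{i+1}^{(0)}}$ to $Q_i^{(T_i+1)}$ via closeness in $\tau$ plus the fact that $\pi_{i+1}^{(0)}$ (the minimizer in \eqref{eq:ad ini}) is close to $\pi_i^{(T_i+1)}$ — this last step requires a little care since without strong convexity one cannot directly bound policy distances, so instead I would argue at the level of $Q$-values using the performance-difference lemma (Lemma~\ref{lemma:perf_diff}) together with the fact that $2\xi_i^{(T_i+1)} - c 1 \in \partial h_s(\pi_{i+1}^{(0)}(s))$ for a suitable shift, mirroring the Bellman-fixed-point computation in the proof of Lemma~\ref{lemma:system2}. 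Bookkeeping all the geometric-series constants so that the final bound comes out as the clean $\tfrac{3\tau_i B}{1-\gamma}$ stated is the delicate part; I would choose $T_i$'s $\log\tfrac{8}{1-\gamma}$ precisely so the per-stage shrinkage (factor $\le \tfrac{1-\gamma}{8}$) dominates the per-stage growth from changing $\tau$, closing the induction.
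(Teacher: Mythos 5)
Your proposal is correct and follows essentially the same route as the paper's proof: the triangle-inequality reduction via the $\tau$-closeness bound $\|Q^\star_\tau - Q^\star_{\tau'}\|_\infty \le \frac{|\tau-\tau'|B}{1-\gamma}$, the per-stage application of the linear-convergence estimates \eqref{eq:Qstar-UB-123-exact}, the induction driven by the exact identity $\tau_{i+1}\xi_{i+1}^{(0)} = \tau_i\xi_i^{(T_i+1)}$, and the performance-difference-plus-subgradient argument exploiting that $\xi_{i+1}^{(0)}(s,\cdot)$ is (up to a global shift) a subgradient of $h_s$ at $\pi_{i+1}^{(0)}(s)$. The only organizational difference is that the paper avoids tracking $\|Q_i^\star - Q_i^{(0)}\|_\infty$ as a separate inductive quantity by bounding it within each stage as $\frac{2\gamma}{1-\gamma}\|Q_i^\star - \tau_i\xi_i^{(0)}\|_\infty$ (via Lemma~\ref{lemma:perf_diff} and the subgradient property of $\xi_i^{(0)}$), so the induction runs on the single quantity $\|Q_i^\star - \tau_i\xi_i^{(0)}\|_\infty$ — a simplification you may want to adopt to avoid the detour of relating $Q_{i+1}^{(0)}$ to $Q_i^{(T_i+1)}$.
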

As a direct implication of Theorem~\ref{thm:adapt},  it suffices to run Algorithm \ref{alg:AdGPMD} with $S = \BO(\log\frac{B}{(1-\gamma)\varepsilon})$ stages, resulting in a total iteration complexity of at most
\begin{align}
 \sum_{i=0}^ST_i=\BO\left(\left(\frac{1}{1-\gamma}\log\frac{B}{(1-\gamma)\varepsilon}+\frac{B}{(1-\gamma)^2\eta\varepsilon}\right)\log\frac{1}{1-\gamma}\right). 
	\label{eq:iteration-complexity-ad-GPMD}
\end{align}
In comparison, we recall from Theorem \ref{thm:linear_exact} that: directly running GPMD with regularization parameter $\tau = {(1-\gamma)\varepsilon}/{B}$ leads to an iteration complexity of
\begin{align}
\BO\Big(\Big(\frac{1}{1-\gamma} + \frac{B}{(1-\gamma)^2\eta\varepsilon}\Big)\log\frac{B}{(1-\gamma)\varepsilon}\Big). 
	\label{eq:iteration-complexity-non-ad-GPMD}
\end{align}
When focusing on the term $\widetilde{\mathcal{O}}(\frac{B}{(1-\gamma)^2\eta\varepsilon})$, \eqref{eq:iteration-complexity-ad-GPMD} improves upon \eqref{eq:iteration-complexity-non-ad-GPMD} by a factor of $\frac{\log\frac{B}{(1-\gamma)\varepsilon}}{\log\frac{1}{1-\gamma}}$.

\begin{proof}[Proof of Theorem~\ref{thm:adapt}]
To begin with, we make note of the fact that, for any $\tau,\tau'>0$,
\begin{equation}
\big\Vert Q^\star_{\tau}-Q^\star_{\tau'}\big\Vert_{\infty}=\Big\Vert\max_{\pi}Q^{\pi}_{\tau}-\max_{\pi}Q^{\pi}_{\tau'}\Big\Vert_{\infty}\leq\max_{\pi}\big\Vert Q^{\pi}_{\tau}-Q^{\pi}_{\tau'}\big\Vert_{\infty}\leq\frac{|\tau-\tau'|B}{1-\gamma}.
\label{eq:tau_diff}
\end{equation}
It then follows that
\begin{equation}
\begin{aligned}
\label{eq:adapt_bound_1}
\big\Vert Q^\star-Q^{\pi_{i}^{(T_i+1)}}\big\Vert_{\infty} & = \big\Vert Q^\star-Q_i^{\star}\big\Vert_{\infty}  + \big\Vert Q^{\pi_{i}^{(T_i+1)}} - Q_i^{(T_i+1)}\big\Vert_\infty + \big\Vert Q^\star_{i}-Q_{i}^{(T_i+1)} \big\Vert_\infty\\
& \le \frac{2\tau_i B}{1-\gamma}+\big\Vert Q^\star_{i}-Q_{i}^{(T_i+1)}\big\Vert_{\infty}.
\end{aligned}
\end{equation}

Next, we demonstrate how to control $\Vert Q^\star_{i}-Q_{i}^{(T_i+1)} \Vert_{\infty}$. 
The definition of $\pi_{i}^{(0)}$ implies the existence of some constant $c_s^{(i,0)}$ such that
\begin{align}
\xi_{i}^{(0)}(s,\cdot) - c_s^{(i,0)} 1  \in  \partial h_s\big( \pi_{i}^{(0)}(s) \big).
\end{align}
By invoking the convergence results of GPMD (cf. \eqref{eq:Qstar-UB-123-exact}), we  obtain: for all $i\geq0$, 
\begin{subequations}
	\label{eq:adapt_stage_conv}
	\begin{align}
		\big\Vert Q^\star_i-Q_{i}^{(T_i+1)} \big\Vert_{\infty}
		& \leq\gamma \big((1-\alpha_i)\gamma+\alpha_i \big)^{T_i}\left( \big\Vert Q^\star_i-{Q}_{i}^{(0)} \big\Vert_{\infty}+2\alpha_i\big\Vert Q^\star_i-\tau_i\xi_{i}^{(0)}\big\Vert_{\infty}\right), \\
		\big\|Q^\star_i - \tau_i \xi_{i}^{(T_i+1)} \big\|_\infty 
		&\le \big((1-\alpha_i)\gamma+\alpha_i \big)^{T_i}\left( \big\Vert Q^\star_i-{Q}_{i}^{(0)} \big\Vert_{\infty}+2\alpha_i\big\Vert Q^\star_i-\tau_i\xi_{i}^{(0)}\big\Vert_{\infty}\right),
	\end{align}
\end{subequations}
where $\alpha_i=\frac{1}{1+\eta\tau_i}$. 
To proceed, we follow similar arguments in \eqref{eq:V_bound_exact} and show that
\begin{align*}
	V^{\star}_{i}(s)-V^{(0)}_{i}(s) &=\frac{1}{1-\gamma} \mathop{\E}\limits_{s\sim d^{\pi}_{s'}}\Big[\left\langle\pi^{\star}_{\tau_i}(s)-\pi_i^{(0)}(s), Q^\star_{i}(s) \right\rangle + \tau_i\big(h_{s}(\pi_i^{(0)})-h_{s}(\pi^{\star}_{\tau_i}) \big)\Big]\\
&\le \frac{1}{1-\gamma}\mathop{\E}\limits_{s\sim d^{\pi}_{s'}}\Big[\left\langle\pi^{\star}_{\tau_i}(s)-\pi_{i}^{(0)}(s), Q^\star_{i}(s) \right\rangle - \tau\Big\langle\pi^{\star}_{\tau_i}(s) - \pi_{i}^{(0)}(s), \xi_{i}^{(0)}\Big\rangle\Big]\\
&\le \frac{2}{1-\gamma} \big\| Q^\star_i - \tau_i\xi_{i}^{(0)} \big\|_\infty,
\end{align*}
where the first step invokes the regularized performance difference lemma (Lemma \ref{lemma:perf_diff}).
It then follows that
\begin{align}
&\big\Vert Q^\star_i-{Q}_{i}^{(0)} \big\Vert_{\infty}\leq\gamma\big\Vert V^{\star}_{i}-V^{(0)}_{i} \big\Vert_{\infty} \leq\frac{2\gamma}{1-\gamma}\big\| Q^\star_i - \tau_i\xi_{i}^{(0)} \big\|_\infty.
\label{eq:adapt 2}
\end{align}
Substitution of \eqref{eq:adapt 2} into \eqref{eq:adapt_stage_conv} gives
\begin{subequations}
	\begin{align}
		\big\Vert Q^\star_i-Q_{i}^{(T_i+1)} \big\Vert_{\infty}
		& \leq \frac{2\gamma}{1-\gamma} \big((1-\alpha_i)\gamma+\alpha_i \big)^{T_i}\big\Vert Q^\star_i-\tau_i\xi_{i}^{(0)}\big\Vert_{\infty}, \\
		\big\|Q^\star_i - \tau_i \xi_{i}^{(T_i+1)} \big\|_\infty 
		&\le \frac{2}{1-\gamma} \big((1-\alpha_i)\gamma+\alpha_i \big)^{T_i}\big\Vert Q^\star_i-\tau_i\xi_{i}^{(0)}\big\Vert_{\infty}.
	\end{align}
	\label{eq:adapt_stage_conv_simplified}
\end{subequations}

Next, we aim to prove by induction that
$\big\Vert Q^\star_i-\tau_i\xi_{i}^{(0)}\big\Vert_{\infty}\leq\frac{2\tau_i B}{1-\gamma}$. Clearly, this claim holds trivially for the base case with $i=0$.  Next, supposing that the claim holds for some $i\ge 0$, we would like to prove it for $i+1$ as well. 
Towards this end, observe that
\begin{align*}
\big\|Q^\star_{i+1} - \tau_{i+1} \xi_{i+1}^{(0)} \big\|_\infty &= \big\|Q^\star_{i+1} - Q^\star_{i} \big\|_\infty + \big\|Q^\star_{i} - \tau_{i+1} \xi_{i+1}^{(0)} \big\|_\infty \\
&\leq\frac{\tau_{i+1}B}{1-\gamma}+\big\|Q^\star_{i} - \tau_i \xi_{i}^{(T_i+1)} \big\|_\infty\\
&\leq\frac{\tau_{i+1}B}{1-\gamma}+\frac{2}{1-\gamma}\big((1-\alpha_i)\gamma+\alpha_i \big)^{T_i}\big\Vert Q^\star_i-\tau_i\xi_{i}^{(0)}\big\Vert_{\infty}\\
&\le \frac{\tau_{i+1}B}{1-\gamma} \left(1+\frac{8}{1-\gamma} \left(1 - \frac{(1-\gamma)\eta\tau_i}{1+\eta\tau_i} \right)^{T_i} \right).
\end{align*}
When $T_i\ge\lceil\frac{1+\eta\tau_i}{\eta\tau_i(1-\gamma)}\log\frac{8}{1-\gamma}\rceil$, we arrive at $$\big\|Q^\star_{i+1} - \tau_{i+1} \xi_{i+1}^{(0)}\big\|_\infty \le \frac{2\tau_{i+1}B}{1-\gamma},$$ 
which verifies the claim for $i+1$. Substitution back into \eqref{eq:adapt_stage_conv_simplified} leads to
\begin{equation}
\big\Vert Q^\star_i-Q_{i}^{(T_i+1)} \big\Vert_{\infty}\le \frac{2\gamma}{1-\gamma} \left(1 - \frac{(1-\gamma)\eta\tau_i}{1+\eta\tau_i} \right)^{T_i}\frac{2\tau_i B}{1-\gamma}\le \frac{\tau_i B}{1-\gamma}.
\label{eq:adapt_bound_2}
\end{equation}
Combining \eqref{eq:adapt_bound_2} with \eqref{eq:adapt_bound_1} concludes the proof.
\end{proof}


%
%
\ifdefined\isarxivversion
\bibliographystyle{apalike}
\else
\fi
\bibliography{ref.bib}
\end{document}